\numberwithin{equation}{section}
\crefname{equation}{}{}
\newtheorem{theorem}{Theorem}
\newtheorem{lemma}{Lemma}
\newtheorem{proposition}{Proposition}
\newtheorem{assumption}{Assumption}
\theoremstyle{definition}
\newtheorem{defn}{Definition}
\newtheorem{example}{Example}
\theoremstyle{remark}
\newtheorem{remark}{Remark}
\newcommand\specialref{}
\def\supp{\mathrm{supp}}
\def\sign{\mathrm{sign}}
\begin{document}

\begin{frontmatter}

%\title{Split LBI: An Iterative Regularization Path with Structural Sparsity}
%\title{A Dynamic Approach for Structured Regularization Problems and Its Model Selection Consistency}
%\title{A Dynamic Regularization Path for Structural Sparse Recovery}
\title{Boosting with Structural Sparsity: \\ A Differential Inclusion Approach}

\author[pku]{Chendi Huang}
\ead{cdhuang@pku.edu.cn}

\author[pku]{Xinwei Sun}
\ead{sxwxiaoxiaohehe@pku.edu.cn}

\author[pku]{Jiechao Xiong}
\ead{xiongjiechao@pku.edu.cn}

\author[pku,ust]{Yuan Yao\corref{cor}}
\cortext[cor]{Corresponding author}
\ead{yuany@ust.hk}

\address[pku]{School of Mathematical Science, Peking University, Beijing, 100871, China}
\address[ust]{Department of Mathematics and Division of Biomedical Engineering, Hong Kong University of Science and Technology, Clear Water Bay, Kowloon, Hong Kong SAR, China}

\begin{abstract}
    Boosting as gradient descent algorithms is one popular method in machine learning. In this paper a novel Boosting-type algorithm is proposed based on restricted gradient descent with structural sparsity control whose underlying dynamics are governed by differential inclusions. In particular, we present an iterative regularization path with structural sparsity where the parameter is sparse under some linear transforms, based on variable splitting and the Linearized Bregman Iteration. Hence it is called \emph{Split LBI}. Despite its simplicity, Split LBI outperforms the popular generalized Lasso in both theory and experiments. A theory of path consistency is presented that equipped with a proper early stopping, Split LBI may achieve model selection consistency under a family of Irrepresentable Conditions which can be weaker than the necessary and sufficient condition for generalized Lasso. Furthermore, some $\ell_2$ error bounds are also given at the minimax optimal rates. The utility and benefit of the algorithm are illustrated by several applications including image denoising, partial order ranking of sport teams, and world university grouping with crowdsourced ranking data.
\end{abstract}

\begin{keyword}
    Boosting  \sep differential inclusions  \sep structural sparsity \sep linearized Bregman iteration \sep variable splitting \sep generalized Lasso \sep model selection \sep consistency
    %\textcolor{red}{\MSC[2010] 00-01\sep 99-00}
\end{keyword}

\end{frontmatter}

\linenumbers

\section{Introduction}

In this paper, consider the recovery from linear noisy measurements of $\beta^\star\in \mathbb{R}^p$, which satisfies the following structural sparsity that the linear transformation $\gamma^\star := D \beta^\star$ for some $D\in \mathbb{R}^{m\times p}$ has most of its elements being zeros. For a design matrix $X\in \mathbb{R}^{n\times p}$, let 
\begin{equation}
    \label{eq:model}
    y = X\beta^\star + \epsilon,\ \gamma^\star = D\beta^\star \ \left( S = \mathrm{supp}\left( \gamma^\star \right),\ s = |S| \right),
\end{equation}
where $\epsilon\in \mathbb{R}^n $ has independent identically distributed components, each of which has a sub-Gaussian distribution with parameter $\sigma^2$ ($\mathbb{E} [\exp(t \epsilon_i)] \le \exp(\sigma^2 t^2/2)$). In literature the linear transform $D$ has various examples including the Fourier transform, the wavelet transform, or graph gradient operators etc. Here $\gamma^{\star}$ is \emph{sparse}, i.e. $s \ll m$. Given $(y, X, D)$, the purpose is to estimate $\beta^{\star}$ as well as $\gamma^{\star}$, and in particular, recovers the support of $\gamma^{\star}$.

There is a large literature on this problem. Perhaps the most popular approach is the following $\ell_1$-penalized convex optimization problem,
\begin{equation}
    \label{eq:genlasso}
    \arg\min_{\beta} \left( \frac{1}{2n} \left\| y - X\beta \right\|_2^2 + \lambda \left\| D \beta \right\|_1 \right).
\end{equation}
Such a problem can be at least traced back to \citet{ROF92} as a \emph{total variation regularization} for image denoising in applied mathematics; in statistics it is formally proposed by \citet{fusedLASSO} as \emph{fused Lasso}. As $D=I$ it reduces to the well-known \emph{Lasso} \citep{lasso} and different choices of $D$ include many special cases, it is often called \emph{generalized Lasso} \citep{tibshirani_solution_2011} in statistics.

Various algorithms are studied for solving \cref{eq:genlasso} at fixed values of the tuning parameter $\lambda$, most of which is based on the ADMM or Split Bregman using operator splitting ideas (see for examples \citet{SplitBregman,ye_split_2011,wahlberg_admm_2012,ramdas_fast_2014,zhu_augmented_2015} and references therein). To avoid the difficulty in dealing with the structural sparsity in $\| D \beta\|_1$, these algorithms exploit an augmented variable $\gamma$ to enforce sparsity while keeping it close to $D\beta$.

On the other hand, regularization paths are crucial for model selection by computing estimators as functions of regularization parameters. For example, \citet{efron_least_2004} studies the regularization path of standard Lasso with $D=I$, the algorithm in \citet{hoefling_path_2010} computes the regularization path of fused Lasso, and the dual path algorithm in \citet{tibshirani_solution_2011} can deal with generalized Lasso. Recently, \citet{arnold_efficient_2016} discussed various efficient implementations of the the algorithm in \citet{tibshirani_solution_2011}, and the related R package \texttt{genlasso} can be found in CRAN repository. All of these are based on homotopy method of solving convex optimization \cref{eq:genlasso}. 

Our departure here, instead of solving \cref{eq:genlasso}, is to look at an extremely simple yet novel iterative scheme which finds a new regularization path with structural sparsity. We are going to show that it works in a better way than \texttt{genlasso}, in both theory and experiments. 

\subsection{New Algorithm: Split LBI}

Define a loss function which splits $D\beta$ and $\gamma$,
\begin{equation}
    \label{eq:l-def}
    \ell\left( \beta, \gamma \right) := \frac{1}{2n} \left\| y - X \beta \right\|_2^2  + \frac{1}{2\nu} \left\| \gamma - D \beta \right\|_2^2\ \ (\nu > 0).
\end{equation}
Now consider the following iterative algorithm,
\begin{subequations}
    \label{eq:slbi-show}
    \begin{align}
        \label{eq:slbi-show-a}
        \beta_{k+1} &= \beta_k - \kappa \alpha \nabla_{\beta} \ell(\beta_k,\gamma_k),\\
        \label{eq:slbi-show-b}
        z_{k+1} &= z_k - \alpha \nabla_\gamma \ell(\beta_k,\gamma_k),\\
        \label{eq:slbi-show-c}
        \gamma_{k+1} &= \kappa \cdot \mathrm{prox}_{\|\cdot\|_1} (z_{k+1}), 
    \end{align}
\end{subequations}
where the initial choice $z_0 = \gamma_0 = 0 \in \mathbb{R}^m$, $\beta_0 = 0 \in \mathbb{R}^p$, parameters $\kappa>0,\ \alpha>0,\ \nu>0$, and the proximal map associated with a convex function $h$ is defined by $\mathrm{prox}_h(z) = \arg\min_x \|z-x\|^2/2 + h(x)$, which is reduced to the \emph{shrinkage} operator when $h$ is taken to be the $\ell_1$-norm, $\mathrm{prox}_{\|\cdot\|_1} (z)=\mathcal{S} \left( z, 1 \right)$ where
\begin{equation*}
    \mathcal{S}\left( z, \lambda \right) = \mathrm{sign}(z) \cdot \max\left( |z| - \lambda,\ 0 \right)\ (\lambda \geq 0).
\end{equation*}

The algorithm generates a sequence $(\beta_k,\gamma_k)_{k\in \mathbb{N}}$ which defines a discrete regularization path. Iteration \cref{eq:slbi-show-a} has appeared as  \emph{$L_2$-Boost} \citep{BuhYu02} in machine learning and can be traced back to the \emph{Landweber Iteration} in inverse problems \citep{YaoRosCap07} where early stopping regularization is needed against overfitting noise. On the other hand, \cref{eq:slbi-show-b} and \cref{eq:slbi-show-c}, generating a sparse regularization path on $\gamma_k$, is known as the \emph{Linearized Bregman Iteration} (LBI) firstly proposed in \citet{lbi}. Recently in sparse linear regression, \citet{osher_sparse_2016} shows that under nearly the same conditions as standard Lasso, LBI with early stopping may achieve sign consistency but with a less biased estimator than Lasso, and its limit dynamics will reach the bias-free \emph{oracle} estimator which is optimal over all estimators. Equipped with a variable splitting between $D\beta$ and $\gamma$, algorithm \cref{eq:slbi-show} thus combines the $L_2$-Boost of $\beta$ for prediction and LBI of $\gamma$ for sparse structure. Hence in this paper we call \cref{eq:slbi-show} the \emph{Split LBI} or \emph{Boosting with structural sparsity}.

The gap $\| \gamma - D \beta \|_2^2/\nu$ controls the affinity between $D\beta$ and $\gamma$. As $\nu\to 0$, $D\beta=\gamma$ which meets the generalized Lasso constraint; while for a finite $\nu>0$, $D\beta$ is not necessarily sparse. Such an increase in degree of freedom, however, leaves us a new space for improving the model selection consistency, as we shall see in the following experiment and in later part of this paper for a theoretical development. 
 
\subsection{Improved Model Selection in Experiments}

The following example shows that the iterative regularization path \cref{eq:slbi-show} can be more accurate than the regularization path of generalized Lasso, in terms of \emph{Area Under the Curve} (AUC)\protect\footnote{The ``Area Under the Curve'' is the area under the Receiver Operating Characteristic (ROC) Curve, whose definition can be seen for example in \citep{brown_receiver_2006}.} measurement of the order of parameters becoming nonzero in consistent with the ground truth sparsity pattern (higher value of AUC means better performance of variable selection of an algorithm such that true parameters becoming nonzero along the algorithmic regularization path earlier than the null parameters). The following simple experiment illustrates such phenomena by simulations.
\begin{figure}
    \centering
    \includegraphics[width = 0.35\textwidth]{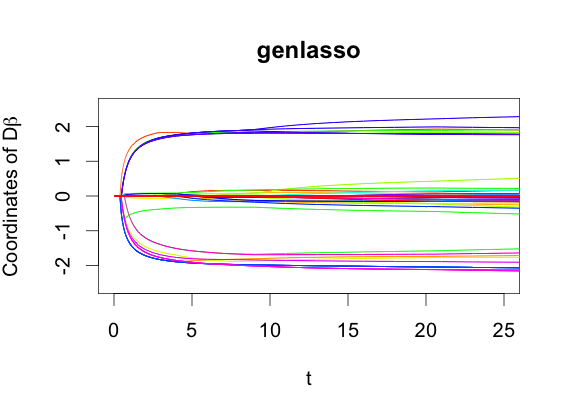}
    \includegraphics[width = 0.35\textwidth]{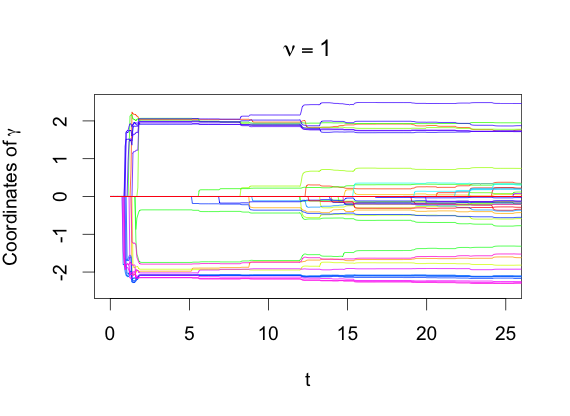}

    \includegraphics[width = 0.35\textwidth]{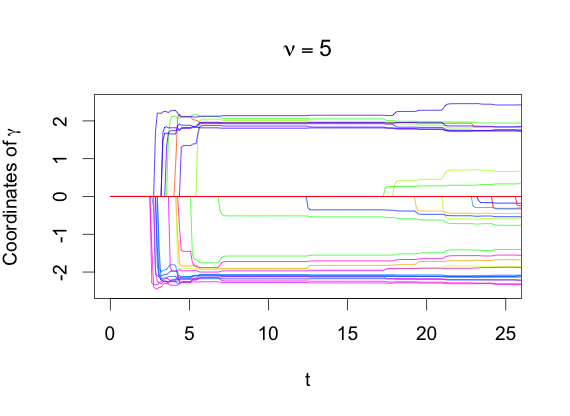}
    \includegraphics[width = 0.35\textwidth]{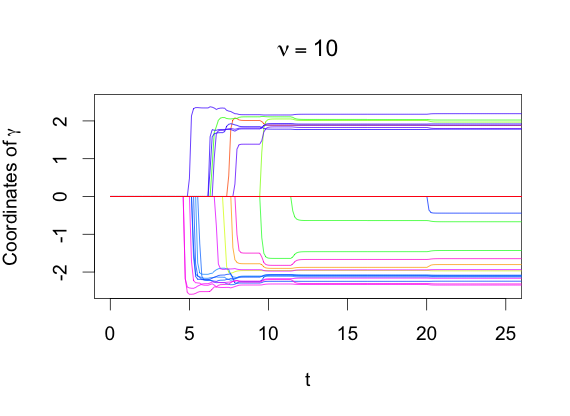}
    \caption{$\{D \beta_\lambda\}\ (t=1/\lambda)$ by \texttt{genlasso} and $\{\gamma_k\}\ (t = k \alpha)$ by Split LBI \cref{eq:slbi-show} with $\nu = 1,5,10$, for 1-D fused Lasso.}
    \label{fig:simu-1dfused-path}
\end{figure}

\begin{example}
    \label{thm:simu-lasso-1dfused-path-auc}
    Consider two problems: standard Lasso and 1-D fused Lasso. In both cases, set $n = p = 50$, and generate $X\in \mathbb{R}^{n\times p}$ denoting $n$ i.i.d. samples from $N(0,I_p)$, $\epsilon\sim N(0,I_n)$, $y = X \beta^{\star} + \epsilon$. $\beta_j^{\star} = 2$ (if $1\le j\le 10$), $-2$ (if $11\le j\le 15$), and $0$ (otherwise). For Lasso we choose $D = I$, and for 1-D fused Lasso we choose $D = (D_1; D_2) \in \mathbb{R}^{(p-1+p)\times p}$ such that $(D_1\beta)_j = \beta_j - \beta_{j+1}$ (for $1\le j\le p - 1$) and $D_2 =I_p$. \Cref{fig:simu-1dfused-path} shows the regularization paths by $\texttt{genlasso}$ ($\{D\beta_\lambda\}$) and by iteration \cref{eq:slbi-show} (linear interpolation of $\{\gamma_k\}$) with $\kappa = 200$ and $\nu\in\{1,5,10\}$, respectively. The generalized Lasso path is in fact piecewise linear with respect to $\lambda$ while we show it along $t=1/\lambda$ for a comparison. Note that the iterative paths exhibit a variety of different shapes depending on the choice of $\nu$. However, in terms of order of those curves entering into nonzero range, these iterative paths exhibit a \emph{better} accuracy than \texttt{genlasso}. \Cref{tab:simu-lasso-1dfused-auc} shows this by the mean AUC of $100$ independent experiments in each case, where the increase of $\nu$ improves the model selection accuracy of Split LBI paths and beats that of generalized Lasso.
\end{example}

\begin{table}[t]
    \caption{Mean AUC (with standard deviation) comparisons where Split LBI \cref{eq:slbi-show} beats \texttt{genlasso}. The first is for the standard Lasso, and the second is for the 1-D fused Lasso in \Cref{thm:simu-lasso-1dfused-path-auc}.}
    \label{tab:simu-lasso-1dfused-auc}
    \begin{minipage}{\textwidth}
        \centering
        \begin{tabular}{cccc}
            \toprule
            \multicolumn{1}{c}{\texttt{genlasso}} & \multicolumn{3}{c}{Split LBI}\\
            \cmidrule(lr){2-4}
            & $\nu = 1$ & $\nu = 5$ & $\nu = 10$ \\
            \cmidrule{2-4}
            $.9426$ & $.9845$ & $.9969$ & $\mathbf{.9982}$\\
            $(.0390)$ & $(.0185)$ & $(.0065)$ & $(\mathbf{.0043})$\\
            \bottomrule 
        \end{tabular}
    \end{minipage}
    \begin{minipage}{\textwidth}
        \centering
        \begin{tabular}{cccc}
            \toprule
            \multicolumn{1}{c}{\texttt{genlasso}} & \multicolumn{3}{c}{Split LBI}\\
            \cmidrule(lr){2-4}
            & $\nu = 1$ & $\nu = 5$ & $\nu = 10$ \\
            \cmidrule{2-4}
            $.9705$ & $.9955$ &  $.9996$ & $\mathbf{.9998}$\\
            $(.0212)$ & $(.0056)$ & $(.0014)$ & $(\mathbf{.0009})$\\
            \bottomrule
        \end{tabular}
    \end{minipage}
\end{table}

\emph{Why does Split LBI perform better in model selection than generalized Lasso?} Some limit dynamics of algorithm \cref{eq:slbi-show} actually shed light on the cause. 
%In fact, it is a forward Euler discretization of some differential inclusions in a similar fashion to the Linearized Bregman Iterations studied in \citep{osher_sparse_2016}, but involving a variable splitting between $D\beta$ and $\gamma$. 

%Hence, in this paper we call it \emph{Split LBI}. The following makes it precise. 
%When $D = I$ and $\nu\to 0$ which enforces $\gamma = D \beta = \beta$, the iteration \cref{eq:slbi-show} is reduced (by dropping \cref{eq:slbi-show-a}) to the popular \emph{Linearized Bregman Iteration} (LBI) for linear regression or compressed sensing which is firstly proposed in . The simple iterative scheme returns the whole regularization path, at the same cost of computing one Lasso estimator at a fixed regularization parameter using the iterative soft-thresholding algorithm. However, LBI regularization path could be better than Lasso regularization path which is always biased. In fact, 

%The difference between \cref{eq:slbi-show} and the standard LBI lies in the partial sparsity control on $\gamma$, which splits the structural sparsity on $D\beta$ into a sparse $\gamma$ and $D\beta$ by controlling their gap $\|\gamma - D\beta\|^2 / (2\nu)$. Thereafter algorithm \cref{eq:slbi-show} is called \emph{Split LBI} in this paper.

%\subsection{Split LBISS and Split ISS: Limit Dynamics of Split LBI}
\subsection{Limit Differential Inclusions of Split LBI}
\label{sec:slbiss}

Below we are going to derive several limit dynamics of Split LBI, which are differential inclusions and lead to explanations on how our algorithm might improve over generalized Lasso. 

First of all, noting by the following Moreau Decomposition 
\begin{align}
    \label{eq:var-subst}
    \rho\in \partial \left\| \gamma \right\|_1,\ z = \rho + \gamma / \kappa & &\Longleftrightarrow & &\gamma = \kappa \mathcal{S}(z,1),\ \rho = z - \mathcal{S}(z,1),
\end{align}
the Split LBI \eqref{eq:slbi-show} can be rewritten as,
\begin{subequations}
    \label{eq:slbi}
    \begin{align}
        \label{eq:slbi-a}
        \beta_{k+1} / \kappa &= \beta_k / \kappa - \alpha \nabla_\beta \ell\left( \beta_k, \gamma_k \right),\\
        \label{eq:slbi-b}
        \rho_{k+1} + \gamma_{k+1} / \kappa &= \rho_k + \gamma_k / \kappa - \alpha \nabla_\gamma \ell\left( \beta_k, \gamma_k \right),\\
        \label{eq:slbi-c}
        \rho_k &\in \partial \left\| \gamma_k \right\|_1,
    \end{align}
\end{subequations}
where $\rho_0 = \gamma_0 = 0 \in \mathbb{R}^m$, $\beta_0 = 0 \in \mathbb{R}^p$. 

Now taking $\rho(k\alpha) = \rho_k$, $\gamma(k\alpha) = \gamma_k$, $\beta(k\alpha) = \beta_k$, and $\alpha\to 0$, \cref{eq:slbi} is a forward Euler discretization of the following limit dynamics, called \emph{Split Linearized Bregman Inverse Scale Space (Split LBISS)} here. 
\begin{defn}[Split LBISS]
For $\alpha \to 0$, define the following differential inclusion as the limit dynamics of Split LBI,
\begin{subequations}
    \label{eq:slbiss}
    \begin{align}
        \label{eq:slbiss-a}
        \dot{\beta}(t) / \kappa &= - \nabla_\beta \ell\left( \beta(t), \gamma(t) \right),\\% = - X^{*} \left( X \beta(t) - y \right) - D^T \left( D \beta(t) - \gamma(t) \right) / \nu,\\
        \label{eq:slbiss-b}
        \dot{\rho}(t) + \dot{\gamma}(t) / \kappa &= - \nabla_\gamma \ell\left( \beta(t), \gamma(t) \right),\\% = - \left( \gamma(t) - D \beta(t) \right) / \nu,\\
        \label{eq:slbiss-c}
        \rho(t) &\in \partial \left\| \gamma(t) \right\|_1,
    \end{align}
\end{subequations}
where $\rho(t), \beta(t), \gamma(t)$ are right continuously differentiable, with $\dot{\rho}(t), \dot{\beta}(t), \dot{\gamma}(t)$ denoting the right derivatives in $t$ of $\rho(t), \beta(t), \gamma(t)$ respectively, and $\rho(0) = \gamma(0) = 0 \in \mathbb{R}^m$, $\beta(0) = 0 \in \mathbb{R}^p$. 
\end{defn}

Next taking $\kappa\to \infty$, we reach the following dynamics called \emph{Split Inverse Scale Space (Split ISS)} in this paper. %\eqref{eq:siss}. %Such a dynamics is called as \emph{Split Linearized Bregman Inverse Scale Space (Split LBISS)} and  the Split ISS \eqref{eq:siss}.

\begin{defn}[Split ISS]
For $\kappa\to \infty$ and $\alpha\to 0$, define the differential inclusion, 
%Letting $\kappa \rightarrow +\infty$ leads the form of Split LBISS to the following , which we call \emph{Split Inverse Scale Space (Split ISS)}.
%\begin{defn}[Split ISS]
\begin{subequations}
    \label{eq:siss}
    \begin{align}
        \label{eq:siss-a}
        0 &= - \nabla_\beta \ell\left( \beta(t), \gamma(t) \right),\\
        \label{eq:siss-b}
        \dot{\rho}(t) &= - \nabla_\gamma \ell\left( \beta(t), \gamma(t) \right),\\
        \label{eq:siss-c}
        \rho(t) &\in \partial \left\| \gamma(t) \right\|_1,
    \end{align}
\end{subequations}
where $\rho(t)$ is right continuously differentiable, $\beta(t), \gamma(t)$ are right continuous, and $\rho(0) = \gamma(0) = 0 \in \mathbb{R}^m$, $\beta(0) = 0 \in \mathbb{R}^p$. Solving $\beta(t)$ in \cref{eq:siss-a} and plugging it into \cref{eq:siss-b}, \cref{eq:siss} can be reduced to 
\begin{subequations}
    \label{eq:siss-gamma}
    \begin{align}
    	\label{eq:siss-gamma-a}
        \dot{\rho}(t) &= - \Sigma^{1/2} ( \Sigma^{1/2} \gamma(t) - \Sigma^{\dag 1/2} D A^{\dag} X^{*} y),\\
        \rho(t) &\in \partial \left\| \gamma(t) \right\|_1,
    \end{align}
\end{subequations}
where $\Sigma$ and $A$ are given by 
\begin{equation}
    \label{eq:ASigma-def}
    \Sigma = \Sigma(\nu) := \left( I - D A^{\dag} D^T \right) / \nu,\ \text{and}\ A = A(\nu) = \nu X^{*} X + D^T D.
\end{equation}
\end{defn}

In fact by \cref{eq:siss-a} we have
\begin{equation*}
    \beta(t) = \arg\min_\beta \ell\left( \beta, \gamma(t) \right) = A^{\dag} \left( \nu X^{*} y + D^T \gamma(t) \right),
\end{equation*}
where $A = \nu X^{*} X + D^T D$. Substituting this for $\beta(t)$ in \cref{eq:siss-b} and noting \cref{eq:Sigma-DAdagX} ($DAX^{*} = \Sigma^{1/2} \Sigma^{\dag 1/2} D A X^{*}$), we thus get \cref{eq:siss-gamma-a}. 
%\begin{subequations}
%    \label{eq:siss-gamma}
%    \begin{align}
%        \dot{\rho}(t) &= - \Sigma^{1/2} ( \Sigma^{1/2} \gamma(t) - \Sigma^{\dag 1/2} D A^{\dag} X^{*} y),\\
%        \rho(t) &\in \partial \left\| \gamma(t) \right\|_1.
%    \end{align}
%\end{subequations}
%Here
%(sometimes we use the notation $\Sigma(\nu), A(\nu)$ stressing the dependence on $\nu$). %

\begin{remark}
    Note that \cref{eq:siss-gamma} coincides with the differential inclusion proposed in Chapter 8 of \citet{moeller_multiscale_2012} where the authors introduced it in a different way. The existence and uniquess of solutions of Split LBISS and Split ISS will be characterized precisely in \Cref{sec:pathproperty}.
\end{remark}

Now consider the particular case of the standard Lasso where $D = I$ and $\Sigma(\nu) = X^{*}(I + \nu X X^{*})^{-1} X$. Hence as $\nu \to 0$, we have $\Sigma(\nu) \to X^{*} X$ and \cref{eq:siss-gamma} leads to the standard Inverse Scale Space (ISS) dynamics studied in \citep{osher_sparse_2016} by identifying $\beta=\gamma$. 

\begin{proposition}
    \label{prop:siss-iss}
    Let $D=I$ and $\nu \to 0$, then $\gamma=\beta$ and \eqref{eq:siss} reduces to
    \begin{subequations}\label{eq:iss}
        \begin{align}
            \dot{\rho}(t) &= - X^{*} ( X \beta(t) - y),\\
            \rho(t) &\in \partial \left\| \beta(t) \right\|_1,
        \end{align}
    \end{subequations}
    with the same notations as above. 
\end{proposition}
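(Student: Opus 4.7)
The plan is to work directly from \cref{eq:siss}, eliminate $\beta(t)$ in favor of $\gamma(t)$ as already done in the paragraph preceding the proposition, and then take the limit $\nu \to 0$ carefully to handle the apparent $1/\nu$ singularity in $\nabla_\gamma \ell$.

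First I would substitute $D=I$ into the two gradients, obtaining
\begin{equation*}
    \nabla_\beta \ell = -\frac{1}{n} X^T (y - X\beta) - \frac{1}{\nu}(\gamma - \beta), \qquad \nabla_\gamma \ell = \frac{1}{\nu}(\gamma - \beta).
\end{equation*}
The stationarity condition \cref{eq:siss-a} then gives the closed form
\begin{equation*}
    \beta(t) = A^{-1}\bigl(\nu X^T y + \gamma(t)\bigr), \qquad A = \nu X^T X + I,
\end{equation*}
which is just the general formula specialized to $D=I$. A short algebraic manipulation yields
\begin{equation*}
    \gamma(t) - \beta(t) = A^{-1}\bigl[(\nu X^T X + I)\gamma(t) - \nu X^T y - \gamma(t)\bigr] = \nu\, A^{-1} X^T\bigl(X\gamma(t) - y\bigr),
\end{equation*}
so the factor of $\nu$ in the numerator exactly cancels the $1/\nu$ in the gradient:
\begin{equation*}
    -\nabla_\gamma \ell(\beta(t),\gamma(t)) = -\frac{1}{\nu}\bigl(\gamma(t) - \beta(t)\bigr) = -A^{-1} X^T \bigl(X\gamma(t) - y\bigr).
\end{equation*}

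Next I would pass to the limit $\nu\to 0$. From the identity above, $\gamma(t) - \beta(t) = O(\nu)$, so $\beta(t) \to \gamma(t)$ pointwise, yielding $\gamma = \beta$ in the limit. Also $A^{-1} = (\nu X^T X + I)^{-1} \to I$, so $-\nabla_\gamma \ell \to -X^T(X\gamma - y) = -X^T(X\beta - y)$. Substituting into \cref{eq:siss-b} and \cref{eq:siss-c} produces precisely the ISS dynamics \cref{eq:iss}. Equivalently, one could start from the reduced form \cref{eq:siss-gamma-a} using the representation $\Sigma(\nu) = X^T(I + \nu X X^T)^{-1} X$ noted in the paragraph after the proposition, and observe $\Sigma(\nu) \to X^T X$ and $\Sigma^{\dagger 1/2} D A^\dagger X^T y \to \Sigma^{\dagger 1/2} X^T y$ on the range of $\Sigma^{1/2}$, recovering the same limit.

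The only real subtlety is the $1/\nu$ blow-up in $\nabla_\gamma \ell$; once one observes the cancellation above it is routine. The continuity of the subdifferential inclusion $\rho(t) \in \partial \|\gamma(t)\|_1$ under the limit is immediate since \cref{eq:siss-c} does not involve $\nu$ and $\gamma(t)$ converges pointwise. The initial conditions $\rho(0)=\gamma(0)=0$ (and $\beta(0)=0$, consistent with $\beta = \gamma$ in the limit) are preserved. Hence \cref{eq:siss} reduces to \cref{eq:iss}.
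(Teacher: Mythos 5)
Your argument is correct and is essentially the same as the paper's own (informal) justification, which likewise eliminates $\beta(t)$ via \cref{eq:siss-a} and observes $\Sigma(\nu)=X^{*}(I+\nu XX^{*})^{-1}X\to X^{*}X$; your direct cancellation $\gamma(t)-\beta(t)=\nu A^{-1}X^{*}(X\gamma(t)-y)$ is the same computation, since $\Sigma=A^{-1}X^{*}X$ when $D=I$. One small notational slip: in your middle displays $A$ and the data term should be written with $X^{*}=X^T/n$ rather than $X^T$ (consistent with your own expression for $\nabla_\beta\ell$ and with \cref{eq:ASigma-def}), otherwise the limiting right-hand side comes out as $-X^T(X\beta(t)-y)$ instead of $-X^{*}(X\beta(t)-y)$ as in \cref{eq:iss}.
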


A fundamental path consistency problem is the following. 

\textbf{Model Selection Consistency}: Under what conditions there exists a point $\bar{\tau}$ (or $\bar{k}$) such that $\supp(\gamma(\bar{\tau})) = S$ (or $\supp(\gamma_{\bar{k}}) = S$), or more specifically the so called \emph{sign-consistency} holds, $\sign(\gamma(\bar{\tau})) = \sign(\gamma^\star)$ (or $\sign(\gamma_{\bar{k}})=\sign(\gamma^\star)$, respectively)? 

Comparing the reduced Split ISS \cref{eq:siss-gamma} with the ISS \cref{eq:iss}, one can see that $\Sigma(\nu)$ plays a similar role as $X^{*} X$. For the special case that $D=I$ and $\nu\to 0$, \citet{osher_sparse_2016} shows that under nearly the same conditions as Lasso, ISS \cref{eq:iss} achieves model selection consistency but with the unbiased oracle estimator which is better than Lasso. Here an unbiased estimator means the expectation of the estimator equals to the ground truth and Lasso is well-known to be biased. In fact, under a so called \emph{Irrepresentable Condition (IRR)} on $X^* X$, ISS \cref{eq:iss} is guaranteed to evolve before the stopping time on the \emph{oracle subspace} whose coordinate index is within the support set $S$ of the true parameter, i.e. no false positive. Similarly the Lasso regularization path also has no false positive under the same condition. Moreover if the signal is strong enough, the Lasso may pick up an estimator which is \emph{sign-consistent} yet \emph{biased}, while the ISS path with an early stopping may reach the oracle estimator which is both \emph{sign-consistent} and \emph{unbiased}.  

For the comparison with generalized Lasso, the Irrepresentable Condition on $\Sigma(\nu)$ will replace that on $X^* X$, where the additional degree of freedom provided by $\nu$ enables us a chance to beat generalized Lasso. 

Model selection and estimation consistency of generalized Lasso \cref{eq:genlasso} has been studied in previous work. \citet{sharpnack_sparsistency_2012} considered the model selection consistency of the edge Lasso, with a special $D$ in \cref{eq:genlasso}, which has applications over graphs. \citet{liu_guaranteed_2013} provides an upper bound of estimation error by assuming the design matrix $X$ is a Gaussian random matrix. In particular, \citet{vaiter_robust_2013} proposes a general condition called \emph{Identifiability Criterion} (IC) for sign consistency. \citet{lee_model_2013} establishes a general framework for model selection consistency for penalized M-estimators, proposing an Irrepresentable Condition which is equivalent to IC from \citet{vaiter_robust_2013} under the specific setting of \cref{eq:genlasso}. In fact both of these conditions are sufficient and necessary for structural sparse recovery by generalized Lasso \cref{eq:genlasso} in a certain sense. 

In this paper, we shall present a new family of the Irrepresentable Condition depending on $\Sigma(\nu)$, under which model selection consistency can be established for both Split ISS \cref{eq:siss} and Split LBI \cref{eq:slbi-show}. In particular, this condition family can be strictly \emph{weaker} than IC as the parameter $\nu$ grows, which sheds light on the superb performance of Split LBI we observed in the experiment above. Therefore, the benefits of exploiting Split LBI \cref{eq:slbi-show} not only lie in its algorithmic simplicity, but also provide a possibility of theoretical improvement on model selection consistency. 

\begin{figure} 
    \centering
    \includegraphics[width = 0.6\textwidth]{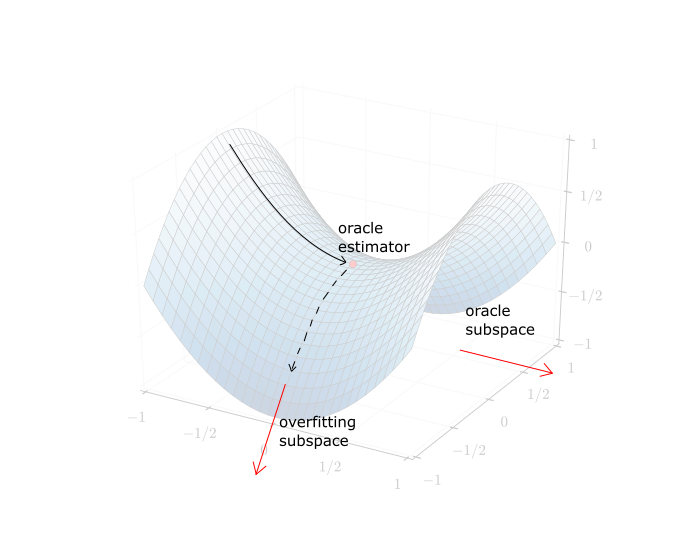}
    \caption{Illustration of global behaviour of dynamics in this paper.}
    \label{fig:saddle}
\end{figure}
Roughly speaking, the global picture of our theoretical development is illustrated in \Cref{fig:saddle}:
\begin{enumerate}
    \item Equipped with the Irrepresentable Condition on $\Sigma(\nu)$, all the dynamics (differential inclusions and the discrete iterations) evolves in a subspace of estimators whose support set lies in the that of the true parameter, whence the subspace is called the oracle subspace here; 
    \item Further enhanced by a restricted strongly convexity, along the paths of these dynamics the loss is rapidly decreasing at an exponential speed, firstly approaching a saddle point lying the oracle estimator then flowing away; 
    \item Early stopping regularization is designed here to stop the dynamics around the saddle point to pick up an estimator close to the oracle before escaping to overfitted solutions; 
    \item If the signal is strong enough such that the true parameters are all of sufficiently large magnitudes, such a good estimator is guaranteed to recover the sparsity pattern of the ground truth. 
\end{enumerate}
In the sequel, we are going to elaborate them in a precise way.

%There is no free lunch, though, that the mean square error ($\ell_2$ error) of estimators will be enlarged by a finite $\nu$. 

%Hence following the tradition in \citep{osher_sparse_2016}, \eqref{eq:siss} is called \emph{Split Inverse Scale Space (Split ISS)} and \eqref{eq:slbiss} is called as \emph{Split Linearized Bregman Inverse Scale Space (Split LBISS)} in this paper, with ISS emphasizing the fact that the big features corresponding to the components of $\gamma$ will pop up earlier than the small features as the dynamics evolves. 

%Now we focus on the differential inclusion called \emph{Split Linearized Bregman Inverse Scale Space (Split LBISS)}, the limit dynamics of Split LBI when the step size $\alpha \rightarrow 0$. This dynamics helps us understand the behavior of Split LBI, and the proof on sign consistency as well as $\ell_2$ consistency of Split LBISS can be rewritten into a discrete version then applied to Split LBI with slight modifications. Such an idea has arisen in \citet{osher_sparse_2016}, where a standard LBISS is introduced as the limit dynamics of standard LBI.

%Later we will see that a sufficiently large $\kappa$ is desired in both theory and experiments. 
% Besides, we require ``$\beta(t)\in \mathcal{L}$'', since replacing $\beta(t)$ with ``the projection of $\beta(t)$ onto $\mathcal{L}$'' does not disturb \cref{eq:siss-a,eq:siss-b,eq:siss-c}.
%Split ISS is actually equivalent to a standard ISS \citep{osher_sparse_2016} with respect to $\gamma$. In fact, 

\subsection{Paper Organization} 

This paper is a long version of a conference report \citep{huang_split_2016} which states the main results about the discrete algorithm \cref{eq:slbi-show} without proofs together with part of the experiments.  
%emph{Why does the simple iterative algorithm \cref{eq:slbi-show} work, even better than the generalized Lasso?}  In this paper, we aim to answer it by presenting a theory for model selection consistency of \cref{eq:slbi-show}.   
The full version here is organized as follows: \Cref{sec:comparison} presents the Irrepresentable Condition together with other assumptions for Split ISS and LBI, and shows that it can be strictly weaker than IC, the necessary and sufficient condition for model selection consistency of generalized Lasso; Some basic properties of dynamic paths are presented in \Cref{sec:pathproperty}, including the existence and uniqueness of differential inclusion solutions, as well as the non-increasing loss along the paths; \Cref{sec:pathconsistency} collects the path consistency results for both differential inclusions and the discrete algorithm; A brief description of proof ideas for these results are presented in \Cref{sec:ideas} with specific details left in appendices; \Cref{sec:exp} collects three more applications, including image denoising, partial order (group) estimate in sports and crowdsourced university ranking; Conclusion is given in \Cref{sec:conclusion}; Appendices collect all the remaining proofs in this paper. 

%\begin{enumerate}
%\item[(A)] a new iterative regularization path with structural sparsity by \cref{eq:slbi-show}; 
%\item[(B)] a theory of path consistency which shows the model selection consistency of \cref{eq:slbi-show}, under some weaker conditions than generalized Lasso, together with $\ell_2$ error bounds at minimax optimal rates;
%\item[(C)] some applications on 2-D image reconstruction and partial order estimation in sport games and crowdsourced university ranking.
%\end{enumerate}
%
%\textcolor{red}{Explain the main contents of each sections here!}

\subsection{Notation}
\label{sec:notation}

For matrix $Q$ with $m$ rows ($D$ for example) and $J\subseteq \{1,2,\ldots,m\}$, let $Q_J = Q_{J, \cdot}$ be the submatrix of $Q$ with rows indexed by $J$. However, for $Q\in \mathbb{R}^{n\times p}$ ($X$ for example) and $J\subseteq \{1,2,\ldots,p\}$, let $Q_J = Q_{\cdot, J}$ be the submatrix of $Q$ with columns indexed by $J$, abusing the notation.

$P_{L}$ denotes the projection matrix onto a linear subspace $L$, Let $L_1 + L_2 := \{ \xi_1 + \xi_2:\ \xi\in L_1,\ \xi\in L_2\}$ for subspaces $L_1, L_2$. For a matrix $Q$, let $Q^{\dag}$ denotes the Moore-Penrose pseudoinverse of $Q$, and we recall that $Q^{\dag} = (Q^T Q)^{\dag} Q^T$. Let $\lambda_{\max}(Q), \lambda_{\min}(Q), \lambda_{\min, +}(Q)$ denotes the largest singular value, the smallest singular value, the smallest nonzero singular value of $Q$, respectively. For symmetric matrices $P$ and $Q$, $Q\succ P$ (or $Q\succeq P$) means that $Q-P$ is positive (semi-)definite, respectively. Let $Q^{*} := Q^T / n$. Sometimes we use $\langle a, b\rangle := a^T b$, denoting the inner product between vectors $a,b$. Also, for tidiness in some situations, we write $(Q_1; Q_2) := (Q_1^T, Q_2^T)^T$.

\section{Assumptions and Comparisons}
\label{sec:comparison}

\subsection{Basic Assumptions}
\label{sec:basic-assump}

We need some convention, definitions and assumptions. For the identifiability of $\beta^\star$, we can assume that $\beta^{\star}$ and its estimators of interest are restricted in
\begin{equation*}
    \mathcal{L} := (\ker(X)\cap \ker(D))^{\perp} = \mathrm{Im} \left( X^T \right) + \mathrm{Im} \left( D^T \right),
\end{equation*}
since replacing $\beta^{\star}$ with ``the projection of $\beta^{\star}$ onto $\mathcal{L}$'' does not change the model. We also have $\beta^{\star} \in \mathcal{M}$, where $\mathcal{M}$ is the \emph{model subspace} defined as
\begin{equation*}
    \mathcal{M} := \left\{ \beta:\ D_{S^c} \beta = 0 \right\}.
\end{equation*}
Note that $\ell(\beta, \gamma)$ is quadratic, and we can define its Hessian matrix
\begin{equation}
    \label{eq:H-def}
    H = H(\nu) := \nabla^2 \ell\left( \beta, \gamma \right) \equiv \begin{pmatrix} X^{*} X + D^T D / \nu & - D^T / \nu \\ - D / \nu & I_m / \nu \end{pmatrix}
\end{equation}
(sometimes we use the notation $H(\nu)$ stressing the dependence on $\nu$). Now we assume that there exist constants $\lambda_D, \Lambda_D, \Lambda_X > 0$ satisfying
\begin{subequations}
    \label{eq:const-def}
    \begin{gather}
        \min \left( \lambda_{\min,+}\left( D \right),\ \lambda_{\min,+}\left( D_{S^c} \right) \right)\ge \lambda_D,\\
        \Lambda_{\max}\left( D \right) \le \Lambda_D,\\
        \Lambda_{\max}\left( X^{*} X \right) \le \Lambda_X^2.
    \end{gather}
\end{subequations}
Besides, we consider the following assumption. 

\begin{assumption}[\textnormal{Restricted Strong Convexity (RSC)}]
    \label{thm:rsc}
    There exists a constant $\lambda > 0$ such that
    \begin{equation}
        \label{eq:rsc}
        \beta^T X^{*} X \beta \ge \lambda \left\| \beta \right\|_2^2,\ \text{for any}\ \beta \in \mathcal{L} \cap \mathcal{M}.
    \end{equation}
\end{assumption}

\begin{remark}
    When $\mathcal{L} = \mathbb{R}^p$, i.e. there is only one $\beta^{\star}$ satisfying \cref{eq:model}, \Cref{thm:rsc} is the same as that proposed by \citet{lee_model_2013}. Specifically, when $D = I$, \Cref{thm:rsc} reduces to $X_S^{*} X_S \succeq \lambda I$, the usual RSC in Lasso.
\end{remark}

\begin{proposition}
    \label{thm:rsc-nu}
    If there exists $C > 0,\ \nu > 0$, such that
    \begin{equation}
        \label{eq:rsc-nu}
        \left( \beta^T, \gamma_S^T \right) \cdot H_{(\beta, S), (\beta, S)}(\nu) \cdot \begin{pmatrix} \beta\\ \gamma_S \end{pmatrix} \ge \frac{C}{1 + \nu} \left\| \begin{pmatrix} \beta\\ \gamma_S \end{pmatrix} \right\|_2^2\ \left( \beta\in \mathcal{L},\ \gamma_S \in \mathbb{R}^s \right).
    \end{equation}
    then \cref{eq:rsc} holds. Conversely, if \cref{eq:rsc} holds, then there exists $C > 0$, such that for all $\nu > 0$, \cref{eq:rsc-nu} holds.
\end{proposition}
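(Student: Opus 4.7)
The plan is to work with the explicit quadratic form of the restricted Hessian. A direct block computation gives, with $W = (\beta;\gamma_S)$,
\[
W^\top H_{(\beta,S),(\beta,S)}(\nu)\, W \;=\; \beta^\top X^* X \beta \;+\; \frac{1}{\nu}\bigl(\|D_{S^c}\beta\|_2^2 + \|D_S\beta - \gamma_S\|_2^2\bigr),
\]
so that $H_{(\beta,S),(\beta,S)}(\nu) = A + B/\nu$ with $A = \mathrm{diag}(X^* X,\,0) \succeq 0$ and $B \succeq 0$ the matrix corresponding to the parenthesised sum. This split is the hinge of both directions.

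For the forward implication the argument is a one-liner once the quadratic form is expanded: for any $\beta \in \mathcal{L} \cap \mathcal{M}$, set $\gamma_S = D_S\beta$, which kills the $B$-contribution since $D_{S^c}\beta = 0$. The assumed bound then reads
\[
\beta^\top X^* X \beta \;\ge\; \frac{C}{1+\nu}\bigl(\|\beta\|_2^2 + \|D_S\beta\|_2^2\bigr) \;\ge\; \frac{C}{1+\nu}\|\beta\|_2^2,
\]
which is \cref{eq:rsc} with $\lambda = C/(1+\nu) > 0$.

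The converse is the substantive direction, and I would organise it around the matrix inequality $(1+\nu)\,H_{(\beta,S),(\beta,S)}(\nu) \succeq A + B$, valid for all $\nu > 0$ because $(1+\nu)\ge 1$ and $(1+1/\nu)\ge 1$. This reduces the problem to producing a single $\nu$-independent lower bound $W^\top (A+B) W \ge c\,\|W\|_2^2$ on $\mathcal{L}\times\mathbb{R}^s$, with $c>0$ depending only on $\lambda, \lambda_D, \Lambda_D, \Lambda_X$; the choice $C = c$ then discharges the claim uniformly in $\nu$. To build this estimate I decompose $\beta \in \mathcal{L}$ orthogonally as $\beta = \beta_1 + \beta_2$ with $\beta_1 \in \mathcal{L}\cap\mathcal{M}$ and $\beta_2 \in \mathrm{Im}(D_{S^c}^\top) \subseteq \mathcal{L}$, which yields $\|D_{S^c}\beta\|_2^2 = \|D_{S^c}\beta_2\|_2^2 \ge \lambda_D^2\|\beta_2\|_2^2$, while \Cref{thm:rsc} gives $\beta_1^\top X^* X\beta_1 \ge \lambda\|\beta_1\|_2^2$.

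The main obstacle is the indefinite cross term $2\beta_1^\top X^* X \beta_2$ appearing in $\beta^\top X^* X \beta$. I plan to dispatch it by a two-case split according to whether $\|\beta_2\|_2^2 + \|D_S\beta - \gamma_S\|_2^2$ is a non-negligible fraction of $\|\beta\|_2^2 + \|\gamma_S\|_2^2$ or not. In the ``large'' case, the $B$-part alone delivers the bound through $\lambda_D^2$ and the $I_s$ block. In the ``small'' case, $\|\beta_1\|_2^2$ is forced to be comparable with $\|\beta\|_2^2$, and $\|\gamma_S\| \le \|D_S\beta\| + \|D_S\beta - \gamma_S\| \le \Lambda_D\|\beta\| + o(\|W\|)$ controls $\|\gamma_S\|_2^2$ by a multiple of $\|\beta\|_2^2$; then RSC on $\beta_1$, combined with AM--GM applied to $2\Lambda_X^2\|\beta_1\|\|\beta_2\|$, absorbs the cross term and the $A$-part alone delivers the bound. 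A conceptually cleaner (if less quantitative) alternative is a compactness argument on the unit sphere of $\mathcal{L}\times\mathbb{R}^s$: if $W^\top A W = W^\top B W = 0$, then $\beta \in \ker(X)\cap \mathcal{L}\cap\mathcal{M}$ and $\gamma_S = D_S\beta$, whence \Cref{thm:rsc} forces $\beta = 0$ and hence $\gamma_S = 0$, so the continuous Rayleigh quotient of $A+B$ attains a strictly positive minimum $c$.
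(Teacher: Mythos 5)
Your proposal is correct, and the forward direction is exactly the paper's argument (plug in $\gamma_S = D_S\beta$ for $\beta\in\mathcal{L}\cap\mathcal{M}$ to kill the $\nu^{-1}$ terms). For the converse, the underlying estimates are the same as the paper's --- the orthogonal decomposition $\beta=\beta_1+\beta_2$ with $\beta_1\in\mathcal{L}\cap\mathcal{M}$ and $\beta_2\in\mathrm{Im}(D_{S^c}^T)$, the use of \Cref{thm:rsc} on $\beta_1$ and of $\lambda_{\min,+}(D_{S^c})\ge\lambda_D$ on $\beta_2$, and the triangle inequality $\|\gamma_S\|\le\|D_S\beta\|+\|\gamma_S-D_S\beta\|$ --- but your organization is genuinely different. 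The paper keeps the $\nu$-dependence inline: it introduces the tuned constant $\nu_0=2\lambda_D^2/(\lambda+2\Lambda_X^2)$, shows $\nu_0 X^*X+D_{S^c}^TD_{S^c}\succeq(\lambda\nu_0/2)I$ on $\mathcal{L}$ by absorbing the cross term via $(u+v)^TM(u+v)\ge\tfrac12 u^TMu-v^TMv$ (the choice of $\nu_0$ makes the two resulting coefficients coincide), and then tracks factors like $\max(\nu_0,\nu)$ and $\nu_0+\nu$ to extract the $C/(1+\nu)$ rate. You instead factor out the $\nu$-dependence once and for all via $(1+\nu)H_{(\beta,S),(\beta,S)}(\nu)\succeq A+B$, reducing everything to a single $\nu$-free coercivity bound for $A+B$ on $\mathcal{L}\times\mathbb{R}^s$; this is cleaner and more modular, and your compactness alternative (the kernel of $A+B$ on $\mathcal{L}\times\mathbb{R}^s$ is trivial under \Cref{thm:rsc}, so the Rayleigh quotient has a positive minimum) settles the existence claim with almost no computation. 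The trade-off is that the compactness route yields no explicit constant, whereas the paper's $\nu_0$-tuning produces the concrete $C$ of \cref{eq:C-def} that is reused downstream (e.g.\ in defining $\lambda_H$); your quantitative two-case split would recover an explicit constant as well, at the cost of choosing the threshold $\epsilon$ small relative to $\lambda,\Lambda_X,\Lambda_D$, which you correctly identify as the only delicate point.
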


\begin{remark}
    Traditional RSC for the partial Lasso $\min_{\beta, \gamma} (\ell(\beta, \gamma) + \lambda \| \gamma \|_1)$ requires $\ell$ to be restricted strongly convex, i.e. strongly convex restricted on $\mathcal{N} := \mathcal{L}\oplus \mathbb{R}^s\oplus \{0\}^{p-s}$ which is the sparse subspace corresponding to the support of $\gamma^{\star}$). \Cref{thm:rsc-nu} implies that, \Cref{thm:rsc} is necessary for $\ell$ to be restricted strongly convex for \emph{a specific} $\nu > 0$ (note that $\ell$ depends on $\nu$), and also sufficient for $\ell$ to be restricted strongly convex for \emph{all} $\nu > 0$.
\end{remark}

\begin{remark}
    Let us further note the rate $C/(1+\nu)$ in \cref{eq:rsc-nu}. When $\nu \rightarrow 0$, it approaches $C$, a constant independent with $\nu$. When $\nu \rightarrow +\infty$, the rate $C/(1+\nu) \sim \nu^{-1}$ is \emph{the best possible}, since $\| H \|_2 \lesssim \nu^{-1}$ by \cref{eq:H-upper}.
\end{remark}

\begin{assumption}[\textnormal{Irrepresentable Condition ($\nu$) (IRR($\nu$))}]
    \label{thm:irr-nu}
    There exists a constant $\eta \in (0, 1]$ such that
    \begin{equation}
        \label{eq:irr-nu}
        \sup_{\rho \in [-1,1]^s} \left\| H_{S^c, (\beta, S)}(\nu) H_{(\beta, S), (\beta, S)}(\nu)^{\dag} \cdot \begin{pmatrix} 0_p\\ \rho \end{pmatrix} \right\|_{\infty} \le 1 - \eta.
    \end{equation}
\end{assumption}

\begin{remark}
    \Cref{thm:irr-nu} actually concerns a family of assumptions with varing $\nu$. However, practically we only require that IRR($\nu$) holds for the specific $\nu$ used in the algorithm of Split LBI.
\end{remark}

\begin{remark}
    \Cref{thm:irr-nu} directly generalizes the Irrepresentable Condition from standard Lasso \citep{ZhaYu06} and OMP/BP \citep{Tropp04}, to the partial Lasso: $\min_{\beta,\gamma} (\ell \left( \beta, \gamma \right) + \lambda\|\gamma\|_1)$. This type conditions are firstly proposed by \citep{Tropp04} for Orthogonal Matching Pursuit (OMP) and Basis Pursuit (BP) in noise free case, in the name of Exact Recovery Condition; later \citet{CaiWan11} extends it to OMP in noisy measurement; \citet{ZhaYu06} establishes it for model selection consistency of Lasso under Gaussian noise while \citet{Wainwright09} extends it to the sub-gaussian; \citet{YuaLin07} and \citet{Zou06} also independently present this condition in other studies. Here following the standard Lasso case \citep{Wainwright09}, one version of the Irrepresentable Condition should be
    \begin{equation*}
        \left\| H_{S^c, (\beta, S)}(\nu) H_{(\beta, S), (\beta, S)}(\nu)^{\dag} \cdot \rho_{(\beta,S)}^{\star} \right\|_{\infty} \le 1 - \eta,\ \text{where}\ \rho_{(\beta,S)}^{\star} = \begin{pmatrix} 0_p\\ \rho_S^{\star} \end{pmatrix}.
    \end{equation*}
    $\rho_{(\beta,S)}^{\star}$ is the value of gradient (subgradient) of $\ell_1$ penalty function $\|\cdot\|_1$ on $(\beta^{\star}; \gamma_S^{\star})$. Here $\rho_{\beta}^{\star} = 0_p$, because $\beta$ is not assumed to be sparse and hence is not penalized. \Cref{thm:irr-nu} slightly strengthens this by a supremum over $\rho$, for uniform sparse recovery independent to a particular sign pattern of $\gamma^\star$. 
\end{remark}

\subsection{Some Equivalent Assumptions}

Recall that in order to obtain path consistency results of standard LBISS and LBI in \citet{osher_sparse_2016}, they propose Restricted Strong Convexity (RSC) and Irrpresentable Condition (IRR) based on their $X^{*} X$, and these assumptions are actually the same as those for Lasso. In a contrast, for Split LBISS and Split LBI, we can propose assumptions based on $\Sigma(\nu)$, i.e. $\Sigma_{S,S}(\nu)$ is positive definite, and $\|\Sigma_{S^c,S}(\nu) \Sigma_{S,S}(\nu)^{-1}\|_{\infty} \le 1 - \eta$. These assumptions actually prove to be equivalent with \Cref{thm:rsc} and \ref{thm:irr-nu} as follows.

\begin{proposition}
    \label{thm:rsc-nu-prime}
    If There exists $C > 0,\ \nu_0 > 0$ such that \cref{eq:rsc-nu} holds for $\nu = \nu_0$, then there exists $C' > 0$ such that for all $\nu > 0$,
    \begin{equation}
        \label{eq:rsc-nu-prime}
        \Sigma_{S,S}(\nu) \succeq \frac{C'}{1 + \nu} I.
    \end{equation}
    Conversely, if there exists $C' > 0,\ \nu_0 > 0$ such that \cref{eq:rsc-nu-prime} holds for $\nu = \nu_0$, then there exists $C > 0$ such that for all $\nu > 0$, \cref{eq:rsc-nu} holds.
\end{proposition}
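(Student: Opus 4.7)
The plan is to use Proposition \ref{thm:rsc-nu} as a bridge: since it already gives the equivalence between \eqref{eq:rsc-nu} (at some $\nu$ vs.\ at all $\nu$) and the standard RSC \eqref{eq:rsc}, it suffices to show that \eqref{eq:rsc-nu} and \eqref{eq:rsc-nu-prime} are equivalent at a single fixed $\nu_0$ (up to constants). The forward and reverse directions of the proposition then follow by chaining through RSC: for instance, \eqref{eq:rsc-nu-prime} at $\nu_0 \Rightarrow$ \eqref{eq:rsc-nu} at $\nu_0$ $\Rightarrow$ RSC $\Rightarrow$ \eqref{eq:rsc-nu} at all $\nu$; and then applying the direct half at each $\nu$ separately gives \eqref{eq:rsc-nu-prime} at all $\nu$.

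The algebraic core is the observation that $\Sigma_{S,S}(\nu)$ is precisely the Schur complement of the $(\beta,\beta)$-block of $H_{(\beta,S),(\beta,S)}(\nu)$. Indeed, the $(\beta,\beta)$-block equals $A/\nu$ with $A = \nu X^{*}X + D^T D$, and $(D A^{\dag} D^T)_{S,S} = D_S A^{\dag} D_S^T$, so $\Sigma_{S,S}(\nu) = (I_s - D_S A^{\dag} D_S^T)/\nu$. Because $\mathrm{Im}(A) = \mathcal{L}$ and $D_S^T \gamma_S \in \mathrm{Im}(D^T) \subseteq \mathcal{L}$, we have $A A^{\dag} D_S^T \gamma_S = D_S^T \gamma_S$, and completing the square in $\beta$ yields, for every $\beta \in \mathbb{R}^p$ and $\gamma_S \in \mathbb{R}^s$, the identity
\begin{equation*}
    (\beta^T, \gamma_S^T) H_{(\beta,S),(\beta,S)}(\nu) \begin{pmatrix} \beta \\ \gamma_S \end{pmatrix} = \tfrac{1}{\nu}\left(\beta - A^{\dag} D_S^T \gamma_S\right)^T A \left(\beta - A^{\dag} D_S^T \gamma_S\right) + \gamma_S^T \Sigma_{S,S}(\nu) \gamma_S.
\end{equation*}

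The easy direction is \eqref{eq:rsc-nu} at $\nu_0 \Rightarrow$ \eqref{eq:rsc-nu-prime} at $\nu_0$: plug $\beta = A^{\dag} D_S^T \gamma_S \in \mathcal{L}$ into \eqref{eq:rsc-nu}; the Schur residual vanishes, so $\gamma_S^T \Sigma_{S,S}(\nu_0)\gamma_S \ge \tfrac{C}{1+\nu_0}\|\gamma_S\|_2^2$. Extending to all $\nu$ is then a chain through RSC as above. For the reverse direction, assume $\Sigma_{S,S}(\nu_0) \succeq c I$. Since $A$ is positive semidefinite with $\mathrm{Im}(A) = \mathcal{L}$, on $\mathcal{L}$ we have $A \succeq \mu I$ with $\mu = \lambda_{\min,+}(A) > 0$; together with $\beta - A^{\dag} D_S^T \gamma_S \in \mathcal{L}$ (when $\beta \in \mathcal{L}$), the identity gives a lower bound of $(\mu/\nu_0)\|\beta - A^{\dag} D_S^T \gamma_S\|_2^2 + c\|\gamma_S\|_2^2$. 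Using $\|\beta - u\|_2^2 \ge \tfrac{1}{2}\|\beta\|_2^2 - \|u\|_2^2$ with $u = A^{\dag} D_S^T \gamma_S$ and $\|u\|_2^2 \le \|A^{\dag} D_S^T\|_2^2 \|\gamma_S\|_2^2$ produces a $-\|\gamma_S\|_2^2$ loss term, which would destroy the bound if swallowed whole; the fix is to split $A \succeq \mu I$ into a fraction $t$ used to control $\|\beta\|_2^2$ and a fraction $(1-t)$ discarded, with $t$ small enough that $c - t\mu\|A^{\dag} D_S^T\|_2^2/\nu_0 \ge c/2$. This produces a bound $\ge \tilde{C}(\|\beta\|_2^2 + \|\gamma_S\|_2^2)$; absorbing the constant $1+\nu_0$ then gives \eqref{eq:rsc-nu} at $\nu_0$.

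The main obstacle is the reverse direction, specifically the bookkeeping around the cross term $\|A^{\dag} D_S^T\|_2^2 \|\gamma_S\|_2^2$ that appears when passing from $\|\beta - A^{\dag} D_S^T \gamma_S\|_2^2$ back to $\|\beta\|_2^2$; once the $t$-splitting trick is in place, everything else is direct.
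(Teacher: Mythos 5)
Your proof is correct, and its algebraic core --- the identity expressing the quadratic form of $H_{(\beta,S),(\beta,S)}(\nu)$ as $\tfrac{1}{\nu}(\beta-A^{\dag}D_S^T\gamma_S)^TA(\beta-A^{\dag}D_S^T\gamma_S)+\gamma_S^T\Sigma_{S,S}(\nu)\gamma_S$ --- is exactly the factorization $H_{(\beta,S),(\beta,S)}=QMQ^T$ that the paper's proof uses, and the forward direction (plugging in $\beta=A^{\dag}D_S^T\gamma_S\in\mathcal{L}$) is identical. The differences are in the bookkeeping. For passing from $\nu_0$ to all $\nu$, the paper uses the direct monotonicity $H_{(\beta,S),(\beta,S)}(\nu)\succeq\tfrac{\nu_0}{\nu_0+\nu}H_{(\beta,S),(\beta,S)}(\nu_0)$, whereas you chain through \Cref{thm:rsc-nu} via RSC; these are interchangeable, and your route has the virtue of reusing an already-proved result. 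In the reverse direction the paper avoids your cross-term problem entirely by invoking the Schur-complement duality of \Cref{thm:schur}: the hypothesis $\Sigma_{S,S}(\nu_0)\succeq\tfrac{C'}{1+\nu_0}H_{S,S}(\nu_0)\cdot\nu_0$ is transferred to the complementary Schur complement, yielding $\nu_0X^{*}X+D_{S^c}^TD_{S^c}\succeq\tfrac{C'\nu_0}{1+\nu_0}A$, and then a separate SVD computation shows $\beta^TA\beta\gtrsim\|\beta\|_2^2$ on $\mathcal{L}$ with explicit constants in $\lambda_D,\Lambda_X,\lambda_1$. You instead take the crude bound $A\succeq\lambda_{\min,+}(A)I$ on $\mathrm{Im}(A)=\mathcal{L}$ and recover $\|\beta\|_2^2$ from $\|\beta-A^{\dag}D_S^T\gamma_S\|_2^2$ by the $t$-splitting trick; this is more elementary and perfectly adequate here, since the proposition only asserts existence of constants, but it yields less quantitative control (your constant degrades with $\|A^{\dag}D_S^T\|_2$ and $\lambda_{\min,+}(A)$, which the paper's argument keeps explicit). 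Both arguments are sound; just make sure you justify $AA^{\dag}D_S^T\gamma_S=D_S^T\gamma_S$ (which follows from $\mathrm{Im}(D^T)\subseteq\mathrm{Im}(A)$) when completing the square, as you do.
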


\begin{proposition}
    \label{thm:irr-nu-prime}
    Under \Cref{thm:rsc}, the left hand side of \cref{eq:irr-nu} in \Cref{thm:irr-nu} becomes $\| \Sigma_{S^c, S}(\nu) \Sigma_{S,S}(\nu)^{-1} \|_{\infty}$, and \cref{eq:irr-nu} is equivalent to
    \begin{equation}
        \label{eq:irr-nu-prime}
        \left\| \Sigma_{S^c, S}(\nu) \Sigma_{S,S}(\nu)^{-1} \right\|_{\infty} \le 1 - \eta.
    \end{equation}
\end{proposition}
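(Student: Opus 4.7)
The plan is to exploit the $2\times 2$ block structure of $H_{(\beta,S),(\beta,S)}(\nu)$ and reduce the left-hand side of \cref{eq:irr-nu} to a product of submatrices of $\Sigma(\nu)$ via a single Schur complement computation. Writing
\begin{equation*}
H_{(\beta,S),(\beta,S)}(\nu) = \begin{pmatrix} A/\nu & -D_S^T/\nu \\ -D_S/\nu & I_s/\nu \end{pmatrix}, \qquad H_{S^c,(\beta,S)}(\nu) = \begin{pmatrix} -D_{S^c}/\nu & 0 \end{pmatrix},
\end{equation*}
with $A = \nu X^{*}X + D^{T}D$ as in \cref{eq:ASigma-def}, the Schur complement of the upper-left block is
\begin{equation*}
\frac{I_s}{\nu} - \frac{D_S}{\nu}(\nu A^{\dag})\frac{D_S^T}{\nu} = \frac{I_s - D_S A^{\dag} D_S^T}{\nu} = \Sigma_{S,S}(\nu),
\end{equation*}
which is positive definite under \Cref{thm:rsc} by \Cref{thm:rsc-nu-prime}, hence invertible.

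Next I would invoke the block inverse formula (with $A^{\dag}$ in place of $A^{-1}$) to read off the action of the pseudoinverse on the right-hand column of interest,
\begin{equation*}
H_{(\beta,S),(\beta,S)}(\nu)^{\dag}\begin{pmatrix} 0_p \\ \rho \end{pmatrix} = \begin{pmatrix} A^{\dag}D_S^T \Sigma_{S,S}(\nu)^{-1}\rho \\ \Sigma_{S,S}(\nu)^{-1}\rho \end{pmatrix},
\end{equation*}
and then premultiply by $H_{S^c,(\beta,S)}(\nu)$ to obtain
\begin{equation*}
-\frac{1}{\nu}D_{S^c}A^{\dag}D_S^T \cdot \Sigma_{S,S}(\nu)^{-1}\rho = \Sigma_{S^c,S}(\nu)\,\Sigma_{S,S}(\nu)^{-1}\rho,
\end{equation*}
where the last equality uses the identity $\Sigma_{S^c,S}(\nu) = -D_{S^c}A^{\dag}D_S^T/\nu$, which in turn follows from $(I_m)_{S^c,S}=0$ and the definition of $\Sigma(\nu)$. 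Taking the supremum over $\rho\in[-1,1]^s$ and using the standard characterization $\|M\|_{\infty} = \sup_{\|\rho\|_{\infty}\le 1}\|M\rho\|_{\infty}$ of the induced $\ell_{\infty}\to\ell_{\infty}$ operator norm then delivers $\|\Sigma_{S^c,S}(\nu)\Sigma_{S,S}(\nu)^{-1}\|_{\infty}$, from which the equivalence with \cref{eq:irr-nu-prime} is immediate.

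The main delicacy is justifying the block inverse formula with pseudoinverses, since $A$ need not be invertible on all of $\mathbb{R}^p$. The cleanest route is to observe that $\begin{pmatrix} 0_p \\ \rho \end{pmatrix}\in\mathcal{L}\oplus\mathbb{R}^s$, on which \Cref{thm:rsc} renders $H_{(\beta,S),(\beta,S)}(\nu)$ positive definite, and to carry out the block inversion within that subspace. The identity $AA^{\dag}D^{T}=D^{T}$, which holds because $\mathrm{Im}(D^T)\subseteq \mathcal{L}=\mathrm{Im}(A)$, ensures that each occurrence of $A^{\dag}$ in the Schur formula composes correctly with the off-diagonal $D^T$ factor; this is the only place the proof goes beyond routine block algebra.
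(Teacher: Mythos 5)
Your proposal is correct and follows essentially the same route as the paper: the same Schur-complement factorization of $H_{(\beta,S),(\beta,S)}(\nu)$ with $\Sigma_{S,S}(\nu)$ as the complement, the same block pseudoinverse formula, and the same identity $\Sigma_{S^c,S}(\nu)=-D_{S^c}A^{\dag}D_S^T/\nu$. The only cosmetic difference is in justifying the pseudoinverse block formula: the paper invokes the rank-additivity criterion (Theorem 1.21 of Zhang) for the generalized Schur inverse, whereas you verify the action directly on $\mathcal{L}\oplus\mathbb{R}^s$ using $AA^{\dag}D^T=D^T$, which amounts to the same thing.
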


\begin{remark}
    From \Cref{thm:rsc-nu} and \ref{thm:irr-nu-prime}, $\Sigma$ seems to be closely related to $H$, which is truly the case. In fact, $\Sigma$ is the Schur complement of $H_{\beta,\beta}$ in $H$.
\end{remark}

\subsection{Comparison Theorem on the Irrepresentable Condition}

We present a comparison theorem showing that IRR($\nu$) can be weaker than IC, a necessary and sufficient for model selection consistency of generalized Lasso \citep{vaiter_robust_2013}. Define $\mathrm{irr}(\nu)$ as the left hand side of \cref{eq:irr-nu} (or equivalently the left hand side of \cref{eq:irr-nu-prime}, due to \Cref{thm:irr-nu-prime}), and
\begin{equation*}
    \mathrm{irr}(0) := \lim_{\nu\rightarrow 0} \mathrm{irr}(\nu),\ \mathrm{irr}(\infty) := \lim_{\nu \rightarrow +\infty} \mathrm{irr}(\nu).
\end{equation*}
Let $W$ be a matrix whose columns form an orthogonal basis of $\ker(D_{S^c})$, and define
\begin{gather*}
    \Omega^S := \left( D_{S^c}^{\dag} \right)^T \left( X^{*} X W \left( W^T X^{*} X W \right)^{\dag} W^T - I \right) D_S^T,\\
    \mathrm{ic}_0 := \left\| \Omega^S \right\|_{\infty},\ \mathrm{ic}_1 := \min_{u\in \ker\left( D_{S^c}^T \right)} \left\| \Omega^S \mathrm{sign} \left( D_S \beta^{\star} \right) - u \right\|_{\infty}.
\end{gather*}
\citet{vaiter_robust_2013} proved the sign consistency of the generalized Lasso estimator of \cref{eq:genlasso} for specifically chosen $\lambda$, under the assumption $\mathrm{ic}_1 < 1$. As we shall see later, the same conclusion holds for our algorithm under the assumption $\mathrm{irr}(\nu) \le 1 - \eta$. \emph{Which assumption is weaker to be satisfied?} The following theorem, with proof in \ref{sec:proof-ic-compare}, answers this.

\begin{theorem}[\textnormal{Comparisons between IRR($\nu$) and IC}]
    \label{thm:ic-compare}
    \
    \begin{enumerate}
        \item
            $\mathrm{ic}_0 \ge \mathrm{ic}_1$.
        \item
            $\mathrm{irr}(0)$ exists, and $\mathrm{irr}(0) = \mathrm{ic}_0$.
        \item
            $\mathrm{irr}(\infty)$ exists, and $\mathrm{irr}(\infty) = 0$ if and only if $\ker(X) \subseteq \ker(D_S)$.
    \end{enumerate}
\end{theorem}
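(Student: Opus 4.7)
Part (1) is immediate. Taking $u = 0 \in \ker(D_{S^c}^T)$ in the minimization defining $\mathrm{ic}_1$ and using sub-multiplicativity of $\|\cdot\|_\infty$ gives
\begin{equation*}
    \mathrm{ic}_1 \;\le\; \| \Omega^S \mathrm{sign}(D_S \beta^\star) \|_\infty \;\le\; \|\Omega^S\|_\infty \cdot \|\mathrm{sign}(D_S \beta^\star)\|_\infty \;\le\; \mathrm{ic}_0,
\end{equation*}
since sign vectors have $\ell_\infty$-norm at most $1$.

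For parts (2) and (3) the plan is to exploit the Schur-complement identity (already noted in the remark after \Cref{thm:irr-nu-prime})
\begin{equation*}
    \nu\,\Sigma(\nu) \;=\; I - D\,A(\nu)^{-1}\,D^T, \qquad A(\nu) \;=\; \nu X^{\ast} X + D^T D,
\end{equation*}
so that $\Sigma_{S^c,S}(\nu)\,\Sigma_{S,S}(\nu)^{-1} \;=\; (\nu\Sigma)_{S^c,S}\,[(\nu\Sigma)_{S,S}]^{-1}$ and the task reduces to computing an appropriate limit of $\nu\Sigma(\nu)$ (together with a subleading term in the case $\nu \to 0$) and comparing submatrices.

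For part (2), I would decompose $\mathbb{R}^p = \ker(D) \oplus \mathrm{range}(D^T)$ and invert $A(\nu)$ in that block structure. The variational representation $\gamma^T \Sigma(\nu)\gamma = \min_\beta\{\|X\beta\|_2^2/n + \|D\beta - \gamma\|_2^2/\nu\}$ makes transparent that $\nu\Sigma(\nu) \to P_{\ker(D^T)}$ at zeroth order and that the $O(\nu)$ correction inside $\mathrm{range}(D)$ is the quadratic form of the constrained least-squares $\min_{D\beta=\gamma}\|X\beta\|_2^2/n$. Expressing this constrained optimum via the ONB $W$ of $\ker(D_{S^c})$ (the natural parametrization of directions with $\gamma_{S^c}=0$) and using the identity $W^T X^{\ast}X W$ on the reduced problem, one identifies the resulting ratio with $\Omega^S = (D_{S^c}^\dag)^T(X^{\ast}XW(W^TX^{\ast}XW)^\dag W^T - I)D_S^T$, yielding $\mathrm{irr}(0) = \|\Omega^S\|_\infty = \mathrm{ic}_0$. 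As a sanity check, $D = I$ recovers the classical Lasso formula $\Omega^S = X_{S^c}^T X_S (X_S^T X_S)^{-1}$.

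For part (3), use the complementary decomposition $\mathbb{R}^p = \ker(X) \oplus \mathrm{range}(X^T)$ with $V$ an ONB of $\ker(X)$. Since $\nu X^{\ast}X$ dominates $D^T D$ on $\mathrm{range}(X^T)$ as $\nu \to \infty$, block inversion gives $\lim_{\nu\to\infty} A(\nu)^{-1} = V(V^T D^T D V)^{-1} V^T$, the invertibility of $V^T D^T D V$ following from $\ker(X) \cap \ker(D) = \{0\}$ (guaranteed on $\mathcal{L}$ by \Cref{thm:rsc}). Hence
\begin{equation*}
    P \;:=\; \lim_{\nu\to\infty}\nu\Sigma(\nu) \;=\; I - D V(V^T D^T D V)^{-1} V^T D^T \;=\; P_{(\mathrm{range}(DV))^\perp}.
\end{equation*}
By \Cref{thm:rsc-nu-prime}, $P_{S,S} \succeq C' I$ is invertible, so $\mathrm{irr}(\infty) = \|P_{S^c,S}\,P_{S,S}^{-1}\|_\infty = 0$ iff $P_{S^c,S} = 0$. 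Symmetry of $P$ plus $P_{S^c,S} = 0$ forces $P$ to be block-diagonal in the $S/S^c$ splitting; combined with $P^2 = P$ and $P_{S,S}$ invertible, this gives $P_{S,S} = I$, hence $\{e_i : i \in S\} \subseteq \mathrm{range}(P) = (\mathrm{range}(DV))^\perp$, i.e., $\mathrm{range}(DV) \subseteq \mathbb{R}^{S^c}$, i.e., $D_S V = 0$, i.e., $\ker(X) \subseteq \ker(D_S)$. The converse follows by the same chain in reverse.

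The main obstacle is part (2): when $\ker(D^T) \ne \{0\}$ both $\Sigma_{S,S}(\nu)$ and $\Sigma_{S^c,S}(\nu)$ may contain $1/\nu$-divergent parts, and extracting the finite limit of their ratio and identifying it with $\Omega^S$ — built from the possibly strictly larger $\ker(D_{S^c}) \supseteq \ker(D)$ rather than $\ker(D)$ itself — requires careful block-matrix bookkeeping and nontrivial use of pseudoinverse identities.
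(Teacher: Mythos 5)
Parts (1) and (3) of your proposal are sound. Part (1) matches the paper's one-line argument. For part (3) you take a genuinely different and cleaner route than the paper: you compute $\lim_{\nu\to\infty}\nu\Sigma(\nu)=P:=I-DV(V^TD^TDV)^{\dag}V^TD^T$ by block-inverting $A(\nu)$ along $\ker(X)\oplus\mathrm{Im}(X^T)$, and then exploit that $P$ is an orthogonal projection with $P_{S,S}$ invertible, so $P_{S^c,S}=0$ forces $P_{S,S}=I$ and hence $D_SV=0$. The paper instead writes the limit via the compact eigendecomposition of $M=NN^T$ and characterizes $\mathrm{irr}(\infty)=0$ through a decomposition $D_S^T=X^TC_1+D_{S^c}^TC_2$ and the condition $\mathrm{Im}(U_{S^c}^TC_2)\subseteq\mathrm{Im}(N)$; your idempotency argument reaches the same conclusion with less computation. (Do take care that $A(\nu)$ is only invertible modulo $\mathcal{L}^{\perp}=\ker(X)\cap\ker(D)$; since $D^T$ maps into $\mathcal{L}$ and $A^{\dag}$ is supported there, this is harmless, but it should be said.)

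The genuine gap is part (2), and you have correctly located it yourself: everything after ``one identifies the resulting ratio with $\Omega^S$'' is asserted, not proved, and that identification is where essentially all of the work in the paper's proof lives. Two specific things are missing. First, $\mathrm{irr}(\nu)=\|\Sigma_{S^c,S}\Sigma_{S,S}^{-1}\|_{\infty}$ is not determined by the behaviour of the quadratic form $\gamma\mapsto\gamma^T\nu\Sigma\gamma$ separately on $\ker(D^T)$ and on $\mathrm{Im}(D)$: the blocks indexed by $S$ and $S^c$ do not respect that splitting, so $\nu\Sigma_{S,S}=(I-U_SU_S^T)+\nu\,U_SMU_S^T+O(\nu^2)$ has a leading term that is in general neither zero nor invertible. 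Inverting it requires the two-scale block inversion the paper performs (diagonalize $F=I-U_SU_S^T=U'\Lambda'U'^T$, then invert $F+\nu G$ with an $O(1/\nu)$ block $Q_\nu/\nu$ on $\ker(F)$), and the limit of $\Sigma_{S^c,S}\Sigma_{S,S}^{-1}$ picks up contributions from \emph{both} scales, namely a $\Lambda'^{-1}$ term and a $K_3^{-1}$ correction; your sketch does not explain how these combine. Second, the target $\Omega^S$ is built from $W$, an orthonormal basis of $\ker(D_{S^c})$, which is strictly larger than $\ker(D)$ in general; connecting the spectral data of $I-U_SU_S^T$ to $\ker(D_{S^c})$ requires a nontrivial identity (the paper proves $\ker(U_{S^c})=\mathrm{Im}(U_S^T\tilde{U}')$ and then $\ker(D_{S^c})=\mathrm{Im}(V\Lambda^{-1}U_S^T\tilde{U}')+\mathrm{Im}(\tilde{V})$), together with a page of pseudoinverse manipulations to match the two expressions. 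Until that computation is carried out, $\mathrm{irr}(0)=\mathrm{ic}_0$ is not established.
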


\begin{figure}
    \centering
    \includegraphics[width = 0.7\textwidth]{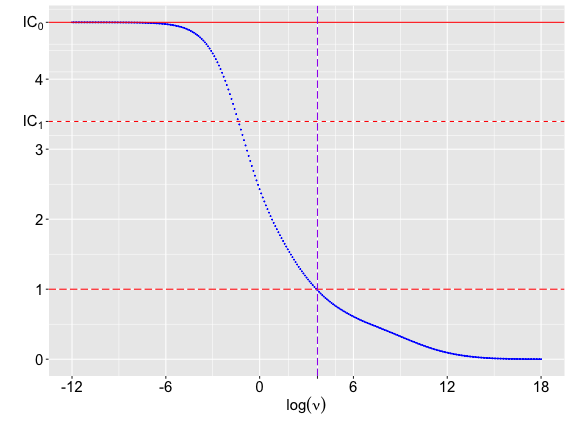}
    \caption{A comparison between our family of Irrepresentable Conditions (IRR($\nu$)) and IC in \citet{vaiter_robust_2013}, with log-scale horizontal axis. As $\nu$ grows, $\mathrm{irr}(\nu)$ can be significantly smaller than $\mathrm{ic}_0$ and $\mathrm{ic}_1$, so that our model selection condition is easier to be met!}
    \label{fig:simu-1dfused-ic-compare}
\end{figure}

From this comparison theorem with a design matrix $X$ of full column rank, as $\nu$ grows, $\mathrm{irr}(\nu)<\mathrm{ic}_1\leq \mathrm{ic}_0$, hence \Cref{thm:irr-nu} is weaker than IC. Now recall the setting of \Cref{thm:simu-lasso-1dfused-path-auc} where $\ker(X)=0$ generically. In \Cref{fig:simu-1dfused-ic-compare}, the (solid and dashed) horizontal red lines denote $\mathrm{ic}_0, \mathrm{ic}_1$, and we see the blue curve denoting $\mathrm{irr}(\nu)$ approaches $\mathrm{ic}_0$ when $\nu \rightarrow 0$ and approaches $0$ when $\nu \rightarrow +\infty$, which illustrates \Cref{thm:ic-compare} (here each of $\mathrm{ic}_0, \mathrm{ic}_1, \mathrm{irr}(\nu)$ is the mean of $100$ values calculated under $100$ generated $X$'s). Although $\mathrm{irr}(0) = \mathrm{ic}_0$ is slightly larger than $\mathrm{ic}_1$, $\mathrm{irr}(\nu)$ can be significantly smaller than $\mathrm{ic}_1$ if $\nu$ is not tiny. On the right side of the vertical line, $\mathrm{irr}(\nu)$ drops below $1$, indicating that \Cref{thm:irr-nu} is satisfied while IC fails.

\begin{remark}
    Despite that \Cref{thm:ic-compare} suggests to adopt a large $\nu$, $\nu$ can not be arbitrarily large, elsewise $C/(1+\nu)$ in \cref{eq:rsc-nu} is small and $\ell$ becomes ``flat'', which will deteriorates the estimator in terms of $\ell_2$ error to be shown later.
\end{remark}

\section{Basic Properties of Paths}
\label{sec:pathproperty}

The following theorem establishes the solution existence as well as uniqueness of Split ISS and Split LBISS, in almost the same way as \citet{osher_sparse_2016}. The proof is given in \ref{sec:proof-slb-basic-properties}.

\begin{theorem}[\textnormal{Existence and uniqueness of solutions}]
    \label{thm:slbiss-exi-uni}
    \
    \begin{enumerate}
        \item
            As for Split ISS \cref{eq:siss}, assume that $\rho(t)$ is right continuously differentiable and $\beta(t), \gamma(t)$ is right continuous. Then a solution exists for $t\ge 0$, with piecewise linear $\rho(t)$ and piecewise constant $\beta(t), \gamma(t)$. Besides, $\rho(t)$ is unique. If additionally $\Sigma_{S(t), S(t)} \succ 0$ for $0\le t\le \tau$, where $\Sigma$ is defined in \cref{eq:ASigma-def} and $S(t) := \mathrm{supp}(\gamma(t))$, then $\beta(t), \gamma(t)$ are unique for $0\le t\le \tau$.
        \item
            As for Split LBISS \cref{eq:slbiss}, assume that $\rho(t), \beta(t)$ are right continuously differentiable. Then there is a unique solution for $t\ge 0$.
    \end{enumerate}
\end{theorem}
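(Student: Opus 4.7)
The plan is to handle the two dynamical systems separately, since their analyses differ substantially. Split LBISS is a classical ODE after a change of variables, so Picard-Lindel\"of suffices, while Split ISS is a differential inclusion with jumps that must be built piecewise and whose uniqueness rests on a monotonicity-based Lyapunov argument. For Split LBISS, I introduce $z(t) := \rho(t) + \gamma(t)/\kappa$, so that the Moreau decomposition \eqref{eq:var-subst} automatically gives $\gamma(t)=\kappa\mathcal{S}(z(t),1)$, $\rho(t) = z(t)-\mathcal{S}(z(t),1)$, and $\rho(t)\in\partial\|\gamma(t)\|_1$. Then \eqref{eq:slbiss-a}--\eqref{eq:slbiss-b} collapse to the coupled ODE
\begin{equation*}
\dot z(t) = -\nabla_\gamma\ell\bigl(\beta(t),\kappa\mathcal{S}(z(t),1)\bigr),\qquad \dot\beta(t) = -\kappa\,\nabla_\beta\ell\bigl(\beta(t),\kappa\mathcal{S}(z(t),1)\bigr),
\end{equation*}
with $z(0)=0,\ \beta(0)=0$. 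Since $\nabla\ell$ is affine in $(\beta,\gamma)$ and $\mathcal{S}(\cdot,1)$ is $1$-Lipschitz, the right-hand side is globally Lipschitz in $(\beta,z)$, so Picard-Lindel\"of yields a unique $C^1$ solution for all $t\ge 0$; $\rho(t)$ is then right continuously differentiable because $\mathcal{S}(\cdot,1)$ is piecewise affine.

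For Split ISS I first build a solution explicitly. Using the reduced form \eqref{eq:siss-gamma-a}, write $\dot\rho(t) = -\Sigma\gamma(t)+c$ with $c := \Sigma^{1/2}\Sigma^{\dag 1/2}DA^\dag X^*y$. Starting from $\rho(0)=\gamma(0)=0$, set $\rho(t)=tc$ on $[0,t_1)$, where $t_1>0$ is the first time some coordinate of $\rho$ reaches $\pm 1$. At $t_1$ activate that coordinate into $S(t_1)$ and determine a new constant value for $\gamma_{S(t_1)}$ from the coupled relations $\rho_{S(t_1)} = \sign(\gamma_{S(t_1)})$ and $\dot\rho(t) = -\Sigma\gamma(t)+c$; under the hypothesis $\Sigma_{S(t),S(t)}\succ 0$, this inversion is unique. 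Continuing $\rho$ linearly with slope $-\Sigma\gamma+c$ until the next hitting time and iterating yields a global piecewise linear $\rho$ and piecewise constant $(\beta,\gamma)$.

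For uniqueness of $\rho(t)$, let $(\rho^{(i)},\gamma^{(i)})$ be two candidate solutions sharing the initial data and consider the symmetric Bregman divergence
\begin{equation*}
V(t) := \bigl\langle \rho^{(1)}(t)-\rho^{(2)}(t),\ \gamma^{(1)}(t)-\gamma^{(2)}(t)\bigr\rangle,
\end{equation*}
which is non-negative by monotonicity of $\partial\|\cdot\|_1$ and vanishes at $t=0$. Between kinks $\gamma^{(1)},\gamma^{(2)}$ are constant and $\dot V(t) = -(\gamma^{(1)}-\gamma^{(2)})^T\Sigma(\gamma^{(1)}-\gamma^{(2)})\le 0$; jumps of $V$ at kink times are non-positive by the two-sided subgradient inclusion $\rho^{(i)}(t)\in\partial\|\gamma^{(i)}(t^\pm)\|_1$. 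Combining $V\ge 0$, $V(0)=0$ and these facts forces $V\equiv 0$ and $\Sigma(\gamma^{(1)}-\gamma^{(2)})\equiv 0$, whence $\dot\rho^{(1)}=\dot\rho^{(2)}$ and $\rho^{(1)}\equiv\rho^{(2)}$. Under the additional hypothesis $\Sigma_{S(t),S(t)}\succ 0$ on $[0,\tau]$, the active-set equations invert uniquely to recover $\gamma(t)$, and \eqref{eq:siss-a} then gives $\beta(t) = A^\dag(\nu X^*y + D^T\gamma(t))$. The main obstacle is the kink-time jump analysis for the Split ISS uniqueness: the subdifferential is multi-valued and $\gamma$ is discontinuous, so Gr\"onwall is unavailable and the Bregman-divergence argument adapted from \citet{osher_sparse_2016} must be executed with care, in particular by merging the kink sets of the two solutions into a single increasing partition and controlling the sign of each jump of $V$ across it.
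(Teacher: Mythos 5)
Your treatment of Split LBISS coincides with the paper's: the substitution $z(t)=\rho(t)+\gamma(t)/\kappa$ together with the Moreau decomposition \cref{eq:var-subst} turns \cref{eq:slbiss} into a globally Lipschitz ODE in $(\beta,z)$, and Picard--Lindel\"of finishes. For Split ISS you also correctly eliminate $\beta$ via $\beta(t)=A^{\dag}(\nu X^{*}y+D^{T}\gamma(t))$ (using $\beta(t)\in\mathcal{L}=\mathrm{Im}(A^{\dag})$) and reduce to the $\gamma$-dynamics \cref{eq:siss-gamma}; the paper stops there and invokes the existence/uniqueness theory of the standard ISS from \citet{osher_sparse_2016}, with $X,y$ replaced by $\sqrt{n}\,\Sigma^{1/2}$ and $\sqrt{n}\,\Sigma^{\dag 1/2}DA^{\dag}X^{*}y$. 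You instead attempt to reprove that theory, and this is where the gaps lie.

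On existence, determining the new constant value of $\gamma_{S(t_1)}$ by inverting the active-set equations (i) presupposes $\Sigma_{S(t),S(t)}\succ 0$, which the theorem assumes only for uniqueness of $\gamma,\beta$ and not for existence, and (ii) skips the sign-consistency issue: the computed $\gamma_{S}$ must satisfy $\mathrm{sign}(\gamma_j)=\rho_j$ on the active set, and when it does not, coordinates must be dropped rather than activated. Handling this is the whole content of the LARS-type construction; the robust device is to define $\gamma(t)$ as a minimizer of the sign-constrained problem \cref{eq:siss-opt}, which exists even when $\Sigma_{S,S}$ is singular. More seriously, the uniqueness step fails as justified: the claim that jumps of $V(t)=\langle\rho^{(1)}-\rho^{(2)},\gamma^{(1)}-\gamma^{(2)}\rangle$ are non-positive does not follow from the two-sided subgradient inclusions alone. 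Take $m=1$, $\rho^{(1)}(t)=1$, $\rho^{(2)}(t)=1/2$, $\gamma^{(1)}$ jumping from $0$ to $a>0$, $\gamma^{(2)}\equiv 0$: every inclusion you invoke holds on both sides of $t$, yet $V(t^-)=0$ and $V(t^+)=a/2>0$. Monotonicity of $\partial\|\cdot\|_1$ bounds $V(t^{\pm})\ge 0$ separately but says nothing about their difference; ruling out such configurations already requires knowing $\rho^{(1)}=\rho^{(2)}$, which is the statement being proved. The argument that actually works (and underlies the citation) is variational: any solution's $\gamma(t)$ minimizes \cref{eq:siss-opt}, strict convexity of the loss in the fit forces $\Sigma^{1/2}\gamma(t)$ --- hence $\dot{\rho}(t)$ --- to be a single-valued function of $\rho(t)$, and an induction over the merged kink times then yields $\rho^{(1)}\equiv\rho^{(2)}$.
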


The following theorem states that along the solution path of either differential inclusions or iterative algorithms, the loss function is always non-increasing. Its proof is provided in \ref{sec:proof-slb-basic-properties}.

\begin{theorem}[\textnormal{Non-increasing loss along the paths}]
    \label{thm:slb-l-dec}
    Consider the loss function $\ell$ defined in \cref{eq:l-def}.
    \begin{enumerate}
        \item
            For a solution $(\rho(t), \beta(t), \gamma(t))$ of Split ISS \cref{eq:siss}, $\ell(\beta(t), \gamma(t))$ is non-increasing in $t$.
        \item
            For a solution $(\rho(t), \beta(t), \gamma(t))$ of Split LBISS \cref{eq:slbiss}, $\ell(\beta(t), \gamma(t))$ is non-increasing in $t$.
        \item
            For a solution $(\rho_k, \beta_k, \gamma_k)$ of Split LBI \cref{eq:slbi}, $\ell(\beta_k, \gamma_k)$ is non-increasing in $k$, if
            \begin{equation}
                \label{eq:kappastep-upper}
                \kappa \alpha \| H \|_2 \le 2.
            \end{equation}
            Moreover, $\left\| H \right\|_2 \le 2 \left( 1 + \nu \Lambda_X^2 + \Lambda_D^2 \right) / \nu$ holds, so \cref{eq:kappastep-upper} holds if
            \begin{equation}
                \label{eq:kappastep-upper-2}
                \kappa \alpha \le \nu / (1 + \nu \Lambda_X^2 + \Lambda_D^2).
            \end{equation}
    \end{enumerate}
\end{theorem}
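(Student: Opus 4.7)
The plan is to handle the three cases separately, but all rely on the same fundamental ingredient: the monotonicity of the subdifferential map $\partial\|\cdot\|_1$. That is, since $\rho(t)\in\partial\|\gamma(t)\|_1$, if I add the two convex-function inequalities $\|\gamma(t_2)\|_1-\|\gamma(t_1)\|_1\ge\langle\rho(t_1),\gamma(t_2)-\gamma(t_1)\rangle$ and the reverse, I obtain $\langle\rho(t_2)-\rho(t_1),\gamma(t_2)-\gamma(t_1)\rangle\ge 0$. In the continuous setting, dividing by $t_2-t_1\to 0^+$ gives $\langle\dot\rho(t),\dot\gamma(t)\rangle\ge 0$; in the discrete setting, it gives $\langle\rho_{k+1}-\rho_k,\gamma_{k+1}-\gamma_k\rangle\ge 0$.

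For Split ISS \cref{eq:siss}, I would differentiate $\ell$ along the path:
\begin{equation*}
\tfrac{d}{dt}\ell(\beta(t),\gamma(t)) = \langle\nabla_\beta\ell,\dot\beta\rangle + \langle\nabla_\gamma\ell,\dot\gamma\rangle.
\end{equation*}
The first term vanishes by \cref{eq:siss-a}. The second term equals $-\langle\dot\rho,\dot\gamma\rangle\le 0$ by \cref{eq:siss-b} and monotonicity. For Split LBISS \cref{eq:slbiss}, substitute $\nabla_\beta\ell=-\dot\beta/\kappa$ from \cref{eq:slbiss-a} and $\nabla_\gamma\ell=-\dot\rho-\dot\gamma/\kappa$ from \cref{eq:slbiss-b} to obtain
\begin{equation*}
\tfrac{d}{dt}\ell = -\tfrac{1}{\kappa}\|\dot\beta\|_2^2 - \langle\dot\rho,\dot\gamma\rangle - \tfrac{1}{\kappa}\|\dot\gamma\|_2^2 \le 0.
\end{equation*}

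For the discrete Split LBI \cref{eq:slbi}, since $\ell$ is quadratic with Hessian $H$, the exact descent identity
\begin{equation*}
\ell(\beta_{k+1},\gamma_{k+1}) - \ell(\beta_k,\gamma_k) = \langle \nabla\ell(\beta_k,\gamma_k),\Delta\rangle + \tfrac{1}{2}\Delta^T H\Delta,\qquad \Delta=\binom{\beta_{k+1}-\beta_k}{\gamma_{k+1}-\gamma_k},
\end{equation*}
holds. From \cref{eq:slbi-a}, $\langle\nabla_\beta\ell,\beta_{k+1}-\beta_k\rangle = -\|\beta_{k+1}-\beta_k\|_2^2/(\kappa\alpha)$. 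From \cref{eq:slbi-b}, $\nabla_\gamma\ell = -(\rho_{k+1}-\rho_k + (\gamma_{k+1}-\gamma_k)/\kappa)/\alpha$, so
\begin{equation*}
\langle\nabla_\gamma\ell,\gamma_{k+1}-\gamma_k\rangle = -\tfrac{1}{\alpha}\langle\rho_{k+1}-\rho_k,\gamma_{k+1}-\gamma_k\rangle - \tfrac{1}{\kappa\alpha}\|\gamma_{k+1}-\gamma_k\|_2^2.
\end{equation*}
Using monotonicity of $\partial\|\cdot\|_1$ to drop the first term yields $\langle\nabla\ell,\Delta\rangle\le -\|\Delta\|_2^2/(\kappa\alpha)$. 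Combining with $\tfrac{1}{2}\Delta^T H\Delta\le\tfrac{1}{2}\|H\|_2\|\Delta\|_2^2$, the descent gap is bounded by $(\tfrac{1}{2}\|H\|_2 - 1/(\kappa\alpha))\|\Delta\|_2^2$, which is nonpositive precisely under \cref{eq:kappastep-upper}.

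Finally, I would bound $\|H\|_2$ by the decomposition
\begin{equation*}
H = \begin{pmatrix} X^*X & 0\\ 0 & 0\end{pmatrix} + \tfrac{1}{\nu}\begin{pmatrix} D^T\\ -I\end{pmatrix}\begin{pmatrix} D & -I\end{pmatrix},
\end{equation*}
so by the triangle inequality $\|H\|_2 \le \Lambda_X^2 + (1+\Lambda_D^2)/\nu \le 2(1+\nu\Lambda_X^2+\Lambda_D^2)/\nu$, from which \cref{eq:kappastep-upper-2} implies \cref{eq:kappastep-upper}. There is no real obstacle here; the only nontrivial observation is the monotonicity inequality for the subgradient, and after that every step is a direct computation. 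The cleanest presentation will be to state the monotonicity lemma first and then uniformly apply it across the three cases.
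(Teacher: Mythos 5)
Your treatments of Split LBISS and of the discrete Split LBI are correct and essentially the paper's own arguments: for LBISS the paper uses the exact identity $\dot\gamma_j(t)\,\dot\rho_j(t)\equiv 0$ where you use the weaker monotonicity inequality $\langle\dot\rho,\dot\gamma\rangle\ge 0$ (both suffice), and for Split LBI your descent identity plus the subgradient monotonicity is exactly the paper's computation; your block decomposition of $H$ even gives a slightly sharper route to $\|H\|_2\le 2(1+\nu\Lambda_X^2+\Lambda_D^2)/\nu$. The gap is in part 1. For Split ISS \cref{eq:siss} the solution components $\beta(t),\gamma(t)$ are only required to be right continuous, and Theorem~\ref{thm:slbiss-exi-uni} shows they are in fact piecewise \emph{constant} with jumps; only $\rho(t)$ is differentiable. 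Your chain-rule computation $\tfrac{d}{dt}\ell=\langle\nabla_\beta\ell,\dot\beta\rangle+\langle\nabla_\gamma\ell,\dot\gamma\rangle$ therefore does not apply: between jumps it reads $0=0$ and only proves local constancy, while at the jump times of $\gamma$ --- which is precisely where the loss could a priori increase --- $\dot\gamma$ does not exist and the limiting inequality $\langle\dot\rho,\dot\gamma\rangle\ge 0$ is meaningless. The whole content of the ISS case is the behaviour across these jumps, and the differential argument says nothing about them.

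The paper closes this by a variational argument rather than a differential one: imitating Theorem 2.1 of \citet{osher_sparse_2016}, one shows that $(\beta(t),\gamma(t))$ minimizes $\ell$ over the sign-constrained set $\{\gamma:\ \gamma_j\ge 0\ \text{if}\ \rho_j(t)=1,\ \gamma_j\le 0\ \text{if}\ \rho_j(t)=-1,\ \gamma_j=0\ \text{if}\ |\rho_j(t)|<1\}$. Since $\rho(\cdot)$ is continuous, every $(\beta(\tau),\gamma(\tau))$ with $\tau$ in a neighborhood of $t$ is feasible for the problem at $t$, hence $\ell(\beta(\tau),\gamma(\tau))\ge\ell(\beta(t),\gamma(t))$, so every $t$ is a local minimum of the right-continuous map $t\mapsto\ell(\beta(t),\gamma(t))$, which forces that map to be non-increasing. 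You need an argument of this type (or an explicit comparison of $\ell$ across each jump); the pointwise derivative computation alone cannot deliver part 1.
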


\section{Path Consistency of Split LBISS and Split LBI}
\label{sec:pathconsistency}

\subsection{Consistency of Split LBISS}
\label{sec:slbiss-cstc}

The following theorem, with proof in \ref{sec:proof-slbiss-cstc}, says that under \Cref{thm:rsc} and \ref{thm:irr-nu}, Split LBISS will automatically evolve in the ``\emph{oracle}'' subspace (unknown to us) restricted within the support set of $(\beta^\star,\gamma^\star)$ before leaving it, and if the signal parameters is strong enough, sign consistency will be reached. Moreover, $\ell_2$ error bounds on $\gamma(t)$ and $\beta(t)$ are given.

\begin{theorem}[Consistency of Split LBISS]
%\begin{thmref}[]{Theorem}{thm:slbiss-cstc-show}[Consistency of Split LBISS]
    \label[theorem]{thm:slbiss-cstc}
    Under \Cref{thm:rsc} and \ref{thm:irr-nu}, define $\lambda_H = C / (1 + \nu)$ (from \cref{eq:rsc-nu}) and suppose that $\kappa$ is large so that
    \begin{multline}
        \label{eq:kappa-lower}
        \kappa \ge \frac{4}{\eta} \left( 1 + \frac{1}{\lambda_D} + \frac{\Lambda_X}{\lambda_1 \lambda_D} \right) \left( 1 + \sqrt{\frac{2 \left( 1 + \nu \Lambda_X^2 + \Lambda_D^2 \right)}{\lambda_H \nu}} \right) \\
        \cdot \left( (1 + \Lambda_D) \left\| \beta^{\star} \right\|_2 + \frac{2\sigma}{\lambda_H} \left( \frac{\Lambda_X}{\lambda_D} + \frac{\Lambda_X}{\lambda_D^2} + \frac{\lambda_H \lambda_D^2 + \Lambda_X^2}{\lambda_1 \lambda_D^2} \right) \right),
    \end{multline}
    Let
    \begin{equation}
        \label{eq:tau-bar-def}
        \bar{\tau} := \frac{\eta}{8\sigma} \cdot \frac{\lambda_D}{\Lambda_X} \sqrt{\frac{n}{\log m}}.
    \end{equation}
    Then with probability not less than $1 - 6/m - 3\exp(-4n/5)$, we have all the following properties.
    \begin{enumerate}
        \item
            \emph{No-false-positive}: The solution has no false-positive, i.e. $\mathrm{supp}(\gamma(t)) \subseteq S$, for $0 \le t \le \overline{\tau}$.
        \item
            \emph{Sign consistency of $\gamma(t)$}: Once the signal is strong enough such that
            \begin{equation}
                \label{eq:slbiss-gammamin-cond}
                \gamma_{\min}^{\star} := \left( D_S \beta^{\star} \right)_{\min} \ge \frac{16\sigma}{\eta \lambda_H} \cdot \frac{\Lambda_X \Lambda_D}{\lambda_D^2} \left( 2\log s + 5 + \log (8 \Lambda_D) \right) \sqrt{\frac{\log m}{n}},
            \end{equation}
            then $\gamma(t)$ has sign consistency at $\bar{\tau}$, i.e. $ \mathrm{sign}( \gamma(\bar{\tau}) ) = \mathrm{sign} ( D \beta^{\star} )$. 
        \item
            \emph{$\ell_2$ consistency of $\gamma(t)$}:
            \begin{equation*}
                \left\| \gamma\left( \bar{\tau} \right) - D \beta^{\star} \right\|_2 \le \frac{42\sigma}{\eta \lambda_H} \cdot \frac{\Lambda_X}{\lambda_D} \sqrt{\frac{s\log m}{n}}.
            \end{equation*}
        \item
            \emph{$\ell_2$ ``consistency'' of $\beta(t)$}:
            \begin{multline*}
                \left\| \beta\left( \bar{\tau} \right) - \beta^{\star} \right\|_2 \le \frac{42\sigma}{\eta \lambda_H} \cdot \frac{\lambda_1 \Lambda_X (1 + \lambda_D) + \Lambda_X^2}{\lambda_1 \lambda_D^2} \sqrt{\frac{s\log m}{n}}\\
                + \frac{2\sigma}{\lambda_1} \sqrt{\frac{r'\log m}{n}} + \nu \cdot 2\sigma \cdot \frac{\lambda_1 \Lambda_X + \Lambda_X^2}{\lambda_1 \lambda_D^2},
            \end{multline*}
            where
            \begin{equation}
                \label{eq:r-prime-def}
                r' = \dim (\{ X \beta:\ \beta \in \ker(D) \}),
            \end{equation}
            which is very small in most cases. 
    \end{enumerate}
%\end{thmref}
\end{theorem}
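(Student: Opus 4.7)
The plan is to follow an oracle dynamics strategy adapted to the split structure, mirroring the approach of \citet{osher_sparse_2016} for standard LBISS. I would first introduce a restricted ``oracle'' version of Split LBISS in which the sparsity pattern $\tilde{\gamma}_{S^c}(t) \equiv 0$ is enforced by construction. By \Cref{thm:rsc-nu}, the Hessian restricted to the oracle subspace has smallest eigenvalue at least $\lambda_H = C/(1+\nu)$, so this restricted linear differential inclusion admits a unique, well-behaved solution whose $(\tilde{\beta}, \tilde{\gamma}_S)$ component converges exponentially (at rate roughly $\kappa \lambda_H$) to an oracle estimator $(\hat{\beta}^{or}, \hat{\gamma}^{or})$ living on the oracle subspace.

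The key task is then to verify that $\|\tilde{\rho}_{S^c}(t)\|_{\infty} < 1$ strictly for all $t \in [0, \bar{\tau}]$, since by the uniqueness statement in \Cref{thm:slbiss-exi-uni} this forces the full Split LBISS solution to coincide with the oracle solution on $[0,\bar{\tau}]$, giving the no-false-positive claim. Integrating the $S^c$-component of \cref{eq:slbiss-b} expresses $\tilde{\rho}_{S^c}(t)$ as a sum of a ``signal'' part whose steady-state value equals $H_{S^c,(\beta,S)}H_{(\beta,S),(\beta,S)}^{\dagger}(0_p;\rho_S^{\star})$ and is bounded by $1-\eta$ thanks to IRR($\nu$); a transient term decaying exponentially and absorbed by the lower bound \cref{eq:kappa-lower} on $\kappa$; and a noise term driven by $\epsilon$, bounded by standard sub-Gaussian concentration for $\|X^{*}\epsilon\|_{\infty}$ and similar quantities, which scales like $\sigma \Lambda_X \sqrt{\log m / n}$ with probability at least $1 - O(1/m)$ via a union bound over the $m$ coordinates. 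The stopping time $\bar{\tau} \asymp \sqrt{n/\log m}$ is calibrated precisely so that the accumulated noise contribution stays safely below an $O(\eta)$ threshold throughout $[0,\bar{\tau}]$.

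With no-false-positive in hand, the $\ell_2$ bound on $\gamma(\bar{\tau})$ follows by combining restricted strong convexity with the same noise bounds: the residual between $\tilde{\gamma}(\bar{\tau})$ and the oracle estimator $\hat{\gamma}^{or}$ decays exponentially and is made negligible by \cref{eq:kappa-lower}, while the oracle deviation $\hat{\gamma}^{or} - \gamma^{\star}$ solves a linear system on $S$ driven by a sub-Gaussian right-hand side, yielding an error of order $\sigma \sqrt{s\log m/n}/(\eta \lambda_H)$. The bound on $\beta(\bar{\tau})$ picks up an extra $\sqrt{r'\log m/n}$ term from the $\ker(D)$-component of $\beta$, which is unregularized and therefore directly contaminated by noise, plus a $\nu$-order bias reflecting the splitting gap $\|\gamma - D\beta\|_2^2/(2\nu)$ that disappears only as $\nu\to 0$. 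Sign consistency is then obtained by upgrading the $\ell_2$ estimate to an $\ell_{\infty}$ control on $\tilde{\gamma}(\bar{\tau}) - \gamma^{\star}$ using entrywise bounds on the restricted Hessian inverse, and comparing with the $\gamma_{\min}^{\star}$ lower bound in \cref{eq:slbiss-gammamin-cond}.

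The main obstacle is controlling $\|\tilde{\rho}_{S^c}(t)\|_{\infty}$ uniformly on $[0,\bar{\tau}]$: the decomposition into signal, transient, and noise pieces must be carried out so that each contribution consumes only a fraction of the IRR slack $\eta$. The elaborate multi-factor constant in \cref{eq:kappa-lower} is precisely what is needed to suppress the transient part (driven by how far the finite-$\kappa$ oracle dynamics starts from its Split ISS limit) to an $O(\eta)$ share of the budget, while the choice of $\bar{\tau}$ trades noise accumulation against closeness to the saddle point depicted in \Cref{fig:saddle}. A secondary challenge is the $\ell_{\infty}$ rather than $\ell_2$ control needed for sign consistency, since a naive $\sqrt{s}$-loss $\ell_{\infty}$-from-$\ell_2$ bound would be too loose against the $\gamma_{\min}^{\star}$ threshold in \cref{eq:slbiss-gammamin-cond}.
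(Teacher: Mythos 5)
Your overall architecture — oracle dynamics with $\gamma_{S^c}\equiv 0$, a no-false-positive argument that integrates the $S^c$-component of the inclusion and splits $\rho_{S^c}(t)$ into an IRR-controlled signal part ($\le 1-\eta$), a $1/\kappa$-suppressed path term, and a sub-Gaussian noise term calibrated against $\bar{\tau}\asymp\sqrt{n/\log m}$, followed by a path-to-oracle plus oracle-to-truth error decomposition — is exactly the paper's strategy (Lemmas \ref{thm:slbiss-nfp} and \ref{thm:orc-minus-star}). However, there is a genuine gap in your treatment of the convergence of the oracle dynamics to the oracle estimator. The restricted dynamics are not a linear ODE: they still carry the subgradient constraint $\rho_S'(t)\in\partial\|\gamma_S'(t)\|_1$, and the distance $d(t)$ to $(\beta^o,\gamma^o)$ does \emph{not} decay exponentially at rate $\kappa\lambda_H$ from the start. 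The paper controls it through the potential $\Psi(t)=D^{\rho_S'(t)}(\gamma_S^o,\gamma_S'(t))+d(t)^2/(2\kappa)$ and the generalized Bihari inequality (Lemma \ref{thm:slbiss-orc-gbi}), whose piecewise comparison function $F$ yields only $d(t)\lesssim\sqrt{s}/(\lambda_H t)$ in general, with exponential decay confined to the terminal regime. This is not a cosmetic point: at $t=\bar{\tau}$ the residual $d(\bar{\tau})\le 5\sqrt{s}/(\lambda_H\bar{\tau})$ is the \emph{dominant} contribution to the $\ell_2$ bound on $\gamma(\bar{\tau})-D\beta^{\star}$ (it supplies the $1/\eta$ factor and roughly $40$ of the $42$ in the constant), whereas you describe it as "made negligible by \cref{eq:kappa-lower}." Under your exponential-decay picture the residual would vanish and you would derive a bound without the $1/\eta$ factor — which the actual dynamics do not deliver.

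The same gap propagates to sign consistency. The paper does not upgrade an $\ell_2$ bound via entrywise bounds on the restricted Hessian inverse; instead it shows via the Bihari analysis that $d(t)\le\mu\gamma_{\min}^o$ once $t$ exceeds $\tau_{\infty}(\mu)\approx(2\log s+4+d(0)/\kappa)/(\lambda_H\gamma_{\min}^o)$, and the signal-strength condition \cref{eq:slbiss-gammamin-cond} is precisely the requirement $\bar{\tau}\ge\tau_{\infty}(\mu)$ — this is where the $2\log s+5+\log(8\Lambda_D)$ factor originates. Your proposed mechanism gives no account of that logarithmic factor and would require an $\ell_{\infty}$ control along the path that you have not established. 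To close the proof you need the potential-function/Bihari machinery (or an equivalent device) in place of the exponential-convergence claim.
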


Despite that the sign consistency of $\gamma(t)$ can be established here, usually one can not expect $D\beta(t)$ recovers the sparsity pattern of $\gamma^\star$ due to the variable splitting. As shown in the last term of the $\ell_2$ error bound of $\beta(t)$, increasing $\nu$ will sacrifice its accuracy, as to achieve the minimax optimal $\ell_2$ error rate one needs $\nu=O(\sqrt{(s \log m) /n})$. However, one can remedy this by projecting $\beta(t)$ on to a subspace using the support set of $\gamma(t)$, and obtain a good estimator $\tilde{\beta}(t)$ with both sign consistency and $\ell_2$ consistency at the minimax optimal rates. This leads to the following theorem.

\begin{theorem}[Consistency of revised Split LBISS]
%\begin{thmref}[]{Theorem}{thm:slbiss-rev-cstc-show}[Consistency of revised Split LBISS]
    \label[theorem]{thm:slbiss-rev-cstc}
    Under \Cref{thm:rsc} and \ref{thm:irr-nu}, define $\lambda_H = C / (1 + \nu)$ (from \cref{eq:rsc-nu}) and suppose that $\kappa$ satisfies \cref{eq:kappa-lower}. Define $\bar{\tau}$ the same as in \Cref{thm:slbiss-cstc}, and define
    \begin{equation*}
        S(t) := \mathrm{supp}(\gamma(t)),\ P_{S(t)} := P_{\ker\left( D_{S(t)^c} \right)} = I - D_{S(t)^c}^{\dag} D_{S(t)^c},\ \tilde{\beta}(t) := P_{S(t)} \beta(t).
    \end{equation*}
    If $S(t)^c = \varnothing$, define $P_{S(t)} = I$. Then we have the following properties.
    \begin{enumerate}
        \item
            \emph{Sign consistency of $\tilde{\beta}(t)$}: Once \cref{eq:slbiss-gammamin-cond} holds, then with probability not less than $1 - 8/m - 3\exp(-4n/5)$, there holds $\mathrm{sign}( D \tilde{\beta}(\bar{\tau}) ) = \mathrm{sign}( D \beta^{\star} )$.
        \item
            \emph{$\ell_2$ consistency of $\tilde{\beta}(t)$}: With probability not less than $1 - 8/m - 2r'/m^2 - 3\exp(-4n/5)$, we have
            \begin{multline*}
                \left\| \tilde{\beta}\left( \bar{\tau} \right) - \beta^{\star} \right\|_2 \le \frac{80\sigma}{\eta \lambda_H} \cdot \frac{\Lambda_X \left( \Lambda_D + \lambda_D^2 \right)}{\lambda_D^3} \sqrt{\frac{s\log m}{n}}\\
                + \frac{2\sigma}{\lambda_H} \left( \frac{\Lambda_X}{\lambda_D^2} + \frac{\lambda_H \lambda_D^2 + \Lambda_X^2}{\lambda_1 \lambda_D^2} \right) \sqrt{\frac{r' \log m}{n}} + 2 \left\| D_{S(\bar{\tau})^c}^{\dag} D_{S(\bar{\tau})^c\cap S} \beta^{\star} \right\|_2,
            \end{multline*}
            where $r'$ is defined in \cref{eq:r-prime-def}. If additionally $S(\bar{\tau}) \supseteq S$, then the last term on the right hand side drops.
    \end{enumerate}
%\end{thmref}
\end{theorem}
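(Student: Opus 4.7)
The plan is to bootstrap from \Cref{thm:slbiss-cstc} and to exploit the projection identity $\tilde\beta(t) = P_{\ker(D_{S(t)^c})}\beta(t)$: the projection both transfers sign consistency from $\gamma(\bar\tau)$ to $D\tilde\beta(\bar\tau)$ and, crucially, strips the $\nu$-dependent bias present in the $\ell_2$ bound for $\beta(\bar\tau)$ (the third term of the bound in \Cref{thm:slbiss-cstc}). For the sign-consistency claim, the signal-strength assumption \cref{eq:slbiss-gammamin-cond} together with \Cref{thm:slbiss-cstc} yields $\sign(\gamma(\bar\tau)) = \sign(D\beta^\star)$ on an event of probability at least $1 - 6/m - 3\exp(-4n/5)$, forcing $S(\bar\tau) = S$. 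Writing $L := \ker(D_{S^c})$, we then have $P_{S(\bar\tau)} = P_L$; since $D_{S^c}\beta^\star = 0$ the vector $\beta^\star$ itself lies in $L$, so $\tilde\beta(\bar\tau) - \beta^\star = P_L(\beta(\bar\tau) - \beta^\star)$ and $D_{S^c}\tilde\beta(\bar\tau) = 0$ is automatic. For the $S$-block I would establish a sharp $\ell_\infty$ bound on $DP_L(\beta(\bar\tau) - \beta^\star)$ of order $\sqrt{\log m/n}$, either by direct coordinate-wise sub-Gaussian concentration on the noise term or by exploiting the closeness of $D\tilde\beta(\bar\tau)$ to $\gamma(\bar\tau)$ via the small-loss property (\Cref{thm:slb-l-dec}), and then compare with the $\gamma^\star_{\min}$ lower bound to obtain strict sign agreement.

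For the $\ell_2$ claim, the decomposition
\begin{equation*}
\tilde\beta(\bar\tau) - \beta^\star = P_{S(\bar\tau)}\bigl(\beta(\bar\tau) - \beta^\star\bigr) + \bigl(P_{S(\bar\tau)} - I\bigr)\beta^\star
\end{equation*}
isolates a support-mismatch term $(P_{S(\bar\tau)} - I)\beta^\star = -D_{S(\bar\tau)^c}^\dag D_{S(\bar\tau)^c\cap S}\beta^\star$; here I use no-false-positive ($S(\bar\tau) \subseteq S$, from \Cref{thm:slbiss-cstc}) to reduce $D_{S(\bar\tau)^c}\beta^\star$ to its $S(\bar\tau)^c\cap S$ entries, which accounts for the third (deterministic) term in the stated bound and vanishes when $S(\bar\tau)\supseteq S$. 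For the main term, I would use the near-KKT relation $\nabla_\beta\ell(\beta(\bar\tau),\gamma(\bar\tau))\approx 0$ (valid because $\kappa$ is large via \cref{eq:kappa-lower}) to write $\beta(\bar\tau) - \beta^\star \approx A^\dag(\nu X^*\epsilon) + A^\dag D^T(\gamma(\bar\tau) - \gamma^\star)$ with $A = \nu X^*X + D^T D$. The first summand is what produced the offending $\nu$-bias in \Cref{thm:slbiss-cstc}, and I expect $\nu P_L A^\dag X^*\epsilon$ to collapse to a $\nu$-independent oracle-least-squares noise term of $\ell_2$ order $\sigma\sqrt{r'\log m/n}$ via sub-Gaussian concentration on the $r'$-dimensional subspace of \cref{eq:r-prime-def}. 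The second summand is controlled by $\|P_L A^\dag D^T\|_2\cdot\|\gamma(\bar\tau) - \gamma^\star\|_2$, yielding the $\sqrt{s\log m/n}$ rate via the $\gamma$-bound of \Cref{thm:slbiss-cstc}.

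The hardest step is showing that $P_L A^\dag X^*$ absorbs the $\nu$ prefactor cleanly. The natural route is to diagonalize in coordinates adapted to $L$ and $L^\perp$, to use the Schur-complement identity $\Sigma = (I - DA^\dag D^T)/\nu$ from \cref{eq:ASigma-def} together with \Cref{thm:rsc-nu-prime} to control $P_L A^\dag$ uniformly in $\nu$, and then apply sub-Gaussian concentration of $X^*\epsilon$ restricted to the $r'$-dimensional subspace $\{X\beta : \beta\in\ker(D)\}$. The extra $2r'/m^2$ in the final probability reflects precisely this additional concentration event on the oracle subspace, layered on top of those already established in \Cref{thm:slbiss-cstc}.
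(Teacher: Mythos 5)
Your skeleton coincides with the paper's (reduce to $S(\bar{\tau})\subseteq S$, with equality under \cref{eq:slbiss-gammamin-cond}; use $D_{S(\bar{\tau})^c}P_{S(\bar{\tau})}=0$; peel off the support-mismatch term $-D_{S(\bar{\tau})^c}^{\dag}D_{S(\bar{\tau})^c\cap S}\beta^{\star}$; pay for fresh concentration events with the extra $2/m$ and $2r'/m^2$), but two steps you lean on would fail as described. The central one is the claim that $\nu P_L A^{\dag}X^{*}\epsilon$ ``collapses to a $\nu$-independent\ldots term of $\ell_2$ order $\sigma\sqrt{r'\log m/n}$.'' Decomposing $\mathcal{L}=\mathrm{Im}(V)\oplus\mathrm{Im}(\tilde{V}V_1)$ as in \Cref{thm:beta-decomposition}, the $\nu$ only cancels on the $\ker(D)$-block (against $\nu V_1\Lambda_1^2V_1^T$ in $A$), which is what produces the $r'$-dimensional least-squares term. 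But $\ker(D_{S^c})$ also contains directions inside $\mathrm{Im}(V)$ (the subspace $\mathrm{Im}(V\Lambda^{-1}U_S^T\tilde{U}')$), and on that block the noise term is $\nu B^{-1}V^TX^{*}(I-U_1U_1^T)\epsilon$ with $B=\Lambda^2+\nu V^TX^{*}(I-U_1U_1^T)XV$: the $\nu$ does not cancel, and an $\ell_2$ operator-norm bound gives only $O(\nu\sigma)$ since $\|\epsilon\|_2\sim\sqrt{n}$ --- i.e.\ exactly the non-decaying bias of \Cref{thm:slbiss-cstc}(4) that the projection is supposed to remove. The paper's cure (see \Cref{thm:orc-minus-star} and the end of the proof of this theorem) is to absorb $\nu B^{-1}$ via $\nu B^{-1}+B^{-1}\Lambda U_S^T\Sigma_{S,S}^{-1}U_S\Lambda B^{-1}\preceq\lambda_{\Sigma}^{-1}B^{-1}$ into the tolerated $1/\lambda_H\sim(1+\nu)$ prefactor, and to get the $\sqrt{\log m/n}$ decay from coordinatewise sub-Gaussian concentration of the projected, at-most-$m$-dimensional vectors (e.g.\ $U_{S(t)}\Lambda(I-V^TD_{S(t)^c}^{\dag}U_{S(t)^c}\Lambda)B_{\delta}\epsilon$) followed by a $\sqrt{s}$ lift --- not from an $\ell_2$ operator bound. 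This contribution lands in the $\sqrt{s\log m/n}$ term of the stated bound, not the $\sqrt{r'\log m/n}$ term; without this your argument does not actually eliminate the $\nu$-bias.

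Second, your sign-consistency step conflates noise with the dynamics error. Writing $\beta(\bar{\tau})-\beta^{\star}=(\beta(\bar{\tau})-\beta^o)+(\beta^o-\beta^{\star})$, concentration only handles the second piece; the first is the deterministic deviation from the oracle estimator, which must be controlled by the refined decay $d(\bar{\tau})\le\gamma_{\min}^o/(8\Lambda_D)$ from \Cref{thm:slbiss-orc-cstc} --- this is precisely why \cref{eq:slbiss-gammamin-cond} carries the $\log(8\Lambda_D)$ term, and it is invisible in your plan. Your fallback route via the ``small-loss property'' fails outright: the loss only controls $\|D\beta-\gamma\|_2^2/(2\nu)$, so $\|D\beta(\bar{\tau})-\gamma(\bar{\tau})\|_2=O(\sqrt{\nu})$, nowhere near the $\sqrt{\log m/n}$ scale of $\gamma_{\min}^{\star}$ (and $D\tilde{\beta}\neq D\beta$ besides). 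Relatedly, the ``near-KKT'' relation $\nabla_{\beta}\ell\approx 0$ is exact only for Split ISS; for Split LBISS one has $\nabla_{\beta}\ell=-\dot{\beta}/\kappa$, and quantifying that residual is exactly the work the paper's potential-function/Bihari argument performs by comparing to the oracle estimator, which satisfies the exact stationarity conditions \cref{eq:orc-1st-opt}.
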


\begin{remark}
    In most cases $r'$ is very small, so the dominant $\ell_2$ error rate is $O(\sqrt{(s\log m)/n})$ (as long as $\nu$ is upper bounded by constant), which is minimax optimal \citep{lee_model_2013,liu_guaranteed_2013}.
\end{remark}

\subsection{Consistency of Split LBI}
\label{sec:slbi-cstc}

Based on theorems on consistency of Split LBISS, one can naturally derive similar results for Split LBI with large $\kappa$ and small $\alpha$.

\begin{theorem}[Consistency of Split LBI]
    \label[theorem]{thm:slbi-cstc}
    Under \Cref{thm:rsc} and \ref{thm:irr-nu}, define $\lambda_H = C / (1 + \nu)$ (from \cref{eq:rsc-nu}). Suppose that $\kappa$ is large and $\alpha$ is small, so that
    \begin{equation}
        \label{eq:step-upper}
        \kappa \alpha \|H\|_2 < 2,
    \end{equation}
    $\kappa$ satisfies \cref{eq:kappa-lower} with $\lambda_H$ replaced by $\lambda_H' := \lambda_H (1 - \kappa \alpha \|H\|_2 / 2) > 0$, and
    \begin{equation*}
        5\alpha < \bar{\tau} := \frac{\eta}{8\sigma} \cdot \frac{\lambda_D}{\Lambda_X} \sqrt{\frac{n}{\log m}}.
    \end{equation*}
    Let $\overline{k} := \lfloor \bar{\tau} / \alpha \rfloor$. Then with probability not less than $1 - 6/m - 3\exp(-4n/5)$, we have all the following properties.
    \begin{enumerate}
        \item
            \emph{No-false-positive}: The solution has no false-positive, i.e. $\mathrm{supp}(\gamma_k) \subseteq S$, for $0\le k\alpha \le \overline{\tau}$.
        \item
            \emph{Sign consistency of $\gamma_k$}: Once the signal is strong enough such that
            \begin{multline}
                \label{eq:slbi-gammamin-cond}
                \gamma_{\min}^{\star} := \left( D_S \beta^{\star} \right)_{\min}\\
                \ge \frac{16\sigma}{\eta \lambda_H' \left( 1 - 5\alpha / \bar{\tau} \right)} \cdot \frac{\Lambda_X \Lambda_D}{\lambda_D^2} \left( 2\log s + 5 + \log (8 \Lambda_D) \right) \sqrt{\frac{\log m}{n}},
            \end{multline}
            then $\gamma_k$ has sign consistency at $\bar{k}$, i.e. $ \mathrm{sign}( \gamma_{\bar{k}} ) = \mathrm{sign} ( D \beta^{\star} )$. 
        \item
            \emph{$\ell_2$ consistency of $\gamma_k$}:
            \begin{equation*}
                \left\| \gamma_{\bar{k}} - D \beta^{\star} \right\|_2 \le \frac{42\sigma}{\eta \lambda_H' \left( 1 - \alpha / \bar{\tau} \right)} \cdot \frac{\Lambda_X}{\lambda_D} \sqrt{\frac{s\log m}{n}}.
            \end{equation*}
        \item
            \emph{$\ell_2$ ``consistency'' of $\beta_k$}:
            \begin{multline*}
                \left\| \beta_{\bar{k}} - \beta^{\star} \right\|_2 \le \frac{42\sigma}{\eta \lambda_H' \left( 1 - \alpha / \bar{\tau} \right)} \cdot \frac{\lambda_1 \Lambda_X (1 + \lambda_D) + \Lambda_X^2}{\lambda_1 \lambda_D^2} \sqrt{\frac{s\log m}{n}}\\
                + \frac{2\sigma}{\lambda_1} \sqrt{\frac{r'\log m}{n}} + \nu \cdot 2\sigma \cdot \frac{\lambda_1 \Lambda_X + \Lambda_X^2}{\lambda_1 \lambda_D^2},
            \end{multline*}
            where $r'$ is defined in \cref{eq:r-prime-def}.
    \end{enumerate}
%\end{thmref}
\end{theorem}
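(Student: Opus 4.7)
The plan is to transfer the continuous-time consistency results for Split LBISS (\Cref{thm:slbiss-cstc}) to the discrete iteration by treating Split LBI as a forward Euler discretization and carefully controlling the discretization error. The structural condition \cref{eq:step-upper} is precisely what \Cref{thm:slb-l-dec} requires to make the loss $\ell(\beta_k,\gamma_k)$ non-increasing, so the discrete iterates remain in a region where a quadratic Taylor analysis is valid. The essential observation is that one Euler step against the Hessian $H$ contracts the quadratic part by a factor $1-\kappa\alpha\|H\|_2/2$, which is exactly why $\lambda_H$ must be replaced by $\lambda_H'=\lambda_H(1-\kappa\alpha\|H\|_2/2)$ in the hypothesis \cref{eq:kappa-lower}; this substitution will propagate uniformly through every step of the continuous argument.

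First, I would introduce the discrete oracle iteration $(\bar\beta_k,\bar\gamma_k,\bar\rho_k)$, obtained by running \cref{eq:slbi} while artificially enforcing $\mathrm{supp}(\bar\gamma_k)\subseteq S$ (i.e.\ iterating on the oracle subspace $\mathcal{N}=\mathcal{L}\oplus\mathbb{R}^s\oplus\{0\}^{p-s}$). Using the RSC hypothesis through \Cref{thm:rsc-nu} and the sub-Gaussian tail bound that underlies the choice of $\bar\tau$ in \cref{eq:tau-bar-def}, the discrete oracle iteration contracts geometrically with rate governed by $\kappa\alpha\lambda_H'$, so after $\bar k=\lfloor\bar\tau/\alpha\rfloor$ steps one obtains $\ell_2$ bounds on $\bar\gamma_{\bar k}-D_S\beta^\star$ and on the relevant components of $\bar\beta_{\bar k}-\beta^\star$ that mirror those in \Cref{thm:slbiss-cstc}, with $\lambda_H$ replaced by $\lambda_H'$ and an additional $(1-\alpha/\bar\tau)$-type factor coming from Riemann-sum approximation of the noise accumulation $\sum_{k=0}^{\bar k-1}\alpha\cdot(\text{noise})$ versus $\int_0^{\bar\tau}(\text{noise})\,dt$.

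Next, no-false-positive is shown by proving $\bar\rho_{S^c,k}\in(-1+\eta,1-\eta)^{|S^c|}$ for all $k\alpha\le\bar\tau$, which by \cref{eq:slbi-c} forces $\bar\gamma_{S^c,k}=0$ and hence identifies the oracle iteration with the true iteration $(\beta_k,\gamma_k,\rho_k)$ on this time window. This step uses the Irrepresentable Condition \Cref{thm:irr-nu} in its $\Sigma(\nu)$-form via \Cref{thm:irr-nu-prime}, together with the lower bound on $\kappa$ in \cref{eq:kappa-lower}; the discrete forward-Euler updates of $\rho_{S^c,k}$ must be controlled by summing up per-step increments of size $\alpha\cdot(\text{bounded quantity})$, and the $(1-5\alpha/\bar\tau)$ factor entering \cref{eq:slbi-gammamin-cond} absorbs the loss of a constant number of iterations at the boundary of the stopping window (roughly five time steps of slack in the concentration argument for sub-Gaussian noise over $[0,\bar\tau]$). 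Sign consistency of $\gamma_k$ then follows from the $\ell_2$ bound combined with $\gamma_{\min}^\star$ dominating the estimation error, and the bound on $\beta_k$ decomposes the error in the $\mathrm{Im}(X^T)+\mathrm{Im}(D^T)$ directions as in the continuous case.

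The main obstacle is obtaining the correct multiplicative factors $\lambda_H'$, $(1-\alpha/\bar\tau)$, and $(1-5\alpha/\bar\tau)$ without losing a $\log(1/\alpha)$ or similar factor through crude telescoping. The continuous-time argument exploits a clean exponential-decay ODE on the oracle subspace and an It\^o-free concentration of the noise path; the discrete analogue requires keeping track of both the stability constraint $\kappa\alpha\|H\|_2<2$ and the contraction rate on $\mathcal{N}$, and showing that the two together yield a matrix $I-\kappa\alpha H$ whose spectral radius on $\mathcal{N}$ equals exactly $1-\kappa\alpha\lambda_H'$. Once this spectral bookkeeping is in place, the sub-Gaussian concentration for $\|X^*\epsilon\|_\infty$ and $\|D\epsilon\|_\infty$ at the scale $\sigma\sqrt{\log m/n}$ transfers unchanged, and the final bounds follow by the same chain of inequalities as in \Cref{thm:slbiss-cstc}.
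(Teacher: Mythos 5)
Your high-level architecture matches the paper's: a discrete oracle iteration confined to the oracle subspace, a no-false-positive argument that controls $\rho_{S^c,k}$ via the Irrepresentable Condition so that the true iterates coincide with the oracle ones up to $\bar\tau$, and the same decomposition of the $\beta$-error along $\mathrm{Im}(D^T)$ and $\mathrm{Im}(X^T)$. You also correctly trace $\lambda_H'=\lambda_H(1-\kappa\alpha\|H\|_2/2)$ to a one-step descent estimate. However, there is a genuine gap in the central quantitative step: you assert that the oracle iteration ``contracts geometrically with rate governed by $\kappa\alpha\lambda_H'$'' and that the matter reduces to the spectral radius of $I-\kappa\alpha H$ on $\mathcal{N}$. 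The update of $\gamma_k$ passes through $z_{k+1}$ and the shrinkage $\kappa\,\mathcal{S}(z_{k+1},1)$, so the error dynamics are \emph{not} linear until every active coordinate of $z$ has crossed the threshold $1$; before that, $\gamma_{k,j}=0$ while $\rho_{k,j}$ drifts, and no spectral argument applies. The paper handles exactly this phase with the potential $\Psi_k$ containing the Bregman distance $\|\gamma_S^o\|_1-\langle\gamma_S^o,\rho_{k,S}\rangle$ and a discrete generalized Bihari inequality (\Cref{thm:slbi-orc-gbi}), whose piecewise function $F$ yields the $\bigl(4\sqrt{s}+d_0/\kappa\bigr)/(\lambda_H' k\alpha)$ decay of $d_k$ and the $2\log s+5$ term in \cref{eq:slbi-gammamin-cond}. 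A geometric contraction would instead produce bounds of the form $(1-\kappa\alpha\lambda_H')^k d_0$, with the wrong dependence on $s$, $\kappa$ and $\bar\tau$, and cannot recover the stated constants; it is also not an equality that the spectral radius equals $1-\kappa\alpha\lambda_H'$, only an upper bound under \cref{eq:step-upper}.

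A secondary but concrete error: your attribution of the $(1-\alpha/\bar\tau)$ and $(1-5\alpha/\bar\tau)$ factors to a Riemann-sum approximation of accumulated noise is not how they arise. The noise enters the integrated no-false-positive condition as the exact term $k\alpha\, X^{*}\epsilon$, with no quadrature error. The $(1-\alpha/\bar\tau)$ comes from $\bar{k}\alpha\ge\bar\tau-\alpha$ after taking the floor, and the additional $4\alpha$ slack in $\tau_{\infty}'(\mu)$ (see \cref{eq:tau-inf-def-slbi}), which combines with the floor to give $5\alpha$, comes from losing at most one iterate at each of the four regime boundaries of $F$ when the discrete Bihari summation replaces the continuous integral. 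Without the potential-function machinery these factors cannot be produced, so the proposal as written would not close.
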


Similarly after the projection one can get $\tilde{\beta}_k$ such that $D \tilde{\beta}_k$ is sparse and the corresponding $\ell_2$ error bound is improved.

\begin{theorem}[Consistency of revised Split LBI]
    \label[theorem]{thm:slbi-rev-cstc}
    Under \Cref{thm:rsc} and \ref{thm:irr-nu}, define $\lambda_H = C / (1 + \nu)$ (from \cref{eq:rsc-nu}). Suppose that $\kappa, \alpha$ satisfy the same conditions as in \Cref{thm:slbi-cstc}; $\lambda_H', \bar{\tau}$ is defined the same as in \Cref{thm:slbi-cstc}. Define
    \begin{equation*}
        S_k := \mathrm{supp}(\gamma_k),\ P_{S_k} := P_{\ker\left( D_{S_k^c} \right)} = I - D_{S_k^c}^{\dag} D_{S_k^c},\ \tilde{\beta}_k := P_{S_k} \beta_k.
    \end{equation*}
    If $S_k^c = \varnothing$, define $P_{S_k} = I$. Then we have the following properties.
    \begin{enumerate}
        \item
            \emph{Sign consistency of $\tilde{\beta}_k$}: Once \cref{eq:slbi-gammamin-cond} holds, then with probability not less than $1 - 8/m - 3\exp(-4n/5)$, there holds $\mathrm{sign}( D \tilde{\beta}_{\bar{k}} ) = \mathrm{sign}( D \beta^{\star} )$.
        \item
            \emph{$\ell_2$ consistency of $\tilde{\beta}_k$}: With probability not less than $1 - 8/m - 2r'/m^2 - 3\exp(-4n/5)$, we have
            \begin{multline*}
                \left\| \tilde{\beta}_{\bar{k}} - \beta^{\star} \right\|_2 \le \frac{80\sigma}{\eta \lambda_H' \left( 1 - \alpha / \bar{\tau} \right)} \cdot \frac{\Lambda_X \left( \Lambda_D + \lambda_D^2 \right)}{\lambda_D^3} \sqrt{\frac{s\log m}{n}}\\
                + \frac{2\sigma}{\lambda_H'} \left( \frac{\Lambda_X}{\lambda_D^2} + \frac{\lambda_H' \lambda_D^2 + \Lambda_X^2}{\lambda_1 \lambda_D^2} \right) \sqrt{\frac{r' \log m}{n}} + 2 \left\| D_{S_{\bar{k}}^c}^{\dag} D_{S_{\bar{k}}^c\cap S} \beta^{\star} \right\|_2,
            \end{multline*}
            where $r'$ is defined in \cref{eq:r-prime-def}. If additionally $S_{\bar{k}} \supseteq S$, then the last term on the right hand side drops.
    \end{enumerate}
\end{theorem}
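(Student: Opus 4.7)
The plan is to deduce this theorem by combining the unprojected Split LBI consistency results already given by \Cref{thm:slbi-cstc} with an analysis of the projection step $P_{S_k}$, paralleling the way \Cref{thm:slbiss-rev-cstc} extends \Cref{thm:slbiss-cstc} in the continuous case. The discretization bookkeeping (the factors $\lambda_H'=\lambda_H(1-\kappa\alpha\|H\|_2/2)$ and $(1-\alpha/\bar\tau)^{-1}$) is already in place from \Cref{thm:slbi-cstc} and should propagate unchanged; the only new ingredients are the effect of the projection on the sign pattern and on the $\ell_2$ error.

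\textbf{Sign consistency.} I would start on the high-probability event of \Cref{thm:slbi-cstc}, where the no-false-positive property gives $S_{\bar k}\subseteq S$, and under \cref{eq:slbi-gammamin-cond} the sign consistency of $\gamma_{\bar k}$ in fact forces $S_{\bar k}=S$. Then $P_{S_{\bar k}}$ is the projection onto $\ker(D_{S^c})$, which contains $\beta^\star$ since $\beta^\star\in\mathcal M$. By construction $D_{S^c}\tilde\beta_{\bar k}=0$, so the signs on $S^c$ match $(D\beta^\star)_{S^c}=0$ automatically; for the active block I would write
\begin{equation*}
D_S\tilde\beta_{\bar k}-D_S\beta^\star \;=\; D_S P_{S_{\bar k}}(\beta_{\bar k}-\beta^\star)
\end{equation*}
and bound the $\ell_\infty$ norm by $\gamma^\star_{\min}/2$, which yields the correct signs on $S$. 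The bump in probability to $1-8/m-3\exp(-4n/5)$ should reflect one extra concentration event of mass $2/m$ needed to control this $\ell_\infty$ bound, exactly as in the corresponding step of \Cref{thm:slbiss-rev-cstc}.

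\textbf{$\ell_2$ bound.} The key decomposition is
\begin{equation*}
\tilde\beta_{\bar k}-\beta^\star \;=\; P_{S_{\bar k}}(\beta_{\bar k}-\beta^\star)\;+\;(P_{S_{\bar k}}-I)\beta^\star,
\end{equation*}
and the second summand equals $-D_{S_{\bar k}^c}^{\dag}D_{S_{\bar k}^c\cap S}\beta^\star$ because $(D\beta^\star)_{S^c}=0$; this is exactly the last term in the stated bound, and it vanishes when $S_{\bar k}\supseteq S$. For the first summand the crucial point is that the problematic $\nu\cdot 2\sigma\cdot (\lambda_1\Lambda_X+\Lambda_X^2)/(\lambda_1\lambda_D^2)$ bias appearing in the Theorem~\ref{thm:slbi-cstc} bound on $\beta_{\bar k}-\beta^\star$ lies essentially in $\mathrm{Im}(D_{S_{\bar k}^c}^T)$ (it comes from the $D^T\gamma/\nu$ coupling in $\nabla_\beta\ell$) and is therefore annihilated by $P_{S_{\bar k}}=I-D_{S_{\bar k}^c}^{\dag}D_{S_{\bar k}^c}$. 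Controlling the two surviving pieces via the same oracle-dynamics argument used in \Cref{thm:slbiss-rev-cstc} produces the $\sqrt{s\log m/n}$ and $\sqrt{r'\log m/n}$ terms; the further $2r'/m^2$ probability deduction reflects an $r'$-dimensional sub-Gaussian tail bound for the low-dimensional noise projection onto $\{X\beta:\beta\in\ker(D)\}$.

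\textbf{Main obstacle.} The genuinely technical step is rigorously verifying that the $\nu$-dependent bias in $\beta_{\bar k}-\beta^\star$ is killed by $P_{S_{\bar k}}$ under the discrete iteration, not merely in the Split ISS limit where the closed form $\beta(t)=A^{\dag}(\nu X^\ast y+D^T\gamma(t))$ (cf.\ the derivation preceding \cref{eq:siss-gamma-a}) makes the cancellation transparent. For Split LBI one must propagate the $O(\alpha)$ forward-Euler error carefully so that the projection identity survives up to the $(1-\alpha/\bar\tau)^{-1}$ factor; once this is done, the remainder of the argument is routine tracking of the sub-Gaussian tail events already used in \Cref{thm:slbi-cstc}.
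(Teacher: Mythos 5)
Your high-level plan is the paper's plan: the proof of \Cref{thm:slbi-rev-cstc} is literally the discrete transcription of the proof of \Cref{thm:slbiss-rev-cstc}, carried out on the no-false-positive event of \Cref{thm:slbi-cstc} with \Cref{thm:slbi-orc-gbi} and \ref{thm:slbi-orc-cstc} substituted for their continuous analogues. Your sign-consistency sketch (that \cref{eq:slbi-gammamin-cond} forces $S_{\bar k}=S$, that $D_{S^c}\tilde\beta_{\bar k}=0$ is automatic, and that one extra $2/m$ event is spent on an $\ell_\infty$ bound --- in the paper, on $\|D_S D_{S^c}^{\dag}D_{S^c}(\beta^o-\beta^{\star})\|_{\infty}$) and your identification of $(P_{S_{\bar k}}-I)\beta^{\star}=-D_{S_{\bar k}^c}^{\dag}D_{S_{\bar k}^c\cap S}\beta^{\star}$ are both correct.

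The gap is in the $\ell_2$ step. The $\nu$-dependent term in \Cref{thm:slbi-cstc} is \emph{not} annihilated by $P_{S_{\bar k}}$: it is a bound, via the crude estimate $\|\epsilon\|_2\le 2\sigma\sqrt n$, on the noise contribution $\nu B^{-1}V^TX^{*}(I-U_1U_1^T)\epsilon$ to $\delta^o-\delta^{\star}$ in \cref{eq:delta-gamma-orc-minus-star}; its image under $V$ is a random vector spread over $\mathrm{Im}(D^T)$ with no reason to lie in $\mathrm{Im}(D_{S_{\bar k}^c}^T)=\ker(P_{S_{\bar k}})$, so no algebraic cancellation occurs, and the ``main obstacle'' you describe (propagating such a cancellation through the forward-Euler error) is not one the proof ever faces, since the oracle estimator $(\beta^o,\gamma^o)$ is a fixed quantity independent of the discretization. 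What actually removes the non-vanishing $O(\nu)$ constant is a change of norm: since $D_{S_{\bar k}^c}P_{S_{\bar k}}=0$, one has $\|V^TP_{S_{\bar k}}(\beta^o-\beta^{\star})\|_2\le\lambda_D^{-1}\|D_{S_{\bar k}}P_{S_{\bar k}}(\beta^o-\beta^{\star})\|_2$, a quantity supported on at most $s$ rows of $D$; each such row is a sub-Gaussian linear form in $\epsilon$ whose coefficient matrix is controlled through $\|B_\delta\|_2\le\Lambda_X/(\sqrt n\,\lambda_\Sigma\lambda_D^2)$ from \Cref{thm:orc-minus-star}, so \cref{eq:tail-subgau-2} yields a $\sqrt s\cdot O\bigl((1+\nu)\sqrt{(\log m)/n}\bigr)$ bound in place of the $\ell_2$-of-noise estimate that produced the $O(\nu)$ term (the analogous $r'$-dimensional bound on the $V_1^T\tilde V^T$ component accounts for the extra $2r'/m^2$). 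Without this $\sqrt s$ times $\ell_\infty$ concentration argument your plan cannot reach the stated $\sqrt{(s\log m)/n}$ rate.
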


\section{Proof Ideas for SLBISS Path Consistency Theorems}
\label{sec:ideas}

\begin{proof}[Sketchy proof of \Cref{thm:slbiss-cstc}]
    The Split LBISS dynamics always start within the oracle subspace ($\gamma_{S^c}(t) = 0$), and by \Cref{thm:slbiss-nfp} we prove that under the Irrepresentable Condition the exit time of the oracle subspace is no earlier than some $\bar{\tau} \lesssim \sqrt{n / \log m}$ (i.e. the no-false-positive condition holds before $\bar{\tau}$), with high probability.

    Before $\bar{\tau}$, the dynamics follow the identical path of the following \textit{oracle dynamics} of the Split LBISS restricted in the oracle subspace
    \begin{subequations}
        \label{eq:slbiss-orc}
        \begin{align}
            \label{eq:slbiss-orc-a}
            \rho_{S^c}'(t) &= \gamma_{S^c}'(t) \equiv 0,\\
            \label{eq:slbiss-orc-b}
            \dot{\beta}'(t) / \kappa &= - X^{*} \left( X \beta'(t) - y \right) - D^T \left( D \beta'(t) - \gamma'(t) \right) / \nu,\\
            \label{eq:slbiss-orc-c}
            \dot{\rho}_S'(t) + \dot{\gamma}_S'(t) / \kappa &= - \left( \gamma_S'(t) - D_S \beta'(t) \right) / \nu,\\
            \label{eq:slbiss-orc-d}
            \rho_S'(t) &\in \partial \left\| \gamma_S'(t) \right\|_1,
        \end{align}
    \end{subequations}
    where $\rho_S'(0) = \gamma_S'(0) = 0 \in \mathbb{R}^s,\ \beta'(0) = 0 \in \mathbb{R}^p$. \Cref{thm:slb-l-dec} shows that the loss is always dropping along the paths. Hence to monitor the distance of an estimator to the \textit{oracle estimator}
    \begin{equation}
        \label{eq:orc-def}
        \left( \beta^o, \gamma^o \right) \in \arg\min_{\substack{\beta, \gamma\\ \beta\in \mathcal{L},\ \gamma_{S^c} = 0}} \ell\left( \beta, \gamma \right) \subseteq \arg\min_{\substack{\beta, \gamma\\ \gamma_{S^c} = 0}} \ell\left( \beta, \gamma \right)
    \end{equation}
    which\protect\footnote{The property of the right hand side of \cref{eq:orc-def} is based on $\ell(P_\mathcal{L} \beta^o, \gamma^o) = \ell(\beta^o, \gamma^o)$.} is an optimal estimate of the true parameter $(\beta^{\star}, \gamma^{\star})$ (with error bounds in \Cref{thm:orc-minus-star}), we define a \textit{potential function}
    \begin{equation*}
        \Psi(t) := D^{\rho_S'(t)} \left( \gamma_S^o, \gamma_S'(t) \right) + d(t)^2 / (2\kappa),
    \end{equation*}
    where
    \begin{equation}
        \label{eq:d-def-slbiss}
        d_\beta(t) := \beta'(t) - \beta^o,\ d_\gamma(t) := \gamma'(t) - \gamma^o,\ d(t) := \sqrt{\left\| d_{\gamma, S}(t) \right\|_2^2 + \left\| d_\beta(t) \right\|_2^2},
    \end{equation}
    and the Bregman distance
    \begin{multline*}
        D^{\rho_S'(t)}\left( \gamma_S^o, \gamma_S'(t) \right) := \left\| \gamma_S^o \right\|_1 - \left\| \gamma_S'(t) \right\|_1 - \left\langle \gamma_S^o - \gamma_S'(t),\ \rho_S'(t) \right\rangle\\
        = \left\| \gamma_S^o \right\|_1 - \left\langle \gamma_S^o, \rho_S'(t) \right\rangle.
    \end{multline*}
    Equipped with this potential function, the original differential inclusion is reduced to the following differential inequality, called as generalized Bihari's inequality (\Cref{thm:slbiss-orc-gbi}) whose proof will be given in \ref{sec:proof-orc}. 
    \begin{lemma}[\textnormal{Generalized Bihari's inequality}]
        \label{thm:slbiss-orc-gbi}
        Under \Cref{thm:rsc}, for all $t\ge 0$ we have
        \begin{equation*}
            \frac{\mathrm{d}}{\mathrm{d}t} \Psi(t) \le - \lambda_H F^{-1}\left( \Psi(t) \right),
        \end{equation*}
        where $\gamma_{\min}^o := \min(|\gamma_j^o|:\ \gamma_j^o\neq 0)$ and
        \begin{align*}
            F(x) &:= \frac{x}{2\kappa} +
            \begin{cases}
                0,& 0\le x < (\gamma_{\min}^o)^2,\\
                2x/\gamma_{\mathrm{min}}^o,& (\gamma_{\min}^o)^2 \le x < s (\gamma_{\min}^o)^2,\\
                2\sqrt{sx},& x \ge s (\gamma_{\min}^o)^2,
            \end{cases}\\
            F^{-1}(x) &:= \inf(y:\ F(y)\ge x)\ (y\ge 0).
        \end{align*}
    \end{lemma}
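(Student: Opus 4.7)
The plan is to differentiate $\Psi(t)$ along the oracle dynamics \cref{eq:slbiss-orc}, collapse the derivative into a quadratic form in the error vector $v(t) := (d_\beta(t); d_{\gamma,S}(t))$, apply the RSC of \Cref{thm:rsc-nu} to get $\dot{\Psi}(t) \le -\lambda_H d(t)^2$, and then show by a short case analysis that $\Psi(t) \le F(d(t)^2)$, which together with monotonicity of $F$ yields $d(t)^2 \ge F^{-1}(\Psi(t))$.

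First I would compute $\dot{\Psi}$. Since $\rho_S'(t) \in \partial\|\gamma_S'(t)\|_1$, the chain rule gives $\tfrac{d}{dt}\|\gamma_S'\|_1 = \langle \rho_S', \dot{\gamma}_S'\rangle$, and differentiating the Bregman distance then cancels two contributions and leaves $\tfrac{d}{dt} D^{\rho_S'(t)}(\gamma_S^o, \gamma_S'(t)) = \langle \gamma_S'(t) - \gamma_S^o, \dot{\rho}_S'(t)\rangle$. Combining with $\tfrac{d}{dt}[d(t)^2/(2\kappa)]$ and substituting \cref{eq:slbiss-orc-b,eq:slbiss-orc-c},
\[
    \dot{\Psi}(t) = -\langle d_{\gamma,S},\ \nabla_{\gamma_S}\ell(\beta',\gamma')\rangle - \langle d_\beta,\ \nabla_\beta \ell(\beta',\gamma')\rangle.
\]
A short check shows $\beta'(t) \in \mathcal{L}$ along the flow (the gradient $\nabla_\beta \ell$ lives in $\mathrm{Im}(X^T)+\mathrm{Im}(D^T)=\mathcal{L}$ and $\beta'(0)=0$), so I may subtract the first-order optimality residuals $\nabla_\beta \ell(\beta^o,\gamma^o)$, tested against $d_\beta \in \mathcal{L}$, and $\nabla_{\gamma_S}\ell(\beta^o,\gamma^o)=0$ for free. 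Using $d_{\gamma,S^c} \equiv 0$ and the linearity of $\nabla \ell$ in $(\beta,\gamma)$, the right-hand side collapses to exactly
\[
    \dot{\Psi}(t) = -v(t)^T H_{(\beta,S),(\beta,S)}(\nu)\, v(t) \le -\lambda_H\, d(t)^2,
\]
where the inequality is \Cref{thm:rsc-nu} applied on $d_\beta\in\mathcal{L}$.

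It remains to show $\Psi(t) \le F(d(t)^2)$. The quadratic piece $d(t)^2/(2\kappa)$ already matches the $x/(2\kappa)$ term in $F$. For the Bregman piece, write $D^{\rho_S'}(\gamma_S^o, \gamma_S') = \sum_{j\in S}(|\gamma_j^o| - \gamma_j^o\rho_j'(t))$; each summand is nonnegative and vanishes whenever $|d_{\gamma,S,j}(t)| < |\gamma_j^o|$, because then $\sign(\gamma_j'(t)) = \sign(\gamma_j^o)$ forces $\rho_j' = \sign(\gamma_j^o)$. On the ``bad'' set $T := \{j\in S:\ |d_{\gamma,S,j}(t)| \ge |\gamma_j^o| > 0\}$, each summand is bounded both by $2|\gamma_j^o| \le 2|d_{\gamma,S,j}|$ and by $2|d_{\gamma,S,j}|^2/\gamma_{\min}^o$. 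Summing with Cauchy--Schwarz and $|T|\le s$,
\[
    D^{\rho_S'}(\gamma_S^o, \gamma_S') \le \min\bigl\{2\sqrt{s}\,d(t),\ 2 d(t)^2/\gamma_{\min}^o\bigr\},
\]
while $T=\varnothing$ as soon as $d(t) < \gamma_{\min}^o$. Splitting on whether $d(t)^2$ lies below $(\gamma_{\min}^o)^2$, between $(\gamma_{\min}^o)^2$ and $s(\gamma_{\min}^o)^2$, or above, these three bounds reproduce exactly the three branches of $F$, so $\Psi(t)\le F(d(t)^2)$ and the lemma follows.

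The main obstacle I anticipate is the step reducing $\dot{\Psi}$ to $-v^T H_{(\beta,S),(\beta,S)} v$: it requires both the invariance $\beta'(t) \in \mathcal{L}$ and the two oracle optimality conditions to cancel precisely all the constant and cross terms coming from $y$ and from the coordinates outside $S$. Once that identity is established, the RSC lower bound and the piecewise comparison with $F$ are essentially bookkeeping.
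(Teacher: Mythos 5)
Your proposal is correct and follows essentially the same route as the paper's proof: both derive the exact identity $\dot{\Psi}(t) = -\,(d_\beta;d_{\gamma,S})^T H_{(\beta,S),(\beta,S)}(d_\beta;d_{\gamma,S})$ from the oracle optimality conditions, invoke the restricted strong convexity \cref{eq:rsc-nu} to lower-bound this quadratic form by $\lambda_H d(t)^2$, and close the argument with the same three-branch comparison $\Psi(t) \le F(d(t)^2)$ based on the sign-agreement/Bregman-vanishing observation. The only cosmetic difference is bookkeeping: the paper bounds the Bregman term in $\|d_{\gamma,S}\|_2^2$ and absorbs $\|d_\beta\|_2^2$ via $F(\cdot+x)\ge F(\cdot)+x/(2\kappa)$, whereas you bound it directly in $d(t)^2$, which is equivalent.
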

    Such an inequality, together with the Restricted Strong Convexity condition (RSC), leads to an exponential decrease of the potential above enforcing the convergence to the oracle estimator. Then we can show that as long as the signal is strong enough with all the magnitudes of entries of $\gamma^{\star}$ being large enough ($\gtrsim (\log s) \sqrt{(\log m) / n}$), the dynamics stopped at $\bar{\tau}$, exactly selects all nonzero entries of $\gamma^o$ (\cref{eq:slbiss-orc-cstc-sign} in \Cref{thm:slbiss-orc-cstc}), hence also of $\gamma^{\star}$ with high probability, achieving the sign consistency.

    Even without the strong signal condition, with RSC we can also show that the dynamics, at $\bar{\tau}$, returns a good estimator of $\gamma^o$ (\cref{eq:slbiss-orc-cstc-l2} in \Cref{thm:slbiss-orc-cstc}), hence also of $\gamma^{\star}$, having an $\ell_2$ error $\simeq \sqrt{(s\log m) / n}$ (minimax optimal rate) with high probability. Combining the $\ell_2$ bounds of $\beta'(t) - \beta^o$ (from \cref{eq:slbiss-orc-cstc-l2} in \Cref{thm:slbiss-orc-cstc}) and $\beta^o - \beta^{\star}$ (\Cref{thm:orc-minus-star}), we obtain the result concerning the $\ell_2$ bound of $\beta'(t) - \beta^{\star}$, at $\bar{\tau}$, similarly with the minimax optimal rate.

    A detailed proof of \Cref{thm:slbiss-cstc} can be found in \ref{sec:proof-slbiss-cstc}. \qed
\end{proof}

\begin{remark}
    It is an interesting open problem how to relax the Irrepresentable Condition to achieve a minimax optimal estimator at weaker conditions such as \citep{BicRitTsy09}.
\end{remark}

\begin{proof}[Proof sketch of \Cref{thm:slbiss-rev-cstc}]
    By \Cref{thm:slbiss-cstc}, the exit time of the oracle subspace is no earlier than some $\bar{\tau} \lesssim \sqrt{n / \log m}$, i.e. the no-false-positive condition holds before $\bar{\tau}$, or say $S(t) \subseteq S$ for $t\le \bar{\tau}$, with high probability. The definition of $\tilde{\beta}(t)$ enforces
    \begin{equation*}
        D_{S^c} \tilde{\beta}\left( \bar{\tau} \right) = 0 = D_{S^c} \beta^{\star}.
    \end{equation*}
    Using the error bounds of $\beta'(t) - \beta^o$ (from \cref{eq:slbiss-orc-cstc-sign} in \Cref{thm:slbiss-orc-cstc}) and $\beta^o - \beta^{\star}$ (\Cref{thm:orc-minus-star}), we obtain
    \begin{equation*}
        \| D_S \tilde{\beta}\left( \bar{\tau} \right) - D_S \beta^{\star} \|_{\infty} < \gamma_{\min}^{\star} = (D_S \beta^{\star})_{\min} \Longrightarrow \mathrm{sign}\left( D_S \tilde{\beta}\left( \bar{\tau} \right) \right) = \mathrm{sign}(D_S \beta^{\star}),
    \end{equation*}
    as long as the magnitudes of entries of $\gamma^\star$ are all large enough, achieving the sign consistency. Also we can obtain the $\ell_2$ bound of $\tilde{\beta}(t) - \beta^{\star}$.

    A detailed proof of \Cref{thm:slbiss-rev-cstc} can be found in \ref{sec:proof-slbiss-cstc}. \qed
\end{proof}

\section{Experiments}
\label{sec:exp}

In this section, we show three additional applications using the algorithm proposed in this paper. The first application is about traditional image denoising using TV-regularization or fused Lasso. The remaining twos are new applications in partial order ranking: the second one is the basketball team ranking in partial order and the third one is the grouping of world universities in crowdsourced ranking. For reproducible research, Matlab source codes are released at the following website:
\begin{center}
\url{https://github.com/yuany-pku/split-lbi}.
\end{center}
%\commyy{If you wanna add back some details omitted in the NIPS submission due to page limit, this is a good time to pick up them.}

\subsection{Parameter Setting}

Parameter $\kappa$ should be large enough according to \cref{eq:kappa-lower}. Moreover, step size $\alpha$ should be small enough to ensure the stability of Split LBI. When $\nu, \kappa$ are determined, $\alpha$ can actually be determined by $\alpha = \nu / (\kappa (1 + \nu \Lambda_X^2 + \Lambda_D^2))$ (see \cref{eq:kappastep-upper-2}).

\subsection{Application: Image Denoising}
\label{sec:simu-2dfused}

Consider the image denoising problem in \citet{tibshirani_solution_2011}. The original image is resized to $50 \times 50$, and reset with only four colors, as in the top left image in \Cref{fig:simu-2dfused-recon-auc}. Some noise is added by randomly changing some pixels to be white, as in the bottom left. Let $G = (V,E)$ is the 4-nearest-neighbor grid graph on pixels, then $\beta = (\beta_R;\beta_G;\beta_B)\in \mathbb{R}^{3|V|}$ since there are 3 color channels (RGB channels). $X = I_{3|V|}$ and $D = \mathrm{diag}(D_G,D_G,D_G)$, where $D_G \delta\in \mathbb{R}^{|E|\times|V|}$ is the gradient operator on graph $G$ defined by $(D_G x)(e_{ij}) = x_i-x_j,\ e_{ij} \in E$. Set $\nu = 180,\ \kappa = 100$. The regularization path of Split LBI is shown in \Cref{fig:simu-2dfused-recon-auc}, where as $t$ evolves, images on the path gradually select visually salient features before picking up the random noise.

Now compare the AUC (Area Under the Curve) of \texttt{genlasso} and Split LBI algorithm with different $\nu$. For simplicity we show the AUC corresponding to the red color channel. Here $\nu\in \{1,20,40,60,\ldots,300\}$. As shown in the right panel of \Cref{fig:simu-2dfused-recon-auc}, with the increase of $\nu$, Split LBI beats \texttt{genlasso} with higher AUC values.

\begin{figure}
    \centering
    \begin{minipage}{0.6\textwidth}
        \includegraphics[width = 0.25\textwidth]{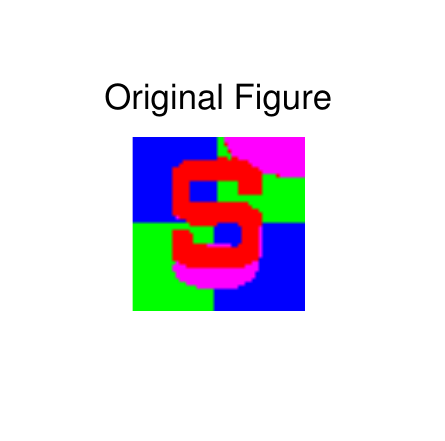}
        \hspace{0.1\textwidth}
        \includegraphics[width = 0.25\textwidth]{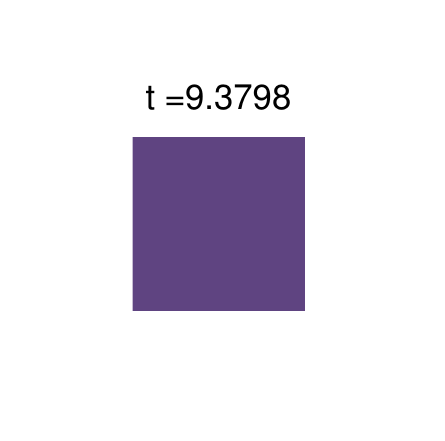}
        \includegraphics[width = 0.25\textwidth]{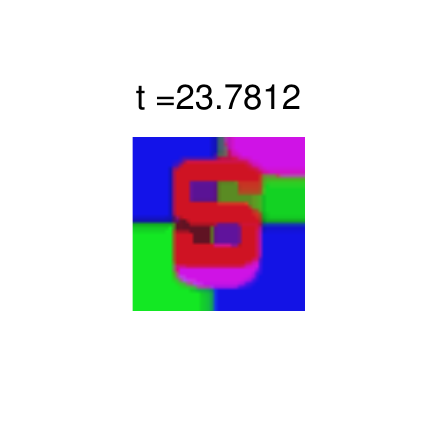}

        \includegraphics[width = 0.25\textwidth]{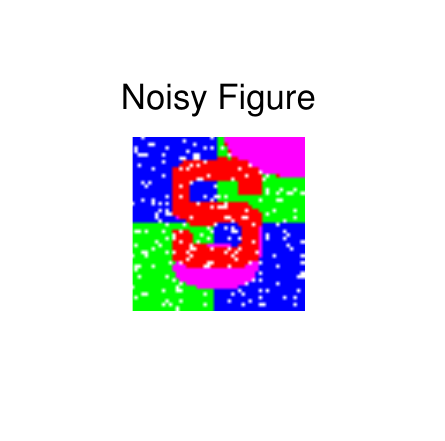}
        \hspace{0.1\textwidth}
        \includegraphics[width = 0.25\textwidth]{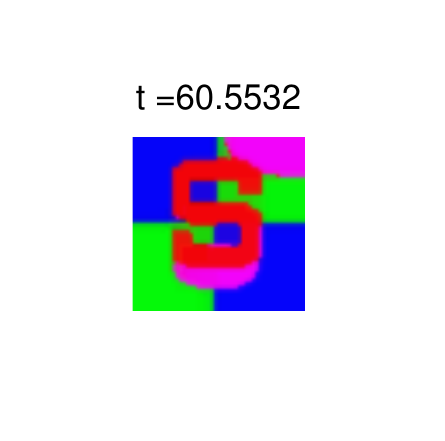}
        \includegraphics[width = 0.25\textwidth]{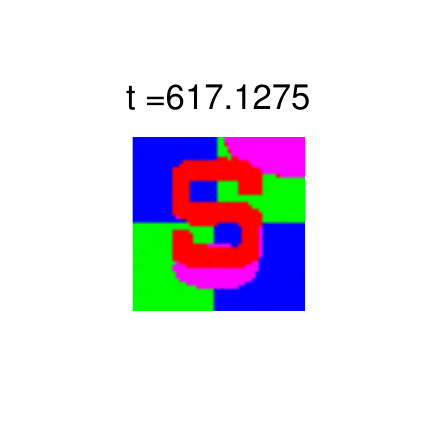}
    \end{minipage}
    \begin{minipage}{0.35\textwidth}
        \includegraphics[width = \textwidth]{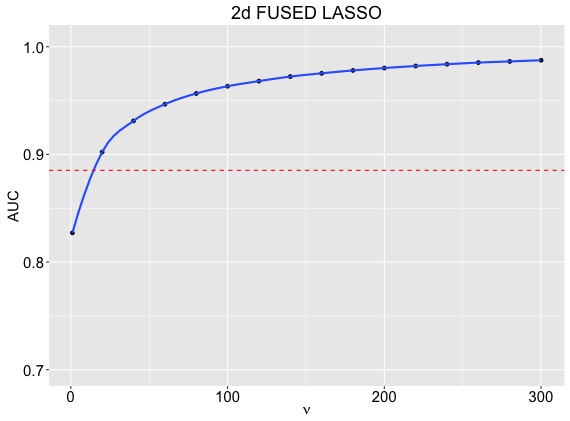} 
    \end{minipage}
    \caption{Left is image denoising results by Split LBI. Right shows the AUC of Split LBI (blue solid line) increases and exceeds that of \texttt{genlasso} (dashed red line) as $\nu$ increases.}
    \label{fig:simu-2dfused-recon-auc}
\end{figure}

\subsection{Application: Partial Order Ranking for Basketball Teams}
\label{sec:real-basketball}

\begin{figure}
    \centering
    \includegraphics[width = 0.8\textwidth]{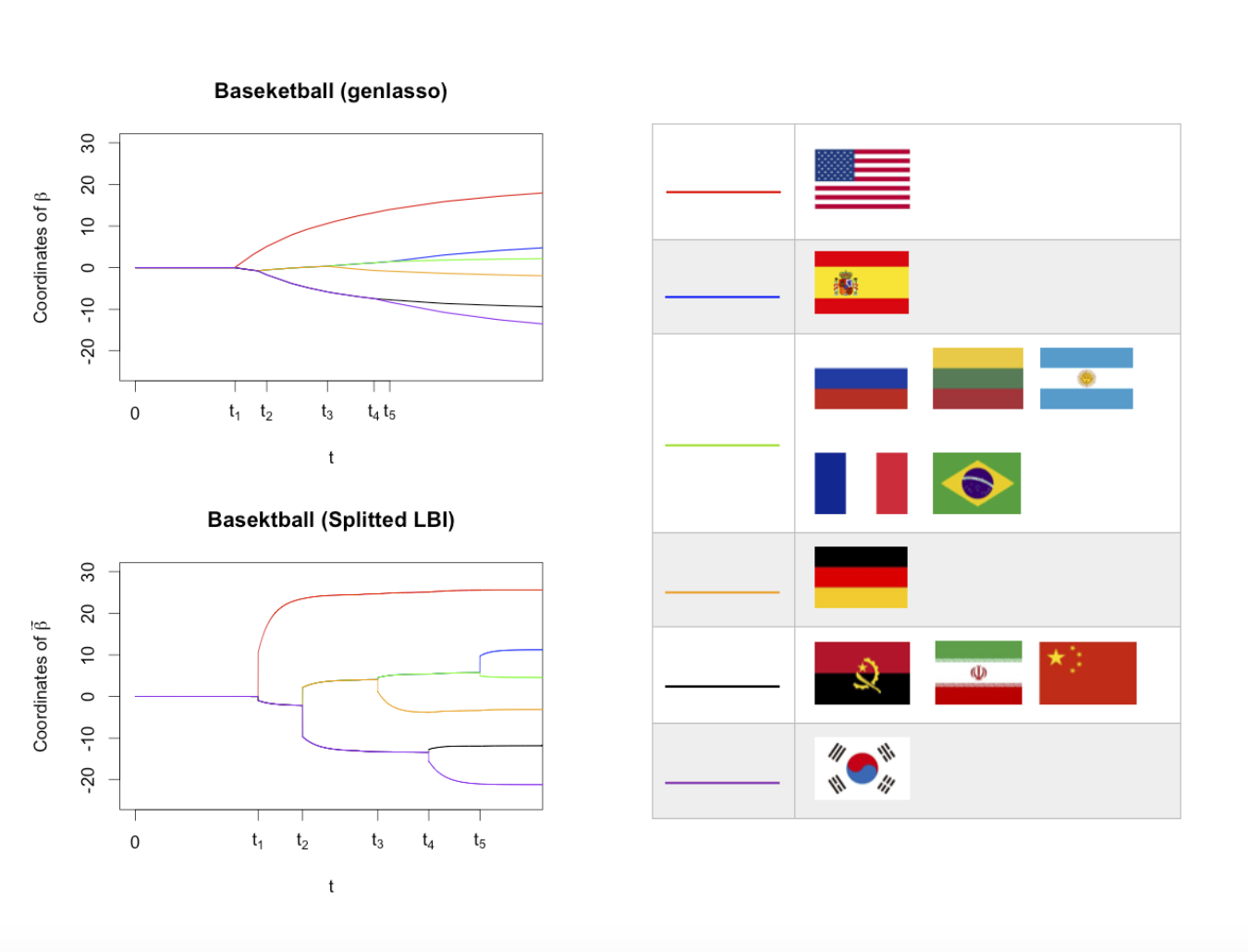}
    \includegraphics[width = 0.8\textwidth]{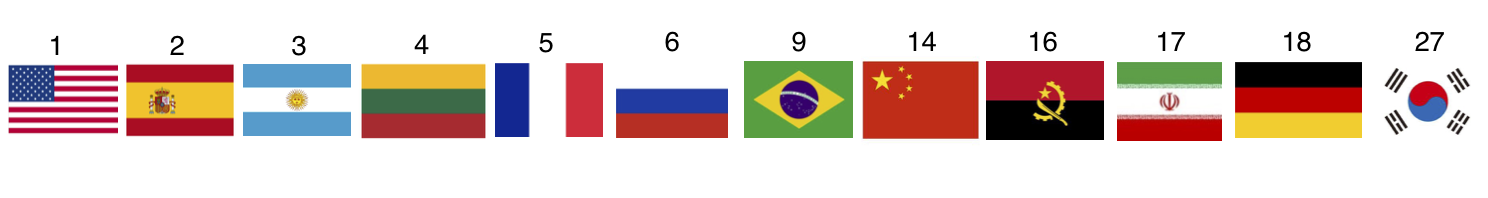}
    \caption{Partial order ranking for basketball teams. Top left shows $\{\beta_\lambda\}\ (t = 1/\lambda)$ by \texttt{genlasso} and $\tilde{\beta}_k\ (t = k\alpha)$ by Split LBI. Top right shows the same grouping result just passing $t_5$. Bottom is the FIBA ranking of all teams.}
    \label{fig:real-basketball-group}
\end{figure}

Here we consider a new application on the ranking of $p=12$ FIBA basketball teams into partial orders. The teams are listed in \Cref{fig:real-basketball-group}. We collected $n = 134$ pairwise comparison game results mainly from various important championship such as Olympic Games, FIBA World Championship and FIBA Basketball Championship in 5 continents from 2006--2014 (8 years is not too long for teams to keep relatively stable levels while not too short to have enough samples). For each sample indexed by $k$ and corresponding team pair $(i_k, j_k)$, $y_k = s_{i_k} - s_{j_k}$ is the score difference between team $i_k$ and $j_k$. We assume a model $y_k = \beta^\star_{i_k} - \beta^\star_{j_k} + \epsilon_k$ where $\beta^{\star}\in \mathbb{R}^p$ measures the strength of these teams. So the design matrix $X\in \mathbb{R}^{n\times p}$ is defined by its $k$-th row: $x_{k,i_k} = 1,\ x_{k,j_k} = -1,\ x_{k,l} = 0\ (l \neq i_k,j_k)$. In sports, teams with similar strength generally meet more often than those in different levels. Thus we hope to find a coarse grained partial order ranking by adding a structural sparsity on $D\beta^\star$ where $D=c X$ ($c$ scales the smallest nonzero singular value of $D$ to be 1).

The top left panel of \Cref{fig:real-basketball-group} shows $\{\beta_\lambda\}$ by \texttt{genlasso} and $\tilde{\beta}_k$ by Split LBI with $\nu = 1$ and $\kappa = 100$. Both paths give the same partial order at early stages, though the Split LBI path looks qualitatively better. For example, the top right panel shows the same partial order after the change point $t_5$. It is interesting to compare it against the FIBA ranking in September, 2014, shown in the bottom. Note that the average basketball level in Europe is higher than that of in Asia and Africa, hence China can get more FIBA points than Germany based on the dominant position in Asia, so is Angola in Africa. But their true levels might be lower than Germany, as indicated in our results. Moreover, America (FIBA points $1040.0$) itself forms a group, agreeing with the common sense that it is much better than any other country. Spain, having much higher FIBA ranking points ($705.0$) than the 3rd team Argentina ($455.0$), also forms a group alone. It is the only team that can challenge America in recent years, and it enters both finals against America in 2008 and 2012.

\subsection{Application: Grouping in Crowdsourced Ranking of World Universities}

%\commyy{This section is too lengthy, and has to be reduced. 1) make the dataset, codes, and results public, then you don't need to state all the phenomenon. 2) This is a crowdsourcing experiment on preference of Chinese students (applied math, stat) from Peking University etc., in choosing universities (usually for college or graduate schools). So pick up some results to highlight this unusual feature will suffice: e.g. MIT goes to 2nd tier and most top European universities are behind USA and UK.}
\begin{figure}[!h]
    \centering
    \includegraphics[width = 0.8\columnwidth]{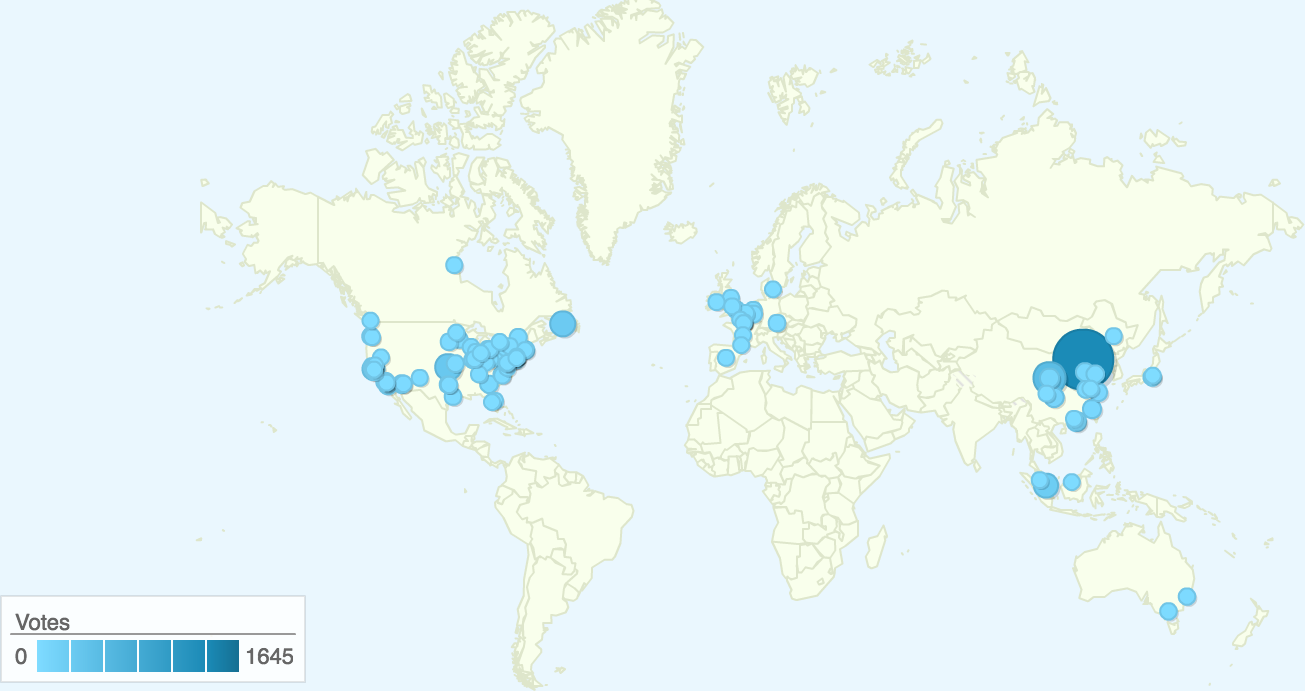}
    \caption{The map of voter distribution.}
    \label{fig:voter-distribution}
\end{figure}

Crowdsourcing technique has been recently used to rank universities by Internet voters, e.g. {\tt{CrowdRank}}. In the following a crowdsourcing experiment has been conducted for ranking $p = 261$ universities in the world on the platform \url{http://www.allourideas.org/worldcollege}. The majority of the participants are undergraduates or alumni from Peking University, mostly majoring in applied mathematics and statistics while some with engineering background. Voters are widely distributed around the world, with one fifth of all from Beijing, see \Cref{fig:voter-distribution}. Every voter is presented with a randomly chosen pair of universities, and asked with the question ``\emph{which university would you rather attend?}''. Then the voter is allowed to choose either of the two universities, or simply ``\emph{I can't decide}''. Our collection consists of about eight thousand votes. To make our result more robust, we remove some indecisive votes or outliers using the technique from \cite{xu_robust_2014} and are left with $n = 6,125$ paired comparison samples in the cleaned dataset for the study in this paper. For each sample indexed by $k$ and corresponding university pair $(i_k, j_k)$, if the voter considers $i_k$ to be better than $j_k$, then $y_k = 1$, otherwise $y_k = - 1$. We assume a model $y_k = \beta^\star_{i_k} - \beta^\star_{j_k} + \epsilon_k$ where $\beta^{\star}\in \mathbb{R}^p$ measures the strength of these universities. So the design matrix $X\in \mathbb{R}^{n\times p}$ is defined by its $k$-th row: $x_{k,i_k} = 1,\ x_{k,j_k} = -1,\ x_{k,l} = 0\ (l \neq i_k,j_k)$. $D$ is denoted as the total variation matrix with complete graph, i.e. $\Vert D\beta \Vert_{1} = \Sigma_{i < j} \vert \beta_{i} - \beta_{j} \vert_{1}$ for any $\beta \in R^{p}$. Split LBI is then implemented to obtain $\tilde{\beta}_k$.

%\commyy{if without outlier removal, what will happen? grouping result changes?} \commxs{The grouping results will change if without outlier removal. Especially for the first group, when $\tilde{\beta}_{t}$ is divided into 3 groups for some $t$, there are only 6 universities in the first group, which is Harvard University, Princeton University, Yale University, Cornell University and University of Cambridge; as $t$ increases, when $\tilde{\beta}_{t}$ is divided into 6 groups, there are only 3 universities in the first group: Harvard University, Princeton University and Yale University. I found it hard to explain for me, hence I remove outlier first.} 
\begin{figure}[!h]
    \centering
    \begin{minipage}[t]{0.48\linewidth}
        \includegraphics[width = \linewidth]{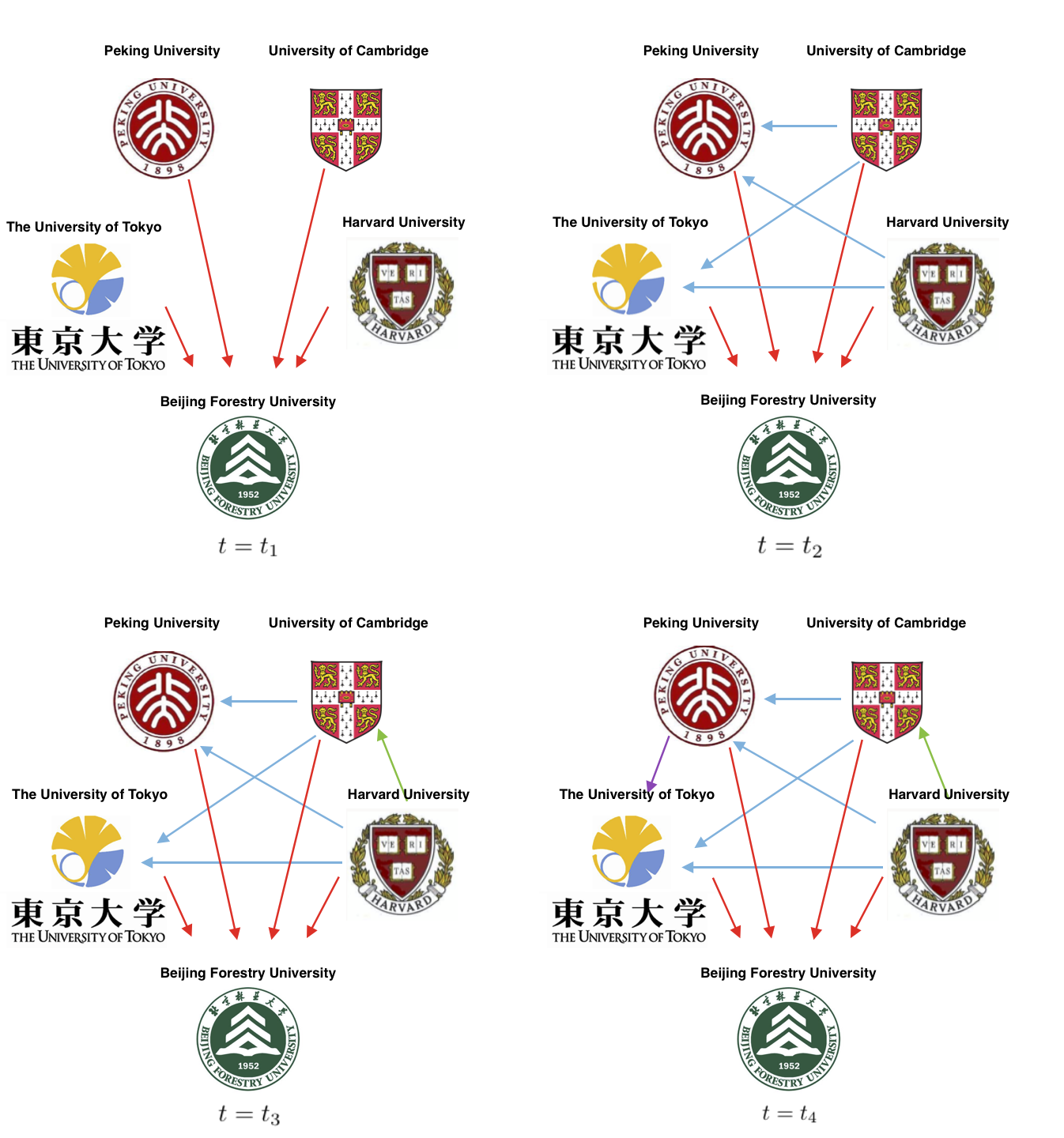}
    \end{minipage}
    \begin{minipage}[t]{0.48\linewidth}
        \includegraphics[width = \linewidth]{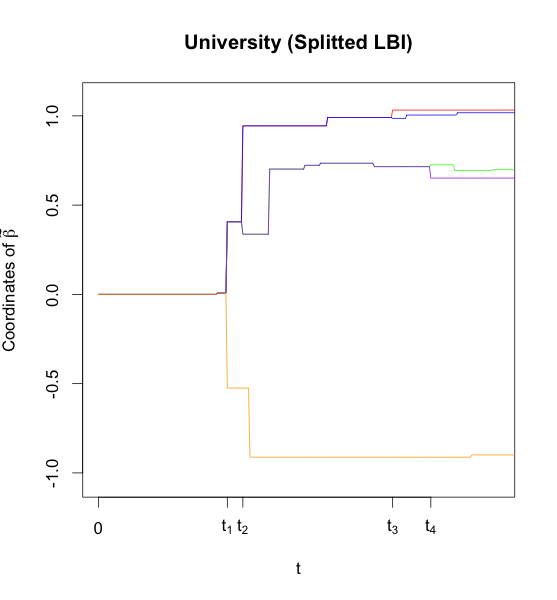}
    \end{minipage}
    \caption{World university ranking. Right shows $\tilde{\beta}_k\ (t = k\alpha)$ (on the $5$ entries corresponding to $5$ selected universities) by Split LBI. Left shows the corresponding graph $G_t$ at $t = t_1, \ldots, t_4$. $i \rightarrow j$ if the learned entry for $i$ is better than $j$.}
    \label{fig:real-university-group}
\end{figure}

Similar to the the previous application on basketball team ranking, for each $k$, entries of $\tilde{\beta}_k$ with same values form a group. For each $k$, consider the directed graph $G_t = (V, E_t)\ (t = k\alpha)$ with $V = \{1, \ldots, p\}$ and $E_t$ consisting of directed edges $(i,j)$'s with $\tilde{\beta}_{k,i} \neq \tilde{\beta}_{k,j}$ ($i\to j$ if $\tilde{\beta}_{k,i} < \tilde{\beta}_{k,j}$). We pick up $5$ universities for a simple illustration. See \Cref{fig:real-university-group} for the path and corresponding graphs for $t = t_1, \ldots, t_4$. No edge is selected at $t = 0$. At $t = t_1$, Beijing Forestry University is left behind. At $t = t_2$, we see that Harvard University and The University of Cambridge form the 1st group; Peking University and The University of Tokyo form the 2nd group; Beijing Forestry University becomes the last group. Continuing at $t = t_3, t_4$, further refinements within the 1st and 2rd groups are made. Note that at $t = t_{4}$, Peking University is more preferred to The University of Tokyo, yet below Harvard and Cambridge, reflecting the preference of voters from Peking University.

Now back to the whole set of $p = 261$ universities, we pick up a particular time at which the universities are separated into $10$ groups. Some reasonable results can be observed. See \Cref{tab:real-university-group-1} for the 1st group consisting of $17$ top universities. Most of them are first tier universities in USA, together with two top universities in UK (University of Cambridge, University of Oxford). California is clearly a favorite place for these voters, having five institutes included in the first group. 
 
\begin{table}[!h]
    \centering
    \begin{tabular}{p{0.48\textwidth} p{0.45\textwidth}}
        \hline
        Harvard University & Princeton University \\
        \hline
        Stanford University & University of California, Berkeley \\
        \hline
        Yale University & Cornell University \\
        \hline
        University of California, Los Angeles & University of Cambridge (UK) \\
        \hline
        California Institute of Technology & University of Oxford (UK) \\
        \hline
        Columbia University & University of Pennsylvania \\
        \hline
        Carnegie Mellon University & University of California, San Diego \\
        \hline
        University of Michigan & New York University \\
        \hline
        Johns Hopkins University & \\
        \hline 
    \end{tabular}
    \caption{Universities in the 1st group.}
    \label{tab:real-university-group-1}
\end{table}

The 2nd group universities are listed in See \Cref{tab:real-university-group-2}. It includes top universities in Asia (Peking University, Tsinghua University, University of Tokyo, University of Hong Kong, and Hong Kong University of Science and Technology), Europe (Swiss Federal Institute of Technology/ETH, Imperial College London, University College London, and London School of Economics and Political Science), and North America.  A surprising result is that MIT is listed in this second group, while most of the authoritative ranking systems clearly place it in the first tier. This phenomenon is probably due to the sampling bias in our crowdsourcing experiment: a large portion of the voters are of statistics major and MIT does not have a statistics department or program. Hence such voters will not choose MIT when considering graduate programs.%\commxs{Again, most are from USA and are high-level, such as University of Southern California, Georgia Institute of Technology, Duke University, e.t. ranking around 20s according to 2016 US News of National University Rankings.} Imperial College London, always ranks No.3 in UK university ranking based on Time Higher Education, falling behind The University of Cambridge and The University of Oxford in the 1st group. \commxs{Delete? Outside UK and USA, top universities like University of Toronto and  are in this group.} Besides, note that are also involved, which are top universities in Asia. \commxs{Delete? Here, Peking University and Tsinghua University are undoubted top 2 universities in China PR for many years.}

\begin{table}[!h]
    \centering
    \begin{tabular}{p{0.46\textwidth} p{0.46\textwidth}}
        \hline
        Massachusetts Institute of Technology (MIT) & University of Southern California\\
        \hline
        University of British Columbia (Canada) & University of Wisconsin-Madison \\
        \hline
        Peking University (China) & Northwestern University \\
        \hline
        University of Chicago & Swiss Federal Institute of Technology (Switzerland) \\
        \hline
        Brown University & Georgia Institute of Technology \\
        \hline
        Imperial College London (UK) & University of Washington \\
        \hline
        University of Toronto (Canada) &  University of California, Santa Barbara\\
        \hline
        Duke University & University of Tokyo (Japan) \\
        \hline
        The University of Hong Kong (Hong Kong) & Purdue University \\
        \hline
        University of Texas at Austin & Dartmouth College\\
        \hline
        University of California, Irvine & University of California, Santa Cruz \\
        \hline
        University of California, Davis & Tsinghua University (China) \\
        \hline
        University of Maryland, College Park & London School of Economics and Political Science (UK)\\
        \hline
        Boston University & Hong Kong University of Science and Technology (Hong Kong) \\
        \hline
        University College London (UK) & Rice University \\
        \hline 
    \end{tabular}
    \caption{The Universities in the 2nd Group}
    \label{tab:real-university-group-2}
\end{table}

Information about other groups can be found on website: \url{https://github.com/yuany-pku/split-lbi/tree/master/examples/university}.% In particular, we note that top universities in Europe and Australia mostly appear in the 4th group behind USA and UK. This largely reflects the popularity of higher education of USA and UK in Chinese students' preference. \commyy{this comments on 4th group can be omitted here, to avoid unhappiness of European reviewers}

\section{Conclusion}
\label{sec:conclusion}

In this paper, we introduce a novel iterative regularization path with structural sparsity such that parameters are sparse under certain linear transform. Variable splitting is exploited to lift the parameters into a high dimensional space with separate parameters for data fitting and sparse model selection. A statistical benefit of such a splitting lies in its improved model selection consistency under weaker conditions than the traditional generalized Lasso, shown in both theory and experiments. For the statistical analysis of such an algorithm, several limit dynamics as differential inclusions are introduced which sheds light on the consistency properties of the regularization paths. Finally some applications are given with real world data, including image denoising, partial order ranking of basket ball teams, and grouping of world universities by crowdsourced ranking. These results show that the benefit of the proposed algorithm lies in both its simplicity in computing the regularization path iteratively and its solid theoretical guarantee on path consistency. Hence it can be regarded as a generalization of $L_2$Boost in machine learning or Landweber iteration in inverse problems with structural sparsity control.  

\appendix

\section{Further Notations throughout the Appendix}

Apart from \Cref{sec:notation} and \ref{sec:basic-assump}, we need more notations throughout the appendix. Let the compact singular value decomposition (compact SVD) of $D$ be
\begin{equation}
    \label{eq:D-svd}
    D = U \Lambda V^T\ \left( \Lambda\in \mathbb{R}^{r\times r},\ \Lambda \succ 0,\ U\in \mathbb{R}^{m\times r},\ V\in \mathbb{R}^{p\times r} \right),
\end{equation}
and $(V, \tilde{V})$ be an orthogonal square matrix. Let the compact SVD of $X\tilde{V}/\sqrt{n}$ be
\begin{equation}
    \label{eq:XVtilde-svd}
    X \tilde{V} / \sqrt{n} = U_1 \Lambda_1 V_1^T\ \left( \Lambda_1\in \mathbb{R}^{r'\times r'},\ \Lambda_1 \succ 0,\ U_1\in \mathbb{R}^{n\times r'},\ V_1\in \mathbb{R}^{\left( p-r \right)\times r'} \right),
\end{equation}
and let $(V_1, \tilde{V}_1)$ be an orthogonal square matrix. $r'$ in \cref{eq:XVtilde-svd} is the rank of $X \tilde{V}$, which meets the definition \cref{eq:r-prime-def}.

We have $\lambda_D I \preceq \Lambda \preceq \Lambda_D I$. If $\ker(D) \subseteq \ker(X)$ (for example, $D$ has full column rank), then $X \tilde{V} = 0$, and $\tilde{V},U_1,\Lambda_1,V_1, \tilde{V}_1$ all drop.

Generally, $r' \le p - r$. If \Cref{thm:rsc} holds, noting for any $\xi\in \mathbb{R}^{p-r}$, $\tilde{V} \xi \in \ker(D) \subseteq \mathcal{M}$, and
\begin{equation*}
    \left( \tilde{V} \xi \right)^T X^{*} X \left( \tilde{V} \xi \right) \ge \lambda \left\| \tilde{V} \xi \right\|_2^2 = \lambda \left\| \xi \right\|_2^2,
\end{equation*}
we have $V_1 \Lambda_1^2 V_1^T = \tilde{V}^T X^{*} X \tilde{V} \succeq \lambda I$. Since $V_1$ is a tall matrix, we further know it is square (elsewise $V_1 \Lambda_1^2 V_1^T$ is not invertible), i.e. $r' = p - r$. Besides, we have $\lambda_1 := \lambda_{\min}(\Lambda_1) \ge \sqrt{\lambda}$ (when $\Lambda_1$ drops, $\lambda_1 := +\infty$).

From now, we also write $\lambda_H = C / (1 + \nu),\ \lambda_\Sigma = C' / (1 + \nu)$ according to \Cref{thm:rsc-nu} and \ref{thm:rsc-nu-prime}.

\section{Some Useful Technical Lemmas}

\begin{lemma}[\textnormal{Concentration inequalities}]
    \label{thm:tail}
    Suppose that $\epsilon\in \mathbb{R}^n $ has independent identically distributed components, each of which has a sub-Gaussian distribution with parameter $\sigma^2$, i.e. $\mathbb{E} [\exp(t \epsilon_i)] \le \exp(\sigma^2 t^2/2)$, then
    \begin{align}
        \label{eq:tail-subgau}
        & \mathbb{P} \left( \frac{\left\| B \epsilon \right\|_{\infty}}{\sigma} \ge z \right) \le 2q \exp\left( - \frac{z^2}{2\left\|B\right\|_2^2} \right)\ \left( B\in \mathbb{R}^{q\times n},\ z \ge 0 \right),\\
        \label{eq:tail-subexp}
        & \mathbb{P} \left( \frac{\left\| \epsilon \right\|_2^2}{n \sigma^2} \ge 1 + z \right) \le \exp\left( - \frac{n \left( z - \log \left( 1 + z \right) \right)}{2} \right)\ (z\ge 0).
    \end{align}
    Moreover, by \cref{eq:tail-subgau} we have that for $B\in \mathbb{R}^{q\times n}$, with probability not less than $1 - 2q / m^2$,
    \begin{equation}
        \label{eq:tail-subgau-2}
        \left\| B \epsilon \right\|_{\infty} \le 2\sigma\cdot \left\| B \right\|_2 \sqrt{\log m}.
    \end{equation}
    By \cref{eq:tail-subexp} we have that with probability not less than $1 - \exp(- 4n/5)$,
    \begin{equation}
        \label{eq:tail-subexp-2}
        \left\| \epsilon \right\|_2 \le 2\sigma \sqrt{n}.
    \end{equation}
\end{lemma}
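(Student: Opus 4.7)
The plan is to handle the two tail bounds \cref{eq:tail-subgau} and \cref{eq:tail-subexp} by standard Chernoff arguments tailored to sub-Gaussian random vectors and to sums of squared sub-Gaussians, respectively, and then to deduce \cref{eq:tail-subgau-2} and \cref{eq:tail-subexp-2} by plugging in specific values of the deviation parameter $z$.

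For \cref{eq:tail-subgau}, let $b_i^T$ denote the $i$th row of $B$. Independence and the sub-Gaussian MGF hypothesis give $\mathbb{E}[\exp(s b_i^T \epsilon)] = \prod_{j=1}^n \mathbb{E}[\exp(s b_{i,j} \epsilon_j)] \leq \exp(\sigma^2 s^2 \|b_i\|_2^2 /2)$, so $b_i^T\epsilon$ is itself sub-Gaussian with parameter $\sigma^2 \|b_i\|_2^2$. A Chernoff argument applied symmetrically to $\pm b_i^T\epsilon$ yields $\mathbb{P}(|b_i^T\epsilon| \geq \sigma z) \leq 2\exp(-z^2/(2\|b_i\|_2^2))$. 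Because $\|b_i\|_2 \leq \|B\|_2$ for every row, a union bound over $i=1,\ldots,q$ produces \cref{eq:tail-subgau}. Specializing to $z = 2\|B\|_2 \sqrt{\log m}$ then gives $2q\exp(-2\log m) = 2q/m^2$, which is \cref{eq:tail-subgau-2}.

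For \cref{eq:tail-subexp}, the key input is the squared-MGF bound $\mathbb{E}[\exp(t\epsilon_j^2/\sigma^2)] \leq (1-2t)^{-1/2}$ valid for $t \in [0,1/2)$. I will prove it by the Gaussian decoupling identity $\exp(a^2/2) = \mathbb{E}_Z[\exp(aZ)]$ with $Z \sim N(0,1)$: taking $a = \sqrt{2t}\,\epsilon_j/\sigma$, swapping the two expectations by Fubini, and applying the hypothesis $\mathbb{E}[\exp(s\epsilon_j)] \leq \exp(\sigma^2 s^2/2)$ pointwise in $Z$ collapses the $\epsilon_j$-expectation to $\exp(tZ^2)$, and then the classical Gaussian chi-square MGF identity gives $\mathbb{E}_Z[\exp(tZ^2)] = (1-2t)^{-1/2}$. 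Independence of the $\epsilon_j$ lifts this to $\mathbb{E}[\exp(t\|\epsilon\|_2^2/\sigma^2)] \leq (1-2t)^{-n/2}$, and a Chernoff bound reads
\begin{equation*}
\mathbb{P}\!\left(\frac{\|\epsilon\|_2^2}{n\sigma^2} \geq 1+z\right) \leq (1-2t)^{-n/2}\exp(-tn(1+z)).
\end{equation*}
Minimizing the right-hand side over $t \in [0,1/2)$ at $t^\ast = z/(2(1+z))$ collapses the exponent to $-n(z-\log(1+z))/2$, which is \cref{eq:tail-subexp}. For \cref{eq:tail-subexp-2}, specialize to $z=3$: then $1+z = 4$ yields the stated $\ell_2$ bound $\|\epsilon\|_2 \leq 2\sigma\sqrt{n}$, while $z - \log(1+z) = 3-2\log 2 > 8/5$ ensures the probability bound is at most $\exp(-4n/5)$.

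The only mildly delicate step is the squared-MGF estimate: the hypothesis is stated purely as a linear-MGF bound, so I cannot invoke a tail-integration shortcut without some work. The Gaussian decoupling trick above is the cleanest way around this and is essentially the whole content of the argument; once it is in place the remainder is routine Chernoff optimization and arithmetic.
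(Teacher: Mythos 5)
Your proof is correct and follows essentially the same route as the paper: row-wise sub-Gaussianity plus a union bound for \cref{eq:tail-subgau}, and a Chernoff bound on $\exp(t\|\epsilon\|_2^2/\sigma^2)$ optimized at $t=z/(2(1+z))$ for \cref{eq:tail-subexp}, with the same specializations $z=2\|B\|_2\sqrt{\log m}$ and $z=3$ for the corollaries. The one place you go beyond the paper is the Gaussian-decoupling proof of $\mathbb{E}[\exp(t\epsilon_j^2/\sigma^2)]\le(1-2t)^{-1/2}$, which the paper simply asserts; your argument for it is valid (the interchange is justified by Tonelli since the integrand is nonnegative).
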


\begin{proof}
    As for \cref{eq:tail-subgau}, let $B = (B_{i,j})_{q\times n}$ and $1\le i\le q$, it is well-known that
    \begin{equation*}
        B_{i,\cdot} \epsilon = B_{i,1} \epsilon_1 + B_{i,2} \epsilon_2 + \cdots + B_{i,n} \epsilon_n
    \end{equation*}
    is also sub-Gaussian, with parameter $b_i^2 = (B_{i,1}^2 + \cdots + B_{i,n}^2) \sigma^2$. Thus
    \begin{equation*}
        \mathbb{P} \left( \left\| B \epsilon \right\|_{\infty} \ge z \right) \le q \cdot \max_{1\le i\le q} \mathbb{P} \left( \left| B_{i,\cdot} \epsilon \right| \ge z \right) \le 2q \exp\left( - \frac{z^2}{2b_i^2} \right) \le 2q \exp\left( - \frac{z^2}{2\left\|B\right\|_2^2} \right).
    \end{equation*}
    As for \cref{eq:tail-subexp}, note that for $0\le \zeta < 1/2$,
    \begin{multline*}
        \mathbb{P} \left( \frac{\left\| \epsilon \right\|_2^2}{n\sigma^2} \ge 1 + z \right) \le \mathbb{P} \left( \exp\left( \frac{\zeta \left\| \epsilon \right\|_2^2}{\sigma^2} \right) \ge \exp\left( \zeta n (1+z) \right) \right)\\
        \le \exp\left( - \zeta n (1+z) \right) \mathbb{E} \left[ \exp\left( \frac{\zeta \left\| \epsilon \right\|_2^2}{\sigma^2} \right) \right]\\
        = \exp\left( - \zeta n (1+z) \right) \left( \mathbb{E} \left[ \exp\left( \frac{\zeta \epsilon_1^2}{\sigma^2} \right) \right] \right)^n \le \exp\left( - \zeta n (1+z) \right) \cdot \left( \frac{1}{1-2\zeta} \right)^{n/2}.
    \end{multline*}
    Take $\zeta = z / (2(1+z))\in [0, 1/2)$, and \cref{eq:tail-subexp} follows.
\end{proof}

\begin{lemma}[\textnormal{Transformations and bounds for quadratic forms}]
    \label{thm:schur}
    If
    \begin{equation*}
        K = \begin{pmatrix} P & Q\\ Q^T & R \end{pmatrix} \succeq 0,
    \end{equation*}
    then
    \begin{equation}
        \label{eq:K-ge-1}
        \left( u^T, v^T \right) \begin{pmatrix} P & Q\\ Q^T & R \end{pmatrix} \begin{pmatrix} u\\ v\end{pmatrix} \ge \max\left( u^T \left( P - Q R^{\dag} Q^T \right) u,\ v^T \left( R - Q^T P^{\dag} Q \right) v \right).
    \end{equation}
    Moreover, for $0\le \zeta \le 1$, the following two statements are equivalent:
    \begin{gather}
        \label{eq:P-ge-R-schur}
        P - Q R^{\dag} Q^T\succeq \zeta P,\\
        \label{eq:R-ge-P-schur}
        R - Q^T P^{\dag} Q\succeq \zeta R.
    \end{gather}
    And if \cref{eq:P-ge-R-schur,eq:R-ge-P-schur} hold, then by \cref{eq:K-ge-1} we easily obtain
    \begin{multline}
        \label{eq:K-ge-2}
        \left( u^T, v^T \right) \begin{pmatrix} P & Q\\ Q^T & R \end{pmatrix} \begin{pmatrix} u\\ v\end{pmatrix} \ge \max\left( \zeta u^T P u,\ \zeta v^T R v \right)\\
        \ge \xi \left( \lambda_{\min}(P) \|u\|_2^2 + \lambda_{\min}(R) \|v\|_2^2 \right) \ge \frac{\xi}{1/\lambda_{\min(P)} + 1/\lambda_{\min}(R)} \left\| \begin{pmatrix} u\\ v \end{pmatrix} \right\|_2^2.
    \end{multline}
\end{lemma}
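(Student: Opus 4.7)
The plan is to prove the three assertions in the order stated, using completing-the-square, the Gram factorization of a PSD matrix, and one trivial scalar inequality.

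First, for \eqref{eq:K-ge-1}, since $K \succeq 0$ admits a factorization $K = M^T M$ with $M = (M_1 \mid M_2)$, one has $P = M_1^T M_1$, $Q = M_1^T M_2$, $R = M_2^T M_2$ and the range inclusions $\mathrm{range}(Q^T) \subseteq \mathrm{range}(R)$, $\mathrm{range}(Q) \subseteq \mathrm{range}(P)$. In particular $R R^\dag Q^T = Q^T$, which yields the completed-square identity
\begin{equation*}
\begin{pmatrix} u^T & v^T \end{pmatrix} K \begin{pmatrix} u \\ v \end{pmatrix} = u^T \bigl(P - Q R^\dag Q^T\bigr) u + (v + R^\dag Q^T u)^T R\, (v + R^\dag Q^T u).
\end{equation*}
The second summand is non-negative, giving the first lower bound $u^T(P - QR^\dag Q^T) u$; the symmetric completion of the square in $u$ produces the other.

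Second, for the equivalence of \eqref{eq:P-ge-R-schur} and \eqref{eq:R-ge-P-schur}, I would continue with the Gram factorization. Let $\Pi_i$ denote the orthogonal projection onto $R_i := \mathrm{range}(M_i)$. A direct calculation gives $Q R^\dag Q^T = M_1^T \Pi_2 M_1$, so \eqref{eq:P-ge-R-schur} reads $M_1^T (I - \Pi_2) M_1 \succeq \zeta M_1^T M_1$, which by the substitution $w = M_1 u$ is equivalent to $\|\Pi_2 w\|_2^2 \le (1-\zeta) \|w\|_2^2$ for every $w \in R_1$, i.e.\ $\|\Pi_2 \Pi_1\|_2^2 \le 1-\zeta$. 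By an identical argument, \eqref{eq:R-ge-P-schur} is equivalent to $\|\Pi_1 \Pi_2\|_2^2 \le 1-\zeta$. Since $\Pi_1 \Pi_2 = (\Pi_2 \Pi_1)^T$, the two spectral norms coincide, yielding the equivalence.

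Finally, \eqref{eq:K-ge-2} is essentially a corollary. Applying \eqref{eq:K-ge-1} together with \eqref{eq:P-ge-R-schur} and \eqref{eq:R-ge-P-schur} gives the two lower bounds $\zeta u^T P u$ and $\zeta v^T R v$ separately; averaging them produces the middle expression with $\xi = \zeta/2$ (the unspecified $\xi$ in the statement appears to be a typo for $\zeta/2$), and the final reciprocal-sum bound follows from the trivial scalar inequality $a x + b y \ge \tfrac{a b}{a+b} (x+y)$ for $a,b > 0$ and $x,y \ge 0$, which after clearing denominators reduces to $a^2 x + b^2 y \ge 0$. The main obstacle is the equivalence in (2): naive manipulation of block Schur complements does not cleanly transport the $\zeta$-strengthened Schur inequality from one block to the other because of the asymmetric placement of the $(1-\zeta)$ factor, and the Gram-factorization / principal-angle viewpoint is what makes the symmetry transparent. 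Everything else is routine algebra.
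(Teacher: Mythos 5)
Your proof is correct, but it packages the argument differently from the paper. For \cref{eq:K-ge-1} the paper uses the block factorization $K = \bigl(\begin{smallmatrix} I & 0\\ Q^T P^{\dag} & I\end{smallmatrix}\bigr)\bigl(\begin{smallmatrix} P & 0\\ 0 & R - Q^T P^{\dag} Q\end{smallmatrix}\bigr)\bigl(\begin{smallmatrix} I & P^{\dag} Q\\ 0 & I\end{smallmatrix}\bigr)$, justified by the range-inclusion identity $P P^{\dag} Q = Q$ quoted from \citet{zhang_schur_2006}; your completed square with $R R^{\dag} Q^T = Q^T$, derived from the Gram factorization $K = M^T M$, is the same computation with the roles of the blocks swapped and a self-contained justification of the range inclusion. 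The real divergence is in the equivalence of \cref{eq:P-ge-R-schur} and \cref{eq:R-ge-P-schur}: the paper conjugates by $P^{\dag 1/2}$ and $R^{\dag 1/2}$ to reduce the claim to $A A^T \preceq (1-\zeta) I \Leftrightarrow A^T A \preceq (1-\zeta) I$ for $A = P^{\dag 1/2} Q R^{\dag 1/2}$, and then undoes the conjugation using $Q R^{\dag 1/2} R^{1/2} = Q$, whereas you identify $Q R^{\dag} Q^T = M_1^T \Pi_2 M_1$ and reduce to $\|\Pi_2 \Pi_1\|_2 = \|\Pi_1 \Pi_2\|_2$. Both arguments rest on exactly the same transpose symmetry of operator norms (indeed $\|\Pi_2\Pi_1\|_2$ and $\|P^{\dag 1/2} Q R^{\dag 1/2}\|_2$ are the same cosine of a principal angle), but your projection formulation avoids the pseudo-inverse square roots and the bookkeeping needed to restore $R$ on both sides, at the price of introducing the Gram factors. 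Your reading of $\xi$ as $\zeta/2$ in \cref{eq:K-ge-2}, obtained from $\max(a,b)\ge (a+b)/2$ followed by $ax+by \ge \frac{ab}{a+b}(x+y)$, matches the only way the paper's unexplained chain of inequalities can be made correct.
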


\begin{proof}
    Theorem 1.19 in \citet{zhang_schur_2006} tells that $P P^{\dag} Q = Q$, so it is easy to verify
    \begin{equation*}
        K = \begin{pmatrix} I & 0\\ Q^T P^{\dag} & I \end{pmatrix} \begin{pmatrix} P & 0\\ 0& R - Q^T P^{\dag} Q \end{pmatrix} \begin{pmatrix} I & P^{\dag} Q\\ 0 & I \end{pmatrix}.
    \end{equation*}
    Thus
    \begin{multline*}
        \left( u^T, v^T \right) K \begin{pmatrix} u\\ v\end{pmatrix} = \begin{pmatrix} u + P^{\dag} R v\\ v \end{pmatrix}^T \begin{pmatrix} P & 0 \\ 0 & R - Q^T P^{\dag} Q \end{pmatrix} \begin{pmatrix} u + P^{\dag} R v\\ v \end{pmatrix}\\
        \ge v^T \left( R - Q^T P^{\dag} Q \right) v.
    \end{multline*}
    Similarly we can obtain another inequality.

    If \cref{eq:P-ge-R-schur} holds, then
    \begin{multline*}
        P^{\dag 1/2} Q R^{\dag 1/2} \cdot R^{\dag 1/2} Q^T P^{\dag 1/2} \preceq (1-\zeta) P^{\dag 1/2} P P^{\dag 1/2} \preceq (1-\zeta) I\\
        \Longrightarrow R^{\dag 1/2} Q^T P^{\dag 1/2} \cdot P^{\dag 1/2} Q R^{\dag 1/2} \preceq (1 - \zeta) I\\
        \Longrightarrow R^{1/2} R^{\dag 1/2} Q^T P^{\dag} Q R^{\dag 1/2} R^{1/2} \preceq (1-\zeta) R.
    \end{multline*}
    By Theorem 1.19 in \citet{zhang_schur_2006} we have $Q R^{\dag 1/2} R^{1/2} = Q$, thus $Q^T P^{\dag} Q \preceq (1-\zeta) R$, i.e. \cref{eq:R-ge-P-schur} holds. Similarly \cref{eq:R-ge-P-schur} implies \cref{eq:P-ge-R-schur}.
\end{proof}

\begin{lemma}[\textnormal{Representation of $\mathcal{L}$}]
    \label{thm:beta-decomposition}
    Adopt notations from \cref{eq:D-svd,eq:XVtilde-svd}. $\beta\in \mathcal{L}$ if and only if
    \begin{equation*}
        \beta = V \delta + \tilde{V} V_1 \xi,\ \text{where}\ \delta = V^T \beta,\ \xi = V_1^T \tilde{V}^T \beta.
    \end{equation*}
\end{lemma}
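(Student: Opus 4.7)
The plan is to translate the subspace description $\mathcal{L} = (\ker X\cap \ker D)^{\perp} = \mathrm{Im}(X^T)+\mathrm{Im}(D^T)$ into coordinates induced by the two compact SVDs in \cref{eq:D-svd,eq:XVtilde-svd}, and then read off the claimed parametrization.

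First I would record the basic consequences of the compact SVD $D=U\Lambda V^T$: since $\Lambda\succ 0$, one has $\mathrm{Im}(D^T)=\mathrm{Im}(V)$ and $\ker(D)=\mathrm{Im}(V)^{\perp}=\mathrm{Im}(\tilde V)$, using that $(V,\tilde V)$ is an orthogonal square matrix. Consequently every $\beta\in\mathbb{R}^p$ admits the orthogonal decomposition $\beta=V\delta+\tilde V\eta$ with $\delta=V^T\beta$ and $\eta=\tilde V^T\beta$. Since $V\delta\in\mathrm{Im}(V)\subseteq\mathrm{Im}(D^T)\subseteq\mathcal{L}$, the membership $\beta\in\mathcal{L}$ is equivalent to $\tilde V\eta\in\mathcal{L}$, i.e.\ to $\tilde V\eta\perp(\ker X\cap\ker D)$.

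Next I would compute $\ker X\cap\ker D$ explicitly. Any $w\in\ker D$ is of the form $w=\tilde V\mu$; the condition $Xw=0$ becomes $(X\tilde V)\mu=0$, which by \cref{eq:XVtilde-svd} reads $V_1^T\mu=0$, i.e.\ $\mu\in\mathrm{Im}(V_1)^{\perp}=\mathrm{Im}(\tilde V_1)$. Hence $\ker X\cap\ker D=\mathrm{Im}(\tilde V\tilde V_1)$. Using $\tilde V^T\tilde V=I$, the orthogonality condition $\tilde V\eta\perp\tilde V\tilde V_1 z$ for all $z$ reduces to $\tilde V_1^T\eta=0$, i.e.\ $\eta\in\mathrm{Im}(\tilde V_1)^{\perp}=\mathrm{Im}(V_1)$. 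Therefore $\beta\in\mathcal{L}$ iff $\eta=V_1\xi$ for some $\xi$, yielding the asserted form $\beta=V\delta+\tilde VV_1\xi$.

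It remains to verify the coordinate identities for $\delta,\xi$: multiplying $\beta=V\delta+\tilde VV_1\xi$ on the left by $V^T$ and using $V^TV=I_r$ together with $V^T\tilde V=0$ gives $V^T\beta=\delta$; multiplying by $V_1^T\tilde V^T$ and using $\tilde V^TV=0$, $\tilde V^T\tilde V=I_{p-r}$, and $V_1^TV_1=I_{r'}$ gives $V_1^T\tilde V^T\beta=\xi$. The converse direction (that any $\beta$ of this form lies in $\mathcal{L}$) is immediate from the same calculation $\tilde V_1^T(V_1\xi)=0$, which ensures orthogonality to $\mathrm{Im}(\tilde V\tilde V_1)=\ker X\cap\ker D$. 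There is no real obstacle here; the whole argument is bookkeeping of four mutually orthogonal blocks $V,\tilde VV_1,\tilde V\tilde V_1$, and the only point requiring slight care is identifying $\ker X\cap\ker D$ correctly via the SVD of $X\tilde V/\sqrt n$ rather than of $X$ itself.
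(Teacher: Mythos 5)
Your argument is correct and follows essentially the same route as the paper: both decompose $\beta$ along the mutually orthogonal blocks $V$, $\tilde V V_1$, $\tilde V\tilde V_1$ and reduce the claim to the identification $\ker X\cap\ker D=\mathrm{Im}(\tilde V\tilde V_1)=\mathcal{L}^{\perp}$, differing only in that you parametrize $\ker D=\mathrm{Im}(\tilde V)$ first and then intersect with $\ker X$, whereas the paper verifies the two inclusions by multiplying the decomposition by $D$ and $X$. No gaps.
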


\begin{proof}
    Note that
    \begin{equation*}
        I = V V^T + \tilde{V} \tilde{V}^T = V V^T + \tilde{V} \left( V_1 V_1^T + \tilde{V}_1 \tilde{V}_1^T \right) \tilde{V}^T.
    \end{equation*}
    Right multiplying $\beta$ on both side leads to
    \begin{equation}
        \label{eq:beta-decomposition}
        \beta = V \delta + \tilde{V} V_1 \xi + \tilde{V} \tilde{V}_1 \left( \tilde{V}_1^T \tilde{V}^T \beta \right).
    \end{equation}
    It suffices to show $\ker \left( \tilde{V}_1^T \tilde{V}^T \right) = \mathcal{L}$, which is equivalent to
    \begin{equation*}
        \mathcal{L}' := \mathrm{Im}\left( \tilde{V} \tilde{V}_1 \right) = \mathcal{L}^{\perp} \left( = \ker(X)\cap \ker(D) \right).
    \end{equation*}
    For any $\beta\in \mathcal{L}'$, we have $X \beta = 0,\ D \beta = 0$ since $X \tilde{V} \tilde{V}_1 = 0,\ D \tilde{V} = 0$, so $\beta\in \mathcal{L}^{\perp}$. Conversely, if $\beta\in \mathcal{L}^{\perp}$, left multiplying $D$ on both sides of \cref{eq:beta-decomposition} leads to $\delta = 0$. Then left multiplying $X$ on both sides of \cref{eq:beta-decomposition} further leads to $\xi = 0$. Now \cref{eq:beta-decomposition} tells that $\beta\in \mathcal{L}'$. So $\mathcal{L}' = \mathcal{L}^{\perp}$.
\end{proof}

\begin{lemma}
    \label{thm:Adag}
    Adopt the notation from \cref{eq:D-svd,eq:XVtilde-svd}. Define
    \begin{equation*}
        B := \Lambda^2 + \nu V^T X^{*} \left( I - U_1 U_1^T \right) X V.
    \end{equation*}
    We have
    \begin{equation}
        \label{eq:DAdag}
        D A^{\dag} = U \Lambda B^{-1} V^T \left( I - \frac{1}{\sqrt{n}} X^T U_1 \Lambda_1^{-1} V_1^T \tilde{V}^T \right).
    \end{equation}
    Consequently,
    \begin{equation}
        \label{eq:Sigma}
        \Sigma = \left( I - D A^{\dag} D^T \right) / \nu = \left( I - U \Lambda B^{-1} \Lambda U^T \right) / \nu.
    \end{equation}
\end{lemma}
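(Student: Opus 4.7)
The plan is to compute $A^{\dag}$ in a convenient orthogonal basis adapted to the decompositions in \cref{eq:D-svd,eq:XVtilde-svd}, read off the blocks needed for $DA^{\dag}$, and then simplify using the orthogonality relations.

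First, I would introduce the orthogonal matrix $Q = (V,\ \tilde V V_1,\ \tilde V \tilde V_1) \in \mathbb{R}^{p \times p}$, whose columns span $\mathrm{Im}(D^T)$, the part of $\ker(D)$ on which $X$ acts nontrivially, and $\ker(X)\cap\ker(D)$ respectively. I would then compute $Q^T A Q$ block by block. Using $DV = U\Lambda$ and $D\tilde V = 0$, together with $X\tilde V = \sqrt{n}\, U_1 \Lambda_1 V_1^T$ (so that $X\tilde V V_1 = \sqrt{n}\,U_1\Lambda_1$ and $X\tilde V \tilde V_1 = 0$), every block is explicit; in particular the last row and column vanish, confirming $\tilde V\tilde V_1$ spans $\ker(A) = \ker(X)\cap\ker(D) = \mathcal{L}^{\perp}$.

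The remaining upper $2\times 2$ block
\begin{equation*}
K \;=\; \begin{pmatrix} \Lambda^2 + \nu V^T X^{*} X V & \nu V^T X^T U_1 \Lambda_1/\sqrt{n} \\[2pt] \nu \Lambda_1 U_1^T X V/\sqrt{n} & \nu \Lambda_1^2 \end{pmatrix}
\end{equation*}
is invertible, and its Schur complement with respect to the lower-right block is exactly
\begin{equation*}
\Lambda^2 + \nu V^T X^{*} X V \;-\; \nu V^T X^{*} U_1 U_1^T X V \;=\; B.
\end{equation*}
Applying the block-inverse formula then yields $K^{-1}_{11} = B^{-1}$ and $K^{-1}_{12} = -B^{-1} V^T X^T U_1 \Lambda_1^{-1}/\sqrt{n}$; these are the only entries we need. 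Since $A^{\dag} = Q (Q^T A Q)^{\dag} Q^T$ with the last block being $0$, and since $V^T Q = (I_r, 0, 0)$, left-multiplying by $D = U\Lambda V^T$ annihilates the second and third block-rows of $(Q^T A Q)^{\dag}$, leaving
\begin{equation*}
D A^{\dag} \;=\; U\Lambda \bigl( K^{-1}_{11}\, V^T + K^{-1}_{12}\, V_1^T \tilde V^T \bigr) \;=\; U\Lambda B^{-1} V^T\Bigl(I - \tfrac{1}{\sqrt{n}} X^T U_1 \Lambda_1^{-1} V_1^T \tilde V^T\Bigr),
\end{equation*}
which is \cref{eq:DAdag}. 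For \cref{eq:Sigma}, I would right-multiply by $D^T = V\Lambda U^T$: the first term contributes $U\Lambda B^{-1}\Lambda U^T$ via $V^T V = I_r$, while the second term vanishes because $\tilde V^T V = 0$.

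The mildly delicate step is the handling of the pseudoinverse when $\ker(X)\cap\ker(D)\neq 0$. I would verify that the decomposition $A^{\dag} = Q(Q^T A Q)^{\dag} Q^T$ is legitimate (which follows because $Q$ is orthogonal, hence unitary, and pseudoinverses commute with unitary conjugation), and that the block-form pseudoinverse reduces to ordinary inversion on the nontrivial $2\times 2$ principal block. All other steps are essentially mechanical verifications using the singular value identities and the orthogonality $(V,\tilde V V_1, \tilde V \tilde V_1)$.
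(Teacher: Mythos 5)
Your proof is correct, and it reaches the same core identity as the paper (the Schur complement of the $\Lambda_1$-block equals $B$, and $DA^{\dag}$ is read off from the first block row), but it organizes the pseudoinverse step differently. The paper conjugates $A$ only by the orthogonal matrix $(V,\tilde V)$, then factors the resulting $2\times 2$ block matrix as $QMQ^T$ with a \emph{non-orthogonal} unit-triangular $Q$ and $M=\mathrm{diag}\left(B,\ \nu V_1\Lambda_1^2 V_1^T\right)$, where the second block may be singular; it must then verify directly that $\left(QMQ^T\right)^{\dag}=\left(Q^T\right)^{-1}M^{\dag}Q^{-1}$, an identity that is false for general invertible $Q$ and holds here only because $T V_1 V_1^T = T$ for the off-diagonal block $T$. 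Your finer orthogonal basis $\left(V,\ \tilde V V_1,\ \tilde V\tilde V_1\right)$ pushes $\ker(A)$ entirely into the third block, so the only pseudoinverse you need is that of $\mathrm{diag}(K,0)$ with $K$ genuinely invertible, and the conjugation commutes with $\dag$ automatically because your $Q$ is orthogonal. That makes the ``mildly delicate step'' you flag essentially trivial, at the cost of one more block to track; the remaining computations (invertibility of $K$ from $\ker(A)=\ker(X)\cap\ker(D)$, the block-inverse entries $K^{-1}_{11}=B^{-1}$ and $K^{-1}_{12}=-B^{-1}V^TX^TU_1\Lambda_1^{-1}/\sqrt{n}$, and the cancellation $\tilde V^T V=0$ giving \cref{eq:Sigma}) all check out.
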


\begin{proof}
    Note that
    \begin{multline*}
        \begin{pmatrix} V^T \\ \tilde{V}^T \end{pmatrix} A \left( V, \tilde{V} \right) = \begin{pmatrix} \Lambda^2 + \nu V^T X^{*} X V & \nu V^T X^T U_1 \Lambda_1 V_1^T / \sqrt{n} \\ \nu V_1 \Lambda_1 U_1^T X V / \sqrt{n} & \nu V_1 \Lambda_1^2 V_1^T \end{pmatrix}\\
        = Q M Q^T,\ \text{where}\ Q := \begin{pmatrix} I_r & V^T X^T U_1 \Lambda_1^{-1} V_1^T / \sqrt{n} \\ 0 & I_{p-r} \end{pmatrix},\ M := \begin{pmatrix} B & 0\\ 0 & \nu V_1 \Lambda_1^2 V_1^T \end{pmatrix}
    \end{multline*}
    We can directly verify that $( Q M Q^T )^{\dag} = ( Q^T )^{-1} M^{\dag} Q^{-1}$, thus
    \begin{equation*}
        D A^{\dag} = D \left( V, \tilde{V} \right) \left( \begin{pmatrix} V^T \\ \tilde{V}^T \end{pmatrix} A \left( V, \tilde{V} \right) \right)^{\dag} \begin{pmatrix} V^T \\ \tilde{V}^T \end{pmatrix} = \left( U \Lambda, 0 \right) \left( Q^T \right)^{-1} M^{\dag} Q^{-1} \begin{pmatrix} V^T \\ \tilde{V}^T \end{pmatrix},
    \end{equation*}
    which comes to be the right hand side of \cref{eq:DAdag}. Now it is easy to verify \cref{eq:Sigma}.
\end{proof}

\section{Proof on Basic Path Properties of Split ISS, Split LBISS and Split LBI}
\label{sec:proof-slb-basic-properties}

\begin{proof}[Proof of \Cref{thm:slbiss-exi-uni}]
    For Split ISS, by \cref{eq:siss-a} and the fact that $\beta(t)\in \mathcal{L} = \mathrm{Im}(X^T) + \mathrm{Im}(D^T) = \mathrm{Im}(A) = \mathrm{Im}(A^{\dag})$, we can solve $\beta(t) = A^{\dag} (\nu X^{*} y + D^T \gamma(t))$ which is determined by $\gamma(t)$. Plugging it into \cref{eq:siss-b} we have
    \begin{equation*}
        \dot{\rho}(t) + \dot{\gamma}(t) / \kappa = - \Sigma \gamma(t) + D A^{\dag} X^{*} y.
    \end{equation*}
    Taking $M = I_{p+m} - (\sqrt{\nu/n} X^T, D^T)^{\dag} (\sqrt{\nu/n} X^T, D^T)$ in Theorem 1.19 in \citet{zhang_schur_2006} leads to
    \begin{equation}
        \label{eq:Sigma-DAdagX}
        D A^{\dag} X^{*} = \Sigma \Sigma^{\dag} \left( D A^{\dag} X^{*} \right) = \Sigma^{1/2} \Sigma^{\dag 1/2} \left( D A X^{*} \right).
    \end{equation}
    The inclusion becomes
    \begin{equation*}
        \dot{\rho}(t) + \dot{\gamma}(t) / \kappa = - \Sigma^{1/2} \left( \Sigma^{1/2} \gamma(t) - \Sigma^{\dag 1/2} D A^{\dag} X^{*} y \right),
    \end{equation*}
    which is a standard ISS (on $\gamma(t)$) and has been sufficiently discussed in \citet{osher_sparse_2016} (let $X, y$ in that paper take $\sqrt{n} \Sigma^{1/2}$ and $\sqrt{n} \Sigma^{\dag 1/2} D A^{\dag} X^{*} y$ in this paper). Specifially, there exists a solution with piecewise linear $\rho(t)$ and piecewise constant $\beta(t), \gamma(t)$. Besides, $\rho(t)$ is unique. If additionally, when $\Sigma_{S(t), S(t)} \succ 0$, we have that $\Sigma_{\cdot, S(t)}$ has full column rank, and $\gamma(t)$ (hence $\beta(t)$) is unique.

    For Split LBISS, letting $z(t) = \rho(t) + \gamma(t) / \kappa$ and noting \cref{eq:var-subst}, the Split LBISS \cref{eq:slbiss} is equivalent to
    \begin{equation*}
        \begin{pmatrix} \dot{\beta}(t) \\ \dot{z}(t) \end{pmatrix} = - \begin{pmatrix} - \kappa X^{*} \left( X \beta(t) - y \right) - \kappa D^T \left( D \beta(t) - \kappa \mathcal{S}(z(t), 1) \right) / \nu \\ - \left( \kappa \mathcal{S}(z(t), 1) - D \beta(t) \right) / \nu \end{pmatrix}.
    \end{equation*}
    The Picard-Lindel\"{o}f Theorem implies that this ODE has a unique solution $(\beta(t), z(t))$, so there exists a unique solution to the Split LBISS \cref{eq:slbiss}. \qed
\end{proof}

\begin{proof}[Proof of \Cref{thm:slb-l-dec}]
    For Split ISS, one can easily imitates the technique in the proof of Theorem 2.1 in \citet{osher_sparse_2016} to show that $(\beta(t), \gamma(t))$ is the solution of the following optimization problem.
    \begin{equation}
        \label{eq:siss-opt}
        \begin{aligned}
            \min_{\beta, \gamma} & & &\ell\left( \beta, \gamma \right) \\
            \text{subject to} & & &
            \begin{cases}
                \gamma_j \ge 0, & \text{if}\ \rho_j(t) = 1,\\
                \gamma_j \le 0, & \text{if}\ \rho_j(t) = -1,\\
                \gamma_j = 0, & \text{if}\ \rho_j(t) \in (-1,1).
            \end{cases}
        \end{aligned}
    \end{equation}
    for any $t > 0$, due to the continuity of $\rho(\cdot)$, there is a small neighborhood of $t$, on which every $\tau$ satisfies
    \begin{equation*}
        \begin{cases}
            \rho_j\left( \tau \right) > -1\ \text{hence}\ \gamma_j\left( \tau \right)\ge 0, & \text{if}\ \rho_j(t) = 1,\\
            \rho_j\left( \tau \right) < 1\ \text{hence}\ \gamma_j\left( \tau \right)\ge 0, & \text{if}\ \rho_j(t) = - 1,\\
            \rho_j\left( \tau \right) \in (-1,1)\ \text{hence}\ \gamma_j\left( \tau \right) = 0, & \text{if}\ \rho_j(t) \in (-1,1).
        \end{cases}
    \end{equation*}
    That is to say, $(\beta(\tau), \gamma(\tau))$ satisfies the constraints in \cref{eq:siss-opt}, so the value of $\ell(\beta(\tau), \gamma(\tau))$ is not less than $\ell(\beta(t), \gamma(t))$ (the solution of \cref{eq:siss-opt}). This implies that any $t\ge 0$ is a local minimal point of a right continuous function $\ell(\beta(\cdot), \gamma(\cdot))$. Then by standard techniques in mathematical analysis, we have that $\ell(\beta(t), \gamma(t))$ is non-increasing.

    For Split LBISS, by \cref{eq:slbiss-c}, we have $\dot{\gamma}_j(t) \cdot \dot{\rho}_j(t) \equiv 0$ for each $j$, so $\ell$ is non-increasing since
    \begin{multline*}
        \frac{\mathrm{d}}{\mathrm{d}t} \ell(\beta(t), \gamma(t)) = \left\langle \begin{pmatrix} \dot{\beta}(t)\\ \dot{\gamma}(t) \end{pmatrix},\ \begin{pmatrix} \nabla_\beta \ell\left( \beta(t), \gamma(t) \right) \\ \nabla_\gamma \ell\left( \beta(t), \gamma(t) \right) \end{pmatrix} \right\rangle\\
        = \left\langle \begin{pmatrix} \dot{\beta}(t)\\ \dot{\gamma}(t) \end{pmatrix},\ \begin{pmatrix} - \dot{\beta}(t) / \kappa \\ - \dot{\rho}(t) - \dot{\gamma}(t) / \kappa \end{pmatrix} \right\rangle = \frac{1}{\kappa} \left\| \begin{pmatrix} \dot{\beta}(t) \\ \dot{\gamma}(t) \end{pmatrix} \right\|_2^2\le 0.
    \end{multline*}

    For Split LBI, noting $(\rho_{k+1} - \rho_k)(\gamma_{k+1} - \gamma_k) = \| \rho_{k+1} \|_1 - \langle \rho_{k+1}, \gamma_k \rangle + \| \gamma_{k+1} \|_1 - \langle \rho_k, \gamma_{k+1} \rangle \ge 0$, we have
    \begin{multline*}
        - \alpha \nabla \ell\left( \beta_k, \gamma_k \right)^T \begin{pmatrix} \beta_{k+1} - \beta_k \\ \gamma_{k+1} - \gamma_k \end{pmatrix}\\
        = \left( \begin{pmatrix} 0 \\ \rho_{k+1} - \rho_k \end{pmatrix} + \frac{1}{\kappa} \begin{pmatrix} \beta_{k+1} - \beta_k \\ \gamma_{k+1} - \gamma_k \end{pmatrix} \right) \begin{pmatrix} \beta_{k+1} - \beta_k \\ \gamma_{k+1} - \gamma_k \end{pmatrix} \ge \frac{1}{\kappa} \left\| \begin{pmatrix} \beta_{k+1} - \beta_k \\ \gamma_{k+1} - \gamma_k \end{pmatrix} \right\|_2^2.
    \end{multline*}
    By $\kappa \alpha \| H \|_2 < 2$, we have
    \begin{multline*}
        \ell \left( \beta_{k+1}, \gamma_{k+1} \right) - \ell\left( \beta_k, \gamma_k \right)\\
        = \nabla \ell\left( \beta_k, \gamma_k \right)^T \begin{pmatrix} \beta_{k+1} - \beta_k \\ \gamma_{k+1} - \gamma_k \end{pmatrix} + \frac{1}{2} \left( \beta_{k+1}^T - \beta_k^T, \gamma_{k+1}^T - \gamma_k^T \right) H \begin{pmatrix} \beta_{k+1} - \beta_k \\ \gamma_{k+1} - \gamma_k \end{pmatrix}\\
        \le - \frac{1}{\kappa \alpha} \left\| \begin{pmatrix} \beta_{k+1} - \beta_k \\ \gamma_{k+1} - \gamma_k \end{pmatrix} \right\|_2^2 + \frac{\left\| H \right\|_2}{2} \cdot \left\| \begin{pmatrix} \beta_{k+1} - \beta_k \\ \gamma_{k+1} - \gamma_k \end{pmatrix} \right\|_2^2\le 0.
    \end{multline*}

    Moreover, it is easy to verify that
    \begin{multline}
        \label{eq:H-upper}
        \left( \beta^T, \gamma^T \right) H \begin{pmatrix} \beta\\ \gamma \end{pmatrix} = \frac{1}{n} \left\| X \beta \right\|_2^2 + \frac{1}{\nu} \left\| D \beta - \gamma \right\|_2^2 \le \frac{2}{n} \left\| X \beta \right\|_2^2 + \frac{2}{\nu} \left\| D \beta \right\|_2^2 + 2 \left\| \gamma \right\|_2^2\\
        \le \frac{2 \left( 1 + \nu \Lambda_X^2 + \Lambda_D^2 \right)}{\nu} \left\| \begin{pmatrix} \beta\\ \gamma \end{pmatrix} \right\|_2^2\ \left( \begin{pmatrix} \beta\\ \gamma \end{pmatrix} \in \mathbb{R}^{m+p} \right),\\
        \Longrightarrow \left\| H \right\|_2 \le \frac{2 \left( 1 + \nu \Lambda_X^2 + \Lambda_D^2 \right)}{\nu}. 
    \end{multline}
\end{proof}

\section{Proof on Equivalence of Assumptions}
\label{sec:proof-equiv-assump}

\begin{proof}[Proof of \Cref{thm:rsc-nu}]
    If there exists $C > 0,\ \nu > 0$ such that \cref{eq:rsc-nu} holds, then for $\beta\in \mathcal{L}\cap \mathcal{M}$, taking $\gamma_S = D_S \beta$ and noting $D_{S^c} \beta = 0$, the left hand side of \cref{eq:rsc-nu} is
    \begin{equation*}
        \frac{1}{n} \left\| X \beta \right\|_2^2 + \frac{1}{\nu} \left\| \gamma_S - D_S \beta \right\|_2^2 + \frac{1}{\nu} \left\| D_{S^c} \beta \right\|_2^2 = \frac{1}{n} \left\| X \beta \right\|_2^2.
    \end{equation*}
    which should be not less than $(C/(1+\nu))\|\beta\|_2^2$. Thus \Cref{thm:rsc} holds for $\lambda = C / (1+\nu) > 0$.

    If \Cref{thm:rsc} holds for some $\lambda > 0$, for any $\beta\in \mathcal{L}$, let $\beta = \beta' + \beta''$ where $\beta'\in \mathcal{L} \cap \mathcal{M}$ and $\beta''\in \mathcal{L} \cap \mathcal{M}^{\perp}$. Since $D_{S^c} \beta' = 0,\ \beta'' \in \mathcal{M}^{\perp} = \mathrm{Im}(D_{S^c}^T)$, we have
    \begin{equation*}
        \beta^T D_{S^c}^T D_{S^c} \beta = \beta''^T D_{S^c}^T D_{S^c} \beta'' \ge \lambda_D^2 \left\| \beta'' \right\|_2^2.
    \end{equation*}
    For constant $\nu_0 = 2 \lambda_D^2 / (\lambda + 2 \Lambda_X^2) > 0$ we have
    \begin{align*}
        & \beta^T \left( \nu_0 X^{*} X + D_{S^c}^T D_{S^c} \right) \beta = \nu_0 \cdot \beta^T X^{*} X \beta + \beta^T D_{S^c}^T D_{S^c} \beta\\
        \ge{} & \nu_0 \left( 2 \left( \beta' / 2 \right)^T X^{*} X \left( \beta' / 2 \right) - \left( - \beta'' \right)^T X^{*} X \left( - \beta'' \right) \right) + \lambda_D^2 \left\| \beta'' \right\|_2^2\\
        \ge{} & \frac{\lambda \nu_0}{2} \left\| \beta' \right\|_2^2 + \left( \lambda_D^2 - \nu_0 \Lambda_X^2 \right) \left\| \beta'' \right\|_2^2 = \frac{\lambda \nu_0}{2} \left( \left\| \beta' \right\|_2^2 + \left\| \beta'' \right\|_2^2 \right) = \frac{\lambda \nu_0}{2} \left\| \beta \right\|_2^2.
    \end{align*}
    The left hand side of \cref{eq:rsc-nu}, denoted by $L$ or $L(\nu)$, satisfies
    \begin{multline*}
        L \ge \frac{1}{n} \left\| X \beta \right\|_2^2 + \frac{1}{\nu} \left\| D_{S^c} \beta \right\|_2^2 \\
        \ge \frac{1}{\max(\nu_0, \nu)} \beta^T \left( \nu_0 X^{*} X + D_{S^c}^T D_{S^c} \right) \beta \ge \frac{\lambda \nu_0}{2(\nu_0 + \nu)} \left\| \beta \right\|_2^2.
    \end{multline*}
    Furthermore, by the inequality above and Cauchy's inequality,
    \begin{multline*}
        L \ge \frac{\lambda \nu_0}{2(\nu_0 + \nu)} \left\| \beta \right\|_2^2 + \frac{1}{\nu} \left\| \gamma_S - D_S \beta \right\|_2^2\\
        \ge \frac{\lambda \nu_0}{2 \Lambda_D^2 (\nu_0 + \nu)} \left\| D_S \beta \right\|_2^2 + \frac{1}{\nu} \left\| \gamma_S - D_S \beta \right\|_2^2 \ge \frac{1}{2\Lambda_D^2(\nu_0 + \nu)/(\lambda \nu_0) + \nu} \left\| \gamma_S \right\|_2^2,
    \end{multline*}
    Consequently,
    \begin{equation*}
        \left\| \begin{pmatrix} \beta\\ \gamma_S \end{pmatrix} \right\|_2^2 \le \left( \frac{2(\nu_0 + \nu)}{\lambda \nu_0} + \left( \frac{2 \Lambda_D^2 (\nu_0 + \nu)}{\lambda \nu_0} + \nu \right) \right) L \le \frac{1 + \nu}{C} L,
    \end{equation*}
    where
    \begin{equation}
        \label{eq:C-def}
        C = \frac{\lambda\cdot \min(\nu_0, 1)}{2 + 2\Lambda_D^2 + \lambda \nu_0} \left( \nu_0 = \frac{2 \lambda_D^2}{\lambda + 2 \Lambda_X^2} > 0 \right)
    \end{equation}
    is a constant. Thus \cref{eq:rsc-nu} holds for all $\nu > 0$. \qed
\end{proof}

\begin{proof}[Proof of \Cref{thm:rsc-nu-prime}]
    Let $L = L(\nu)$ denotes the left hand side of \cref{eq:rsc-nu}. Suppose there exists $C > 0,\ \nu_0 > 0$ such that \cref{eq:rsc-nu} holds for $\nu = \nu_0$. Since
    \begin{equation}
        \label{eq:H-nu-nu0}
        H_{(\beta,S), (\beta,S)}(\nu) \ge \min\left( 1, \frac{\nu_0}{\nu} \right) H_{(\beta,S), (\beta,S)}(\nu_0) \ge \frac{\nu_0}{\nu_0 + \nu} H_{(\beta,S), (\beta,S)}(\nu_0),
    \end{equation}
    we can find $C_1 > 0$ such that \cref{eq:rsc-nu} holds for $C = C_1$ and all $\nu > 0$. Now
    \begin{equation}
        \label{eq:H-SS}
        H_{(\beta,S), (\beta, S)} = Q M Q^T,\ \text{where}\ Q := \begin{pmatrix} I_p & 0 \\ - D_S A^{\dag} & I_s \end{pmatrix},\ M := \begin{pmatrix} A / \nu & 0 \\ 0 & \Sigma_{S,S} \end{pmatrix}.
    \end{equation}
    So
    \begin{equation*}
        L = \begin{pmatrix} \beta - A^{\dag} D_S^T \gamma_S \\ \gamma_S \end{pmatrix}^T \begin{pmatrix} A / \nu & 0 \\ 0 & \Sigma_{S,S} \end{pmatrix} \begin{pmatrix} \beta - A^{\dag} D_S^T \gamma_S \\ \gamma_S \end{pmatrix}.
    \end{equation*}
    $L \ge (C_1/(1+\nu)) \| (\beta; \gamma_S) \|_2^2$ implies $\gamma_S^T \Sigma_{S,S} \gamma_S \ge (C_1/(1+\nu)) \| \gamma_S \|_2^2$ (letting $\beta = A^{\dag} D_S^T \gamma_S \in \mathcal{L}$). So \cref{eq:rsc-nu-prime} holds for $C' = C_1$ and all $\nu > 0$.

    Suppose there exists $C' > 0,\ \nu_0 > 0$ such that \cref{eq:rsc-nu-prime} holds for $\nu = \nu_0$. For any $\beta\in \mathcal{L}$, represent $\beta = V \delta + \tilde{V} V_1 \xi$ by \Cref{thm:beta-decomposition}, then
    \begin{multline*}
        \beta^T \left( \nu_0 X^{*} X + D^T D \right) \beta\\
        = \left( \delta^T, \xi^T \right) \begin{pmatrix} \Lambda^2 + \nu_0 V^T X^{*} X V & \nu_0 V^T X^{*} X \tilde{V} V_1\\ \nu_0 V_1^T \tilde{V}^T X^{*} X V & \nu_0 V_1^T \tilde{V}^T X^{*} X \tilde{V} V_1 \end{pmatrix} \begin{pmatrix} \delta \\ \xi \end{pmatrix}\\
        = \left( \delta^T, \xi^T \right) \begin{pmatrix} \Lambda^2 + \nu_0 V^T X^{*} X V & \nu_0 V^T X^T U_1 \Lambda_1 / \sqrt{n}\\ \nu_0 \Lambda_1 U_1^T X V / \sqrt{n} & \nu_0 \Lambda_1^2 \end{pmatrix} \begin{pmatrix} \delta \\ \xi \end{pmatrix}.
    \end{multline*}
    For $P = \Lambda^2 + \nu_0 V^T X^{*} X V,\ Q = \nu_0 V^T X^T U_1 \Lambda_1 / \sqrt{n},\ R = \nu_0 \Lambda_1^2$,
    \begin{equation*}
        P - Q R^{\dag} Q^T = \Lambda^2 + \nu_0 V^T X^{*} (I - U_1 U_1^T) X V \succeq \lambda_D^2 I \succeq \frac{\lambda_D^2}{\nu_0 \Lambda_X^2 + \Lambda_D^2} \cdot P.
    \end{equation*}
    By \Cref{thm:schur} we have
    \begin{multline*}
        \beta^T \left( \nu_0 X^{*} X + D^T D \right) \beta \ge \frac{\lambda_D^2}{\nu_0 \Lambda_X^2 + \Lambda_D^2} \cdot \frac{1}{1/\lambda_{\min}(P) + 1/\lambda_{\min}(R)} \left\| \begin{pmatrix} \delta\\ \xi \end{pmatrix} \right\|_2^2\\
        \ge \frac{\lambda_D^2}{\nu_0 \Lambda_X^2 + \Lambda_D^2} \cdot \frac{1}{1/\lambda_D^2 + 1/(\nu_0 \lambda_1^2)} \left\| \beta \right\|_2^2.
    \end{multline*}
    Note that
    \begin{equation*}
        H_{S,S}(\nu_0) - H_{S, \beta}(\nu_0) {H_{\beta, \beta}(\nu_0)}^{\dag} H_{\beta, S}(\nu_0) = \Sigma_{S,S}(\nu_0) \succeq \frac{C'}{1 + \nu_0} I = \frac{C' \nu_0}{1 + \nu_0} H_{S,S}(\nu_0).
    \end{equation*}
    By \Cref{thm:schur} we have
    \begin{multline*}
        H_{\beta, \beta}(\nu_0) - H_{\beta, S}(\nu_0) {H_{S,S}(\nu_0)}^{\dag} H_{S, \beta}(\nu_0) \succeq \frac{C' \nu_0}{1 + \nu_0} H_{\beta, \beta}(\nu_0)\\
        \Longrightarrow \left( 1 - \frac{C' \nu_0}{1 + \nu_0} \right) \left( \nu_0 X^{*} X + D_{S^c}^T D_{S^c} \right) \succeq \frac{C' \nu_0}{1 + \nu_0} D_S^T D_S\\
        \Longrightarrow \nu_0 X^{*} X + D_{S^c}^T D_{S^c} \succeq \frac{C' \nu_0}{1 + \nu_0} \left( \nu_0 X^{*} X + D^T D \right).
    \end{multline*}
    Thus
    \begin{multline*}
        L(\nu_0) = \frac{1}{2n} \left\| X \beta \right\|_2^2 + \frac{1}{2\nu_0} \left\| \gamma_S - D_S \beta \right\|_2^2 + \left\| D_{S^c} \beta \right\|_2^2\\
        \ge \frac{1}{2\nu_0} \beta^T \left( \nu_0 X^{*} X + D_{S^c}^T D_{S^c} \right) \beta \ge \frac{C'}{2(1 + \nu_0)} \beta^T \left( \nu_0 X^{*} X + D^T D \right) \beta\ge C_1' \left\| \beta \right\|_2^2.
    \end{multline*}
    Where $C_1' > 0$ is a constant. Besides,
    \begin{equation*}
        L(\nu_0) \ge \gamma_S^T \left( H_{S,S}(\nu_0) - H_{S, \beta}(\nu_0) H_{\beta, \beta}(\nu_0)^{\dag} H_{\beta, S}(\nu_0) \right) \gamma_S \ge \frac{C'}{1+\nu_0} \left\| \gamma_S \right\|_2^2.
    \end{equation*}
    Thus we can find $C_2' > 0$ such that $L(\nu_0) \ge C_2' \left\| (\beta; \gamma_S) \right\|_2^2$. Combining with \cref{eq:H-nu-nu0}, we can find $C>0$ such that \cref{eq:rsc-nu} holds for all $\nu > 0$. \qed
\end{proof}

\begin{proof}[Proof of \Cref{thm:irr-nu-prime}]
    Under \Cref{thm:rsc}, by \Cref{thm:rsc-nu} and \ref{thm:rsc-nu-prime} we have $\Sigma_{S,S} \succ 0$. By \cref{eq:H-SS}, we know
    \begin{multline*}
        \mathrm{rank}\left( H_{(\beta,S),(\beta,S)} \right) = \mathrm{rank} \left( \begin{pmatrix} A / \nu & 0 \\ 0 & \Sigma_{S,S} \end{pmatrix} \right)\\
        = \mathrm{rank}(A) + \mathrm{rank}\left( \Sigma_{S,S} \right) = \mathrm{rank}\left( H_{\beta, \beta} \right) + \mathrm{rank}\left( H_{S,S} \right).
    \end{multline*}
    Then by Theorem 1.21 in \citet{zhang_schur_2006}, we have that
    \begin{equation*}
        {H_{(\beta,S), (\beta,S)}}^{\dag} = \begin{pmatrix} \nu A^{\dag} + A^{\dag} D_S^T \Sigma_{S,S}^{-1} D_S A^{\dag} & A^{\dag} D_S^T \Sigma_{S,S}^{-1} \\ \Sigma_{S,S}^{-1} D_S A^{\dag} & \Sigma_{S,S}^{-1} \end{pmatrix}.
    \end{equation*}
    By $H_{S^c, (\beta,S)} = (- D_{S^c} / \nu, 0)$ and $- D_{S^c} A^{\dag} D_S / \nu = \Sigma_{S^c, S}$, we have
    \begin{equation}
        \label{eq:ic-equiv}
        H_{S^c, (\beta,S)} {H_{(\beta,S), (\beta,S)}}^{\dag} = \left( - D_{S^c} A^{\dag} + \Sigma_{S^c, S} \Sigma_{S,S}^{-1} D_S A^{\dag},\ \Sigma_{S^c, S} \Sigma_{S,S}^{-1} \right).
    \end{equation}
    The rest is easy. \qed
\end{proof}

\section{Proof of the Comparison Theorem}
\label{sec:proof-ic-compare}

\begin{proof}[Proof of \Cref{thm:ic-compare}]
    By definition, we have $\mathrm{ic}_0 \ge \| \Omega^S \mathrm{sign}(D_S \beta^{\star}) \|_{\infty} \ge \mathrm{ic}_1$. Now we prove $\mathrm{irr}(0)$ exists and $\mathrm{irr}(0) = \mathrm{ic}_0$. Let $M := \Lambda^{-1} V^T X^{*} (I - U_1 U_1^T) X V \Lambda^{-1}$. When $\nu$ is small, by \cref{eq:Sigma},
    \begin{multline*}
        \nu \Sigma = I - U \Lambda B^{-1} \Lambda U^T = I - U \left( I + \nu M \right)^{-1} U^T\\
        = I - U \left( I - \nu M + O\left( \nu^2 \right) \right) U^T = I - U U^T + \nu U M U^T + O\left( \nu^2 \right)\\
        \Longrightarrow \nu \Sigma_{S^c, S} = - U_{S^c} U_S^T + \nu U_{S^c} M U_S^T + O\left( \nu^2 \right),\ \nu \Sigma_{S,S} = I - U_S U_S^T + \nu U_S M U_S^T + O\left( \nu^2 \right).
    \end{multline*}
    Let $F := I - U_S U_S^T$ and $F = U' \Lambda' U'^T$ be the ``compact'' eigendecomposition of $F$ ($\Lambda' \succ 0$). Let $G := U_S M U_S^T$. Suppose $(U', \tilde{U}')$ is an orthogonal square matrix, and
    \begin{equation*}
        K = \begin{pmatrix} K_1 & K_2\\ K_2^T & K_3 \end{pmatrix} := \begin{pmatrix} U'^T\\ \tilde{U}'^T \end{pmatrix} G \left( U', \tilde{U}' \right).
    \end{equation*}
    By $F + \nu G \succ 0$, we have $K_3\succ 0$. Now
    \begin{equation*}
        F + \nu G = \left( U', \tilde{U}' \right) \begin{pmatrix} \Lambda' + \nu K_1 & \nu K_2 \\ \nu K_2^T & \nu K_3 \end{pmatrix} \begin{pmatrix} U'^T \\ \tilde{U}'^T \end{pmatrix}.
    \end{equation*}
    Define $Q_\nu = K_3 - \nu K_2^T (\Lambda' + \nu K_1)^{-1} K_2,\ R_\nu = K_2^T (\Lambda' + \nu K_1)^{-1}$, and we can calculate
    \begin{equation*}
        (F + \nu G)^{-1} = \left( U', \tilde{U}' \right) \begin{pmatrix} (\Lambda' + \nu K_1)^{-1} + \nu R_\nu^T Q_\nu^{-1} R_\nu & - R_\nu^T Q_\nu^{-1}\\ - Q_\nu^{-1} R_\nu & Q_\nu / \nu \end{pmatrix} \begin{pmatrix} U'^T \\ \tilde{U}'^T \end{pmatrix}.
    \end{equation*}
    Note that $Q_\nu \rightarrow K_3, R_\nu \rightarrow K_2^T \Lambda'^{-1}$, and note that
    \begin{multline}
        \label{eq:ker-ut-sub-im}
        U_{S^c}^T U_{S^c} U_S^T \tilde{U}' = \left( I - U_S^T U_S \right) U_S^T \tilde{U}' = U_S^T \left( I - U_S U_S^T \right) \tilde{U}' = U_S^T U' \Lambda' \cdot U'^T \tilde{U}' = 0\\
        \Longrightarrow \left( U_{S^c} U_S^T \tilde{U}' \right)^T U_{S^c} U_S^T \tilde{U}' = 0 \Longrightarrow U_{S^c} U_S^T \tilde{U}' = 0.
    \end{multline}
    Combining it with the representation of $(F + \nu G)^{-1}$,
    \begin{multline*}
        - U_{S^c} U_S^T \Sigma_{S,S}^{-1} \doteq U_{S^c} U_S^T (F + \nu G)^{-1}\\
        = - \left( U_{S^c} U_S^T U', 0 \right) \begin{pmatrix} (\Lambda' + \nu K_1)^{-1} + \nu R_\nu^T Q_\nu^{-1} R_\nu & - R_\nu^T Q_\nu^{-1}\\ - Q_\nu^{-1} R_\nu & \star \end{pmatrix} \begin{pmatrix} U'^T \\ \tilde{U}'^T \end{pmatrix}\\
        \rightarrow \left( - U_{S^c} U_S^T U' \Lambda'^{-1}, U_{S^c} U_S^T U' \Lambda'^{-1} K_2 K_3^{-1} \right) \begin{pmatrix} U'^T\\ \tilde{U}' \end{pmatrix} = - U_{S^c} U_S^T U' \Lambda'^{-1} \left( U'^T - K_2 K_3^{-1} \tilde{U}'^T \right).
    \end{multline*}
    Besides,
    \begin{equation*}
        \nu U_{S^c} M U_S^T \Sigma_{S,S}^{-1} \doteq U_{S^c} M U_S^T \cdot \nu \left( F + \nu G \right)^{-1} \rightarrow U_{S^c} M U_S^T \tilde{U}' K_3^{-1} \tilde{U}'^T.
    \end{equation*}
    So when $\nu \rightarrow 0$,
    \begin{multline*}
        \Sigma_{S^c, S} \Sigma_{S,S}^{-1} \rightarrow - U_{S^c} U_S^T U' \Lambda'^{-1} \left( U'^T - K_2 K_3^{-1} \tilde{U}'^T \right) + U_{S^c} M U_S^T \tilde{U}' K_3^{-1} \tilde{U}'^T\\
        = - U_{S^c} U_S^T U' \Lambda'^{-1} U'^T + U_{S^c} \left( U_S^T U' \Lambda'^{-1} U'^T U_S + I \right) M U_S^T \tilde{U}' K_3^{-1} \tilde{U}'^T\\
        = - D_{S^c} V \Lambda^{-1} U_S^T U' \Lambda'^{-1} U'^T + D_{S^c} V \Lambda^{-1} \left( I + U_S^T U' \Lambda'^{-1} U'^T U_S \right) M U_S^T \tilde{U}' K_3^{-1} \tilde{U}'^T.
    \end{multline*}
    The infinity norm of the right hand side is $\mathrm{irr}(0)$. On the other hand,
    \begin{equation*}
        \mathrm{ic}_0 = \left\| D_{S^c} \left( D_{S^c}^T D_{S^c} \right)^{\dag} \left( X^{*} X W \left( W^T X^{*} X W \right)^{\dag} W^T - I \right) D_S^T \right\|_{\infty}.
    \end{equation*}
    In order to prove $\mathrm{irr}(0) = \mathrm{ic}_0$, it suffices to show
    \begin{multline*}
        \left( X^{*} X W \left( W^T X^{*} X W \right)^{\dag} W^T - I \right) D_S^T = - D_{S^c}^T D_{S^c} V \Lambda^{-1} U_S^T U' \Lambda'^{-1} U'^T\\
        + D_{S^c}^T D_{S^c} V \Lambda^{-1} \left( I + U_S^T U' \Lambda'^{-1} U'^T U_S \right) M U_S^T \tilde{U}' K_3^{-1} \tilde{U}'^T.
    \end{multline*}
    The first term of the right hand side is
    \begin{multline*}
        - V \Lambda U_{S^c}^T U_{S^c} U_S^T U' \Lambda'^{-1} U'^T = - V \Lambda \left( I - U_S^T U_S \right) U_S^T U' \Lambda'^{-1} U'^T\\
        = - V \Lambda U_S^T \left( I - U_S U_S^T \right) U' \Lambda'^{-1} U'^T = - V \Lambda U_S^T U' \Lambda' U'^T U' \Lambda'^{-1} U'^T = - D_S^T U' U'^T,
    \end{multline*}
    while by the fact that
    \begin{multline*}
        \left( I - U_S^T U_S \right) \left( I + U_S^T U' \Lambda'^{-1} U'^T U_S \right)\\
        = I - U_S^T U_S + U_S^T U' \Lambda'^{-1} U'^T U_S - U_S^T U_S U_S^T U' \Lambda'^{-1} U'^T U_S\\
        = I - U_S^T U_S + U_S^T \left( I - U_S U_S^T \right) U' \Lambda'^{-1} U'^T U_S\\
        = I - U_S^T U_S + U_S^T U' U'^T U_S = I - U_S^T \tilde{U}' \tilde{U}'^T U_S,
    \end{multline*}
    the second term becomes
    \begin{multline*}
        V \Lambda U_{S^c}^T U_{S^c} \left( I + U_S^T U' \Lambda'^{-1} U'^T U_S \right) M U_S^T \tilde{U}' K_3^{-1} \tilde{U}'^T\\
        = V \Lambda \left( I - U_S^T U_S \right) \left( I + U_S^T U' \Lambda'^{-1} U'^T U_S \right) M U_S^T \tilde{U}' K_3^{-1} \tilde{U}'^T\\
        = V \Lambda \left( I - U_S^T \tilde{U}' \tilde{U}'^T U_S \right) M U_S^T \tilde{U}' K_3^{-1} \tilde{U}'^T\\
        = V \Lambda M U_S^T \tilde{U}' K_3^{-1} \tilde{U}'^T - V \Lambda U_S^T \tilde{U}' \cdot \tilde{U}'^T U_S M U_S^T \tilde{U}' \cdot K_3^{-1} \tilde{U}'^T\\
        = X^{*} \left( I - U_1 U_1^T \right) XV \Lambda^{-1} U_S^T \tilde{U}' K_3^{-1} \tilde{U}' - D_S^T \tilde{U}' \cdot K_3 \cdot K_3^{-1} \tilde{U}'^T\\
        = X^{*} \left( I - U_1 U_1^T \right) XV \Lambda^{-1} U_S^T \tilde{U}' K_3^{-1} \tilde{U}'^T - D_S^T \tilde{U}' \tilde{U}'^T.
    \end{multline*}
    So it suffices to show
    \begin{equation*}
        X^{*} X W \left( W^T X^{*} X W \right)^{\dag} W^T D_S^T = X^{*} \left( I - U_1 U_1^T \right) XV \Lambda^{-1} U_S^T \tilde{U}' K_3^{-1} \tilde{U}'^T,
    \end{equation*}
    which is equivalent to
    \begin{equation}
        \label{eq:ic-0-ic0-remain}
        X^{*} \left( X W W^T X^{*} \right)^{\dag} X W W^T D_S^T = X^{*} \left( I - U_1 U_1^T \right) XV \Lambda^{-1} U_S^T \tilde{U}' K_3^{-1} \tilde{U}'^T.
    \end{equation}
    First we prove
    \begin{equation}
        \label{eq:ker-ut-eq-im}
        \ker\left( U_{S^c} \right) = \mathrm{Im} \left( U_S^T \tilde{U}' \right).
    \end{equation}
    In fact, by \cref{eq:ker-ut-sub-im} we have $\mathrm{Im}(U_S^T \tilde{U}') \subseteq \ker(U_{S^c})$. For any $\zeta\in \ker(U_{S^c})$, we have $(I - U_S^T U_S) \zeta = U_{S^c}^T U_{S^c} \zeta = 0$. Let
    \begin{equation*}
        \zeta = U_S^T \zeta_1 + \zeta_2,\ \zeta_2 \in \mathrm{ker}(U_S),
    \end{equation*}
    then
    \begin{equation*}
        0 = (I - U_S^T U_S) (U_S^T \zeta_1 + \zeta_2) = \zeta_2 + (I - U_S^T U_S) U_S^T \zeta_1 = \zeta_2 + U_S^T (I - U_S U_S^T) \zeta_1,
    \end{equation*}
    which implies $\zeta_2\in \mathrm{Im}(U_S^T)$. But $\zeta_2\in \ker(U_S)$, then $\zeta_2 = 0$, and $0 = (I - U_S^T U_S) U_S^T \zeta_1 = U_S^T (I - U_S U_S^T) \zeta_1 = U_S^T U' \Lambda' U'^T \zeta_1$. Assume that $\zeta_1 = U' \zeta_3 + \tilde{U}' \tilde{\zeta}_3$, then $U_S^T U' \Lambda' \zeta_3 = 0$. Thus
    \begin{multline*}
        0 = U_S U_S^T U' \Lambda' \zeta_3 = \left( I - U' \Lambda' U'^T \right) U' \Lambda' \zeta_3 = U' \Lambda' \left( I - \Lambda' \right) \zeta_3\Longrightarrow \left( I - \Lambda' \right) \zeta_3 = 0\\
        \Longrightarrow U_S U_S^T U' \zeta_3 = U' \left( I - \Lambda' \right) \zeta_3 = 0 \Longrightarrow \left( U_S^T U' \zeta_3 \right)^T U_S^T U' \zeta_3 = 0\Longrightarrow U_S^T U' \zeta_3 = 0\\
        \Longrightarrow \beta = U_S^T \zeta_1 = U_S^T U' \zeta_3 + U_S^T \tilde{U}' \tilde{\zeta}_3 = U_S^T \tilde{U}' \tilde{\zeta}_3 \in \mathrm{Im}\left( U_S^T \tilde{U}' \right).
    \end{multline*}
    So \cref{eq:ker-ut-eq-im} holds. Now for any $\beta\in \mathbb{R}^p$, let $\beta = V \delta + \tilde{V} \tilde{\delta}$, then $\beta\in \ker(D_{S^c})$ if and only if $U_{S^c} \Lambda \delta = 0$, which means $\delta \in \Lambda^{-1} \ker(U_{S^c}) = \mathrm{Im}(\Lambda^{-1} U_S^T \tilde{U}')$. So
    \begin{equation*}
        \ker\left( D_{S^c} \right) = \mathrm{Im} \left( J \right) + \mathrm{Im} \left( \tilde{V} \right),\ \text{where}\ J := V \Lambda^{-1} U_S^T \tilde{U}'.
    \end{equation*}
    Since $\tilde{V}^T V = 0$, the linear subspaces spanned by $J$ and $\tilde{V}$ are orthogonal, and we have
    \begin{equation*}
        W W^T = J \left( J^T J \right)^{\dag} J^T + \tilde{V} \tilde{V}^T.
    \end{equation*}
    Noting $\tilde{V}^T V = 0,\ \tilde{V}^T X^{*} (I - U_1 U_1^T) = 0$, we have
    \begin{align*}
        & X^{*} \left( X W W^T X^{*} \right)^{\dag} XW W^T D_S^T \tilde{U}' \tilde{U}'^T K_3\\
        = {}& X^{*} \left( X W W^T X^{*} \right)^{\dag} X J \left( J^T J \right)^{\dag} J^T V \Lambda U_S^T \tilde{U}' \tilde{U}'^T K_3\\
        = {}& X^{*} \left( X W W^T X^{*} \right)^{\dag} X J \left( J^T J \right)^{\dag} \cdot \tilde{U}'^T U_S U_S^T \tilde{U}' \cdot \tilde{U}'^T U_S M U_S^T \tilde{U}'\\
        = {}& X^{*} \left( X W W^T X^{*} \right)^{\dag} X J \left( J^T J \right)^{\dag} \cdot \tilde{U}'^T U_S M U_S^T \tilde{U}'\\
        = {}& X^{*} \left( X W W^T X^{*} \right)^{\dag} X J \left( J^T J \right)^{\dag} J^T X^{*} \left( I - U_1 U_1^T \right) X V \Lambda^{-1} U_S^T \tilde{U}'\\
        = {}& X^{*} \left( X W W^T X^{*} \right)^{\dag} \left( X W W^T X^{*} \right) \left( I - U_1 U_1^T \right) X J.
    \end{align*}
    Since $(X W W^T X^{*})^{\dag} (X W W^T X^{*})$ is the projection matrix onto the linear subspace $\mathrm{Im}(XW) = \mathrm{Im}(X\tilde{V}) + \mathrm{Im}(XJ) = \mathrm{Im}(U_1) + \mathrm{Im}(XJ)$, and $(I - U_1 U_1^T) XJ = XJ - U_1 \cdot U_1^T XJ$ lies in this subspace, the last term above becomes $X^{*} \left( I - U_1 U_1^T \right) XJ$. Therefore, we get
    \begin{multline*}
        X^{*} \left( X W W^T X^{*} \right)^{\dag} XW W^T D_S^T \tilde{U}' K_3 = X^{*}\left( I - U_1 U_1^T \right) XJ\\
        \Longleftrightarrow X^{*} \left( X W W^T X^{*} \right)^{\dag} XW W^T D_S^T \tilde{U}' = X^{*} \left( I - U_1 U_1^T \right) X V \Lambda^{-1} U_S^T \tilde{U}' K_3^{-1}.
    \end{multline*}
    Now to prove \cref{eq:ic-0-ic0-remain}, it suffices to show
    \begin{multline*}
        X^{*} \left( X W W^T X^{*} \right)^{\dag} XW W^T D_S^T \left(I - \tilde{U}' \tilde{U}'^T \right) = 0 \Longleftarrow WW^T D_S^T U' U'^T = 0\\
        \Longleftarrow J\left( J^T J \right)^{\dag} J^T D_S^T U' = 0 \Longleftarrow J^T D_S^T U' = 0 \Longleftarrow \tilde{U}'^T U_S \Lambda^{-1} V^T \cdot V \Lambda U_S^T U' = 0\\
        \Longleftarrow \tilde{U}'^T U_S U_S^T U' = 0 \Longleftarrow \tilde{U}'^T \left( I - U' \Lambda' U'^T \right) U' = 0,
    \end{multline*}
    which is surely true since $\tilde{U}'^T U' = 0$. Then $\mathrm{irr}(0) = \mathrm{ic}_0$ is proved.

    Now we turn to $\mathrm{irr}(\infty)$. Let $M = U'' \Lambda'' U''^T$ be the compact eigendecomposition of $M$, and $(U'', \tilde{U}'')$ is an orthogonal square matrix. Then
    \begin{multline*}
        \nu \Sigma = I - U \left( I + \nu M \right)^{-1} U^T\\
        = I - U \left( U'', \tilde{U}'' \right) \left( \begin{pmatrix} U''^T\\ \tilde{U}''^T \end{pmatrix} \left( I + \nu M \right) \left( U'', \tilde{U}'' \right) \right)^{-1} \begin{pmatrix} U''^T \\ U''^T \end{pmatrix} U^T\\
        = I - U \left( U'', \tilde{U}'' \right) \begin{pmatrix} I + \nu \Lambda'' & 0 \\ 0 & I \end{pmatrix}^{-1} \begin{pmatrix} U''^T \\ \tilde{U}''^T \end{pmatrix} U^T\\
        = I - U U'' \left( I + \nu \Lambda'' \right)^{-1} U''^T U^T - U \tilde{U}'' \tilde{U}''^T U^T \rightarrow I - U \tilde{U}'' \tilde{U}''^T U^T
    \end{multline*}
    when $\nu \rightarrow +\infty$. Besides, $\nu \Sigma_{S,S} \rightarrow I - U_S \tilde{U}'' \tilde{U}''^T U_S^T$, and this limit $\succeq \nu \Sigma_{S,S} \succ 0$ for any $\nu > 0$. Thus $\Sigma_{S^c, S} \Sigma_{S,S}^{-1}$ has limit when $\nu \rightarrow +\infty$.

    Now we study when $\mathrm{irr}(\infty) = 0$. Let
    \begin{equation*}
        D_S^T = X^T C_1 + D_{S^c}^T C_2,\ \text{which implies}\ U_S^T = \Lambda^{-1} V^T X^T C_1 + U_{S^c}^T C_2.
    \end{equation*}
    Then $0 = \tilde{V}^T D_S^T = \tilde{V}^T X^T C_1 + 0 = \sqrt{n} V_1 \Lambda_1 U_1^T C_1$, which implies $U_1^T C_1 = 0$. So for $N = \Lambda^{-1} V^T X^T (I - U_1 U_1^T) / \sqrt{n}$, we have
    \begin{equation*}
        N C_1 = \Lambda^{-1} V^T X^T C_1 / \sqrt{n}.
    \end{equation*}
    Then $\mathrm{irr}(\infty) = 0 \Longleftrightarrow - U_{S^c} \tilde{U}'' \tilde{U}''^T U_S^T = 0 \Longleftrightarrow - U_{S^c} (I - M M^{\dag}) U_S^T = 0$. By $M = N N^T$, the equation is further equivalent to
    \begin{multline*}
        - U_{S^c} \left( I - N N^{\dag} \right) U_S^T = 0 \Longleftrightarrow - U_{S^c} \left( I - N N^{\dag} \right) \left( \Lambda^{-1} V^T X C_1 + U_{S^c}^T C_2 \right) = 0\\
        \Longleftrightarrow - U_{S^c} \left( I - N N^{\dag} \right) \left( \sqrt{n} N C_1 + U_{S^c}^T C_2 \right) = 0\\
        \Longleftrightarrow - U_{S^c} \left( I - N N^{\dag} \right) U_{S^c}^T C_2 = 0 \Longleftrightarrow C_2^T U_{S^c} \left( I - N N^{\dag} \right)\cdot \left( I - N N^{\dag} \right) U_{S^c}^T C_2 = 0\\
        \Longleftrightarrow \left( I - N N^{\dag} \right) U_{S^c}^T C_2 = 0 \Longleftrightarrow \mathrm{Im}(U_{S^c}^T C_2) \subseteq \mathrm{Im}(N).
    \end{multline*}
    It suffices to show that the last property holds if and only if $\ker(X) \subseteq \ker(D_S)$ or, equivalently, $\mathrm{Im}(D_S^T) \subseteq \mathrm{Im}(X^T)$. In fact, if $\mathrm{Im}(D_S^T) \subseteq \mathrm{Im}(X^T)$, then $C_2$ can be set $0$ in the beginning, and $\mathrm{Im}(U_{S^c}^T C_2) = \mathrm{Im}(0) \subseteq \mathrm{Im}(N)$. If $\mathrm{Im}(U_{S^c}^T C_2) \subseteq \mathrm{Im}(N)$, let $U_{S^c}^T C_2 = N C_3$, then
    \begin{multline*}
        D_{S^c}^T C_2 = V \Lambda U_{S^c}^T C_2 = V V^T X^T \left( I - U_1 U_1^T \right) C_3 / \sqrt{n}\\
        = \left( V V^T + \tilde{V} \tilde{V}^T \right) X^T \left( I - U_1 U_1^T \right) C_3 / \sqrt{n} = X^T \left( I - U_1 U_1^T \right) C_3 / \sqrt{n},
    \end{multline*}
    and hence $D_S^T = X^T C_1 + D_{S^c}^T C_2 = X^T (C_1 + (I - U_1 U_1^T) C_3 / \sqrt{n})$, which implies $\mathrm{Im}(D_S^T) \subseteq \mathrm{Im}(X^T)$. We have finished the proof of that $\mathrm{irr}(\infty) = 0$ if and only if $\ker(X) \subseteq \ker(D_S)$. \qed
\end{proof}

\section{Proof on Oracle Properties}
\label{sec:proof-orc}

\begin{proof}[Proof of \Cref{thm:slbiss-orc-gbi}]
    From the definition of oracle estimators \cref{eq:orc-def}, 
    \begin{equation}
        \label{eq:orc-1st-opt}
        \begin{split}
            \nabla_{\beta} \ell\left( \beta^o, \gamma^o \right) &= X^{*}\left( X\beta^o - y \right) + D^T \left( D\beta^o - \gamma^o \right) / \nu = 0,\\
            \nabla_{\gamma_S} \ell\left( \beta^o, \gamma^o \right) &= \left( \gamma_S^o - D_S \beta^o \right) / \nu = 0.
        \end{split}
    \end{equation}
    Adding \cref{eq:orc-1st-opt} to \cref{eq:slbiss-orc-b,eq:slbiss-orc-c}, we have
    \begin{align}
        \label{eq:slbiss-orc-b-reform}
        \begin{pmatrix} 0\\ \dot{\rho}_S'(t) \end{pmatrix} + \frac{1}{\kappa} \begin{pmatrix} \dot{\beta}'(t)\\ \dot{\gamma}_S'(t) \end{pmatrix} & = - H_{(\beta,S), (\beta,S)} \begin{pmatrix} d_\beta(t)\\ d_{\gamma,S}(t) \end{pmatrix}.
    \end{align}
    Besides, since
    \begin{equation*}
        \begin{pmatrix} \beta'(t)\\ \gamma'(t) \end{pmatrix},\ \begin{pmatrix} \beta^o\\ \gamma^o \end{pmatrix}\in L \oplus \mathbb{R}^s \oplus \{0\}^{m-s},
    \end{equation*}
    by \cref{eq:orc-def} and Pythagorean Theorem,
    \begin{multline}
        \label{eq:l-L-plus-c}
        \ell\left( \beta'(t), \gamma'(t) \right) = \frac{1}{2n} \left\| \begin{pmatrix} y\\ 0 \end{pmatrix} - \begin{pmatrix} X & 0\\ - \sqrt{n/\nu} D & I_m \end{pmatrix} \begin{pmatrix} \beta'(t)\\ \gamma'(t) \end{pmatrix} \right\|_2^2\\
        = \frac{1}{2n} \left\| \begin{pmatrix} X & 0\\ - \sqrt{n/\nu} D & I_m \end{pmatrix} \begin{pmatrix} \beta'(t)\\ \gamma'(t) \end{pmatrix} - \begin{pmatrix} X & 0\\ - \sqrt{n/\nu} D & I_m \end{pmatrix} \begin{pmatrix} \beta^o \\ \gamma^o \end{pmatrix} \right\|_2^2\\
        + \frac{1}{2n} \left\| \begin{pmatrix} y\\ 0 \end{pmatrix} - \begin{pmatrix} X & 0\\ - \sqrt{n/\nu} D & I_m \end{pmatrix} \begin{pmatrix} \beta^o\\ \gamma^o \end{pmatrix} \right\|_2^2\\
        = L(t) + \text{constant (independent of $t$)},
    \end{multline}
    where
    \begin{multline}
        \label{eq:L-def}
        L(t) := \frac{1}{2n} \left\| \begin{pmatrix} X & 0\\ - \sqrt{n/\nu} D & I_m \end{pmatrix} \begin{pmatrix} d_\beta(t)\\ d_\gamma(t) \end{pmatrix} \right\|_2^2 = \frac{1}{2} \left( d_\beta(t)^T, d_\gamma(t)^T \right) H \begin{pmatrix} d_\beta(t)\\ d_\gamma(t) \end{pmatrix}\\
        = \frac{1}{2} \left( d_\beta(t)^T, d_{\gamma,S}(t)^T \right) H_{(\beta,S),(\beta,S)} \begin{pmatrix} d_\beta(t)\\ d_{\gamma,S}(t) \end{pmatrix}.
    \end{multline}
    Noting $\gamma_j(t)\cdot \dot{\rho}_j(t) \equiv 0$ for each $j$, by \cref{eq:slbiss-orc-d,eq:slbiss-orc-b-reform,eq:L-def} we have
    \begin{equation}
        \label{eq:dPsi}
        \begin{split}
            \frac{\mathrm{d}}{\mathrm{d}t} \Psi(t) &= \left\langle - \gamma_S^o, \dot{\rho}_S'(t) \right\rangle + d_{\gamma,S}(t)^T \dot{\gamma}_S(t) / \kappa + d_\beta(t)^T \dot{\beta}'(t) / \kappa\\
            &= \left\langle \begin{pmatrix} d_\beta(t) \\ d_{\gamma,S}(t) \end{pmatrix},\ \begin{pmatrix} 0\\ \dot{\rho}_S'(t) \end{pmatrix} + \frac{1}{\kappa} \begin{pmatrix} \dot{\beta}'(t) \\ \dot{\gamma}_S'(t) \end{pmatrix} \right\rangle = - 2L(t).
        \end{split}
    \end{equation}
    Thus it suffices to show
    \begin{equation*}
        F\left( \frac{2}{\lambda_H} L(t) \right) \ge \Psi(t).
    \end{equation*}
    Since $\| \gamma_S^o \|_1 - \langle \gamma_S^o, \rho_S'(t) \rangle = 0$ if $\| \gamma_S'(t) - \gamma_S^o \|_2^2 < (\gamma_{\min}^o)^2$, and
    \begin{align*}
        \left\| \gamma_S^o \right\|_1 - \left\langle \gamma_S^o, \rho_S'(t) \right\rangle &\le 2 \sum_{j\in N(t)} \left| \gamma_j^o \right| \ \left( N(t) := \left\{ j:\ \mathrm{sign}\left( \gamma_j'(t) \right) \neq \mathrm{sign} \left( \gamma_j^o \right) \right\} \right)\\
        & \le
        \begin{cases}
            \displaystyle \frac{2}{\gamma_{\min}^o} \sum_{j\in N(t)} (\gamma_j^o)^2 \le \frac{2}{\gamma_{\min}^o} \left\| \gamma_S'(t) - \gamma_S^o \right\|_2^2 \\
            \displaystyle 2 \sqrt{s \sum_{j\in N(t)} (\gamma_j^o)^2}\le 2\sqrt{s \left\| \gamma_S'(t) - \gamma_S^o \right\|_2^2}.
        \end{cases}
    \end{align*}
    Thus
    \begin{equation*}
        \Psi(t) - \frac{1}{2\kappa} \left( \left\| d_{\gamma,S}(t) \right\|_2^2 + \left\| d_\beta(t) \right\|_2^2 \right) \le F\left( \left\| d_{\gamma,S}(t) \right\|_2^2 \right) - \frac{1}{2\kappa} \left\| d_{\gamma,S}(t) \right\|_2^2.
    \end{equation*}
    It suffice to show
    \begin{equation*}
        F\left( \frac{2}{\lambda_H} L(t) \right) \ge F\left( \left\| d_{\gamma,S}(t) \right\|_2^2 \right) + \frac{1}{2\kappa} \left\| d_\beta(t) \right\|_2^2,
    \end{equation*}
    which is true since by \Cref{thm:rsc}
    \begin{equation}
        \label{eq:L-ge-d}
        2 L(t) = \left( d_\beta(t)^T, d_{\gamma, S}(t)^T \right) \cdot H_{(\beta, S), (\beta, S)} \cdot \begin{pmatrix} d_\beta(t)\\ d_{\gamma, S}(t) \end{pmatrix} \ge \lambda_H \cdot d(t)^2,
    \end{equation}
    and by $F(\cdot + x) \ge F(\cdot) + x / (2\kappa)$
    \begin{equation*}
        F\left( d(t)^2 \right) = F\left( \left\| d_\beta(t) \right\|_2^2 + \left\| d_{\gamma, S}(t) \right\|_2^2 \right) \ge F\left( \left\| d_{\gamma, S}(t) \right\|_2^2 \right) + \frac{1}{2\kappa} \left\| d_\beta(t) \right\|_2^2. \eqno \mbox{\qed}
    \end{equation*}
\end{proof}

\begin{lemma}
    \label{thm:slbiss-orc-cstc}
    Under \Cref{thm:rsc}, let $\gamma_{\min}^o := \min(|\gamma_j^o|:\ \gamma_j^o\neq 0)$. For
    \begin{equation}
        \label{eq:tau-inf-def-slbiss}
        t \ge \tau_{\infty}(\mu) := \frac{1}{\kappa \lambda_H} \log \frac{1}{\mu} + \frac{2\log s + 4 + d(0) / \kappa}{\lambda_H \gamma_{\min}^o}\ (0<\mu<1),
    \end{equation}
    we have
    \begin{equation}
        \label{eq:slbiss-orc-cstc-sign}
        d(t) \le \mu \gamma_{\min}^o \left( \Longrightarrow \mathrm{sign}\left( \gamma_S'(t) \right) = \mathrm{sign}\left( \gamma_S^o \right),\ \text{if $\gamma_j^o \neq 0$ for $j\in S$} \right).
    \end{equation}
    For $t\ge 0$, we have
    \begin{equation}
        \label{eq:slbiss-orc-cstc-l2}
        d(t) \le \min \left( \frac{4 \sqrt{s} + d(0) / \kappa}{\lambda_H t},\ \sqrt{\frac{2\left( 1 + \nu \Lambda_X^2 + \Lambda_D^2 \right)}{\lambda_H \nu}} \cdot d(0) \right).
    \end{equation}
\end{lemma}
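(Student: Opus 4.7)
The plan is to use the generalized Bihari-type differential inequality $\dot\Psi(t) \le -\lambda_H F^{-1}(\Psi(t))$ from \Cref{thm:slbiss-orc-gbi} together with the elementary bound $\Psi(t) \ge d(t)^2/(2\kappa)$, which comes from the non-negativity of the Bregman distance contribution to $\Psi$, so that any upper bound on $\Psi(t)$ immediately translates into the pointwise bound $d(t) \le \sqrt{2\kappa\Psi(t)}$. The complementary inequality $F(d(t)^2) \ge \Psi(t)$ (also from the proof of \Cref{thm:slbiss-orc-gbi}) controls the starting value $F^{-1}(\Psi(0)) \le d(0)^2$.

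For the sign-consistency bound \cref{eq:slbiss-orc-cstc-sign} I would split the time interval into two phases according to the piecewise structure of $F$. The key observation is that in the lowest regime $x < (\gamma_{\min}^o)^2$ one has $F(x) = x/(2\kappa)$, hence $F^{-1}(\Psi) = 2\kappa\Psi$ and the Bihari inequality degenerates to the linear ODE $\dot\Psi \le -2\kappa\lambda_H \Psi$, producing exponential decay at rate $2\kappa\lambda_H$. Phase I then estimates the time for $\Psi(t)$ to fall from $\Psi(0) = \|\gamma_S^o\|_1 + d(0)^2/(2\kappa)$ down to $(\gamma_{\min}^o)^2/(2\kappa)$ by explicitly integrating $\mathrm{d}\Psi/F^{-1}(\Psi)$ through the two larger regimes of $F$; a change of variables $u = F^{-1}(\Psi)$ splits the integral into three elementary pieces. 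Phase II invokes the exponential decay in the linear regime, which requires additional time $\log(1/\mu)/(\kappa\lambda_H)$ to shrink $\Psi$ by the factor $\mu^2$, after which $d(t) \le \sqrt{2\kappa\Psi(t)} \le \mu \gamma_{\min}^o$. The two contributions together must reproduce $\tau_\infty(\mu)$.

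For the $\ell_2$ consistency bound \cref{eq:slbiss-orc-cstc-l2}, the second member of the minimum follows quickly from the monotonicity of the loss along the Split LBISS path established in \Cref{thm:slb-l-dec}: combining $2L(t) \ge \lambda_H d(t)^2$ from \cref{eq:L-ge-d} with $2L(t) \le 2L(0) \le \|H\|_2\, d(0)^2$ gives $d(t) \le \sqrt{\|H\|_2/\lambda_H}\, d(0)$, and then the operator-norm bound $\|H\|_2 \le 2(1 + \nu\Lambda_X^2 + \Lambda_D^2)/\nu$ from \cref{eq:H-upper} yields the stated form. The first member is obtained through a similar integration of the Bihari inequality in which the uniform envelope $F(x) \le x/(2\kappa) + 2\sqrt{sx}$ (easily verified on each regime) is used to lower-bound the decay rate $F^{-1}(\Psi)$ and hence produce the $1/t$-type pointwise estimate; the two summands $4\sqrt{s}$ and $d(0)/\kappa$ in the numerator reflect exactly the two pieces $2\sqrt{sx}$ and $x/(2\kappa)$ of this envelope, together with the initialization control $F^{-1}(\Psi(0)) \le d(0)^2$.

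The main technical obstacle will be the careful bookkeeping across the three regimes of the piecewise $F$. Concretely, the antiderivative of $1/F^{-1}$ must be computed on each of the intervals $[0,(\gamma_{\min}^o)^2)$, $[(\gamma_{\min}^o)^2, s(\gamma_{\min}^o)^2)$, and $[s(\gamma_{\min}^o)^2, \infty)$, the endpoint contribution from the largest regime must be matched to $F^{-1}(\Psi(0)) \le d(0)^2$, and the numerical constants (e.g., the $+4$ slack and the $d(0)/\kappa$ correction) have to be absorbed by elementary inequalities of the form $\log y \le y-1$ so as to produce exactly the expression for $\tau_\infty(\mu)$ and the constant $4\sqrt{s}+d(0)/\kappa$ in \cref{eq:slbiss-orc-cstc-l2}.
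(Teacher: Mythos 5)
Your treatment of the sign-consistency bound \cref{eq:slbiss-orc-cstc-sign} and of the second member of the minimum in \cref{eq:slbiss-orc-cstc-l2} matches the paper's argument in substance: the paper performs the same integration of $\mathrm{d}\Psi/F^{-1}(\Psi)$ across the regimes of $F$ (organized as a proof by contradiction rather than your forward two-phase scheme, but with exactly the elementary integrals and the initialization control $F^{-1}(\Psi(0))\le d(0)^2$ that you describe), and the second member is obtained precisely from $\lambda_H d(t)^2\le 2L(t)\le 2L(0)\le \|H\|_2\, d(0)^2$ together with \cref{eq:H-upper}.

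However, your route to the first member of \cref{eq:slbiss-orc-cstc-l2} has a genuine gap. Integrating $\dot\Psi\le-\lambda_H\tilde{F}^{-1}(\Psi)$ with the envelope $\tilde{F}(x)=x/(2\kappa)+2\sqrt{sx}$ only controls $u(t):=\tilde{F}^{-1}(\Psi(t))$, giving $\sqrt{u(t)}\lesssim(2\sqrt{s}+d(0)/\kappa)/(\lambda_H t)$. To convert this into a bound on $d(t)$ you would have to pass through $d(t)^2\le 2\kappa\Psi(t)=u(t)+4\kappa\sqrt{s\,u(t)}$, whose second term carries a factor of $\kappa$ and decays only like $1/t$; the cleaner inequality $d(t)^2\le u(t)$ is not available, since it would require $\Psi(t)\ge\tilde{F}(d(t)^2)$, whereas one only knows $\Psi(t)\ge d(t)^2/(2\kappa)$ (indeed $\Psi(t)\le F(d(t)^2)$ goes the other way). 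The paper's proof avoids ever converting $\Psi$ back into $d$: it fixes $t$, sets $C^2:=2L(t)/\lambda_H$, uses the monotonicity of $L$ to get the uniform floor $-\dot\Psi(t')=2L(t')\ge\lambda_H C^2$ for $t'\le t$, and integrates $-\dot\Psi/\max\left(C^2,\tilde{F}^{-1}(\Psi)\right)$ to obtain $\lambda_H t\le(4\sqrt{s}+d(0)/\kappa)/C$; the conclusion then follows from the RSC inequality $d(t)^2\le 2L(t)/\lambda_H=C^2$. You need this extra ingredient --- the constant lower bound on the decay rate coming from the monotonicity of $L$, combined with bounding $L(t)$ rather than $\Psi(t)$ --- to obtain the stated $\kappa$-free $1/t$ rate.
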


\begin{proof}[Proof of \Cref{thm:slbiss-orc-cstc}]
    Noting \cref{eq:l-L-plus-c} and that $\ell(\beta'(t), \gamma'(t))$ is \emph{non-increasing}, we know $L(t)$ is \emph{non-increasing}. \cref{eq:dPsi} tells that $\Psi(t)$ is non-increasing since $L(t) \ge 0$. If $L(t) = 0$ for $t = \tau_{\infty}(\mu)$, by \cref{eq:L-ge-d} and the fact that $L(t)$ is non-increasing, we have
    \begin{equation*}
        d(t)^2 \le \frac{2}{\lambda_H} L(t)^2 = 0\ \left( t\ge \tau_{\infty}(\mu) \right).
    \end{equation*}
    Therefore \cref{eq:slbiss-orc-cstc-sign} holds for $t\ge \tau_{\infty}(\mu)$. Now assume that $L(t) > 0$ for $t = \tau_{\infty}(\mu)$ (and hence for $0\le t\le \tau_{\infty}(\mu)$), then $\Psi(t)$ is \emph{strictly} decreasing on $[0, \tau_{\infty}(\mu)]$. Besides, $F$ is strictly increasing and continuous on $[(\gamma_{\min}^o)^2, +\infty)$. Moreover,
    \begin{align*}
        F\left( d(0)^2 \right) &\ge F\left( \left\| \gamma_S^o \right\|_2^2 \right) + \left\| \beta^o \right\|_2^2 / (2\kappa) \ge \Psi(0),\\
        d(0)^2 &\ge \left\| \gamma_S^o \right\|_2^2 \ge s\left( \gamma_{\min}^o \right)^2,
    \end{align*}
    If there does not exist some $t\le \tau_{\infty}(\mu)$ satisfying \cref{eq:slbiss-orc-cstc-sign}, then for $0\le t \le \tau_{\infty}(\mu)$,
    \begin{equation*}
        \Psi\left( t \right)
        \begin{cases}
            \ge d\left( t \right)^2 / (2\kappa) \ge \mu^2 \left( \gamma_{\min}^o \right)^2 / (2\kappa) >0,& \text{if $\kappa < +\infty$},\\
            > 0,& \text{if $\kappa = +\infty$},
        \end{cases}
    \end{equation*}
    which also implies that $F^{-1}(\Psi(t)) > 0$. By \Cref{thm:slbiss-orc-gbi},
    \begin{align*}
        & \lambda_H \tau_{\infty}(\mu) \le \int_0^{\tau_{\infty}(\mu)} \frac{- \frac{\mathrm{d}}{\mathrm{d}t} \Psi(t)}{F^{-1}\left( \Psi(t) \right)} \mathrm{d}t = \int_{\Psi\left( \tau_{\infty}(\mu) \right)}^{\Psi(0)} \frac{\mathrm{d}x}{F^{-1}(x)}\\
        \le {}& \left( \int_{\mu^2 \left( \gamma_{\min}^o \right)^2 / (2\kappa)}^{\left( \gamma_{\min}^o \right)^2 / (2\kappa)} + \int_{\left( \gamma_{\min}^o \right)^2 / (2\kappa)}^{F\left( \left( \gamma_{\min}^o \right)^2 \right)} + \int_{F\left( \left( \gamma_{\min}^o \right)^2 \right)}^{F\left( s\left( \gamma_{\min}^o \right)^2 \right)} + \int_{F\left( s\left( \gamma_{\min}^o \right)^2 \right)}^{F\left( d(0)^2 \right)} \right) \frac{\mathrm{d}x}{F^{-1}(x)}\\
        \le {}& \int_{\mu^2 \left( \gamma_{\min}^o \right)^2 / (2\kappa)}^{\left( \gamma_{\min}^o \right)^2 / (2\kappa)} \frac{\mathrm{d}x}{2\kappa x} + \int_{\left( \gamma_{\min}^o \right)^2 / (2\kappa)}^{F\left( \left( \gamma_{\min}^o \right)^2 \right)} \frac{1}{\left( \gamma_{\min}^o \right)^2} \mathrm{d}x + \int_{\left( \gamma_{\min}^o \right)^2}^{s\left( \gamma_{\min}^o \right)^2} \frac{\mathrm{d}F(x)}{x} + \int_{s\left( \gamma_{\min}^o \right)^2}^{d(0)^2} \frac{\mathrm{d}F(x)}{x}\\
        = {}& \frac{1}{2\kappa} \log \frac{1}{\mu^2} + \frac{2}{\gamma_{\min}^o} + \int_{\left( \gamma_{\min}^o \right)^2}^{s\left( \gamma_{\min}^o \right)^2} \left( \frac{1}{2\kappa x} + \frac{2}{\gamma_{\min}^o x} \right) \mathrm{d}x + \int_{s\left( \gamma_{\min}^o \right)^2}^{d(0)^2} \left( \frac{1}{2\kappa x} + \frac{\sqrt{s}}{x\sqrt{x}} \right) \mathrm{d}x\\
        < {}& \frac{1}{2\kappa} \log \frac{1}{\mu^2} + \frac{2}{\gamma_{\min}^o} + \frac{1}{2\kappa} \log \frac{d(0)^2}{\left( \gamma_{\min}^o \right)^2} + \frac{2\log s}{\gamma_{\min}^o} + \frac{2}{\gamma_{\min}^o}\\
        \le{}& \frac{1}{\kappa} \log \frac{1}{\mu} + \frac{2\log s + 4 + d(0) / \kappa}{\gamma_{\min}^o},
    \end{align*}
    contradicting with the definition of $\tau_{\infty}(\mu)$. Thus \cref{eq:slbiss-orc-cstc-sign} holds for some $0\le \tau\le \tau_{\infty}(\mu)$. If $\kappa = +\infty$, we see that for $t\ge \tau_{\infty}(\mu)$, $\Psi(t) \le \Psi(\tau) = 0$. Then $-2L(t)$, the derivative of $\Psi(t)$, is $0$ (which means $d(t) = 0$) when $t\ge \tau_{\infty}(\mu)$, and \cref{eq:slbiss-orc-cstc-sign} holds. If $\kappa < +\infty$, just note that for $t\ge \tau$,
    \begin{equation*}
        d(t)^2/(2\kappa) \le \Psi(t)\le \Psi(\tau) = d(\tau)^2/(2\kappa)\Longrightarrow d(t)\le d(\tau) \le \mu \gamma_{\min}^o.
    \end{equation*}
    So \cref{eq:slbiss-orc-cstc-sign} holds for $t\ge \tau_{\infty}(\mu)$.

    For any $t>0$, if $L(t) = 0$, then $d(t) = 0$ and \cref{eq:slbiss-orc-cstc-l2} holds. If $L(t) > 0$, let $C = \sqrt{2L(t) / \lambda_H} > 0$, then for any $0\le t'\le t$,
    \begin{equation*}
        \frac{\mathrm{d}}{\mathrm{d}t'}\Psi\left( t' \right) = - 2L\left( t' \right) \le - 2L(t) = - \lambda_H C^2.
    \end{equation*}
    Besides, for $\tilde{F}(x) = x / (2\kappa) + 2\sqrt{sx}\ge F(x)$, by \Cref{thm:slbiss-orc-gbi} we have
    \begin{equation*}
        \frac{\mathrm{d}}{\mathrm{d}t'}\Psi\left( t' \right) \le - \lambda_H F^{-1}\left( \Psi\left( t' \right) \right)\le - \lambda_H \tilde{F}^{-1} \left( \Psi\left( t' \right) \right).
    \end{equation*}
    By \cref{eq:dPsi} and the fact that
    \begin{align*}
        \tilde{F}\left( d(0)^2 \right) &\ge \tilde{F}\left( \left\| \gamma_S^o \right\|_2^2 \right) + \left\| \beta^o \right\|_2^2 / (2\kappa) \ge \Psi(0),
    \end{align*}
    we have that, if $d(0) > C$, then
    \begin{align*}
        \lambda_H t &\le \int_0^t \frac{- \frac{\mathrm{d}}{\mathrm{d}t'} \Psi\left( t' \right)}{\max\left( C^2, \tilde{F}^{-1}\left( \Psi\left( t' \right) \right) \right)} \mathrm{d}t' = \int_{\Psi(t)}^{\Psi(0)} \frac{\mathrm{d}x}{\max\left( C^2, \tilde{F}^{-1}(x) \right)}\\
        &\le \int_{\tilde{F}(0)}^{\tilde{F}\left( d(0)^2 \right)} \frac{\mathrm{d}x}{\max\left( C^2, \tilde{F}^{-1}(x) \right)} = \int_{\tilde{F}(0)}^{\tilde{F}\left( C^2 \right)} \frac{\mathrm{d}x}{C^2} + \int_{C^2}^{d(0)^2} \frac{\mathrm{d}\tilde{F}(x)}{x}\\
        &= \frac{C^2/(2\kappa)+2\sqrt{s}C}{C^2} + \int_{C^2}^{d(0)^2} \left( \frac{1}{2\kappa x} + \frac{\sqrt{s}}{x\sqrt{x}} \right) \mathrm{d}x\\
        &\le \frac{4\sqrt{s}}{C} + \frac{1}{2\kappa} \left( 1 + \log \frac{d(0)^2}{C^2} \right)\le \frac{4\sqrt{s} + d(0) / \kappa}{C}.
    \end{align*}
    If $d(0) \le C$, then similarly
    \begin{multline*}
        \lambda_H t \le \int_{\tilde{F}(0)}^{\tilde{F}\left( d(0)^2 \right)} \frac{\mathrm{d}x}{\max\left( C^2, \tilde{F}^{-1}(x) \right)} \le \int_{\tilde{F}(0)}^{\tilde{F}\left( d(0)^2 \right)} \frac{\mathrm{d}x}{C^2}\\
        = \frac{d(0)^2 / (2\kappa) + 2\sqrt{s}\cdot d(0)}{C^2} \le \frac{4\sqrt{s} + d(0) / \kappa}{C}.
    \end{multline*}
    Combining it with \cref{eq:L-ge-d}, we have
    \begin{equation*}
        d(t)^2 \le \frac{2}{\lambda_H} L(t) = \frac{2}{\lambda_H}\cdot \frac{\lambda_H C^2}{2} \le \left( \frac{4\sqrt{s} + d(0) / \kappa}{\lambda_H t} \right)^2.
    \end{equation*}
    Besides, noting \cref{eq:H-upper}, we have
    \begin{multline*}
        2 L(0) = \left( d_\beta(0)^T, d_{\gamma,S}(0)^T \right) H_{(\beta,S), (\beta,S)} \begin{pmatrix} d_\beta(0)\\ d_{\gamma,S}(0) \end{pmatrix} \\
        \le \left\| H \right\|_2 \cdot \left\| \begin{pmatrix} d_\beta(0) \\ d_\gamma(0) \end{pmatrix} \right\|_2^2 \le \frac{2 \left( 1 + \nu \Lambda_X^2 + \Lambda_D^2 \right)}{\nu} \cdot d(0)^2.
    \end{multline*}
    Thus
    \begin{equation*}
        d(t)^2 \le \frac{2}{\lambda_H} L(t) \le \frac{2}{\lambda_H} L(0) \le \frac{2\left( 1 + \nu \Lambda_X^2 + \Lambda_D^2 \right)}{\lambda_H \nu} \cdot d(0)^2.
    \end{equation*}
    Thus \cref{eq:slbiss-orc-cstc-l2} holds. \qed
\end{proof}

\section{Proof on Consistency of Split LBISS}
\label{sec:proof-slbiss-cstc}

%We first restate \Cref{thm:slbiss-cstc-show} and \ref{thm:slbiss-rev-cstc-show} in the following expanded way.
Before proving \Cref{thm:slbiss-cstc} and \ref{thm:slbiss-rev-cstc}, we need the following lemmas.

\begin{lemma}[\textnormal{No-false-positive condition for Split LBISS}]
    \label{thm:slbiss-nfp}
    For the oracle dynamics \cref{eq:slbiss-orc}, if there is $\tau > 0$, such that for $0\le t\le \tau$ the inequality
    \begin{equation}
        \label{eq:slbiss-nfp}
        \left\| H_{S^c, (\beta, S)} {H_{(\beta,S), (\beta,S)}}^{\dag} \left( \begin{pmatrix} 0_p \\ \rho_S'(t) \end{pmatrix} + \frac{1}{\kappa} \begin{pmatrix} \beta'(t) \\ \gamma_S'(t) \end{pmatrix} - t \begin{pmatrix} X^{*} \epsilon\\ 0_s \end{pmatrix} \right) \right\|_{\infty} < 1
    \end{equation}
    holds, then the solution path of the original dynamics \cref{eq:slbiss} has no false-positive for $0\le t\le \tau$.
\end{lemma}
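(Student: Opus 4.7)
\noindent The plan is to extend the oracle trajectory to a candidate solution of the full Split LBISS \cref{eq:slbiss} and then invoke uniqueness. Define
\begin{equation*}
\hat\beta(t) := \beta'(t),\ \hat\gamma_S(t) := \gamma_S'(t),\ \hat\gamma_{S^c}(t) := 0,\ \hat\rho_S(t) := \rho_S'(t),
\end{equation*}
and let $\hat\rho_{S^c}(t)$ be determined by integrating the $S^c$ block of \cref{eq:slbiss-b} with $\dot{\hat\gamma}_{S^c}\equiv 0$, namely $\hat\rho_{S^c}(t) = -\int_0^t H_{S^c,(\beta,S)}(\beta'(s);\gamma_S'(s))\,\mathrm{d}s$. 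By construction the triple $(\hat\rho,\hat\beta,\hat\gamma)$ satisfies \cref{eq:slbiss-a} and \cref{eq:slbiss-b} on $[0,\tau]$, and \cref{eq:slbiss-c} on $S$ because the oracle does; the only thing left is to verify $\hat\rho_{S^c}(t)\in\partial\|0\|_1=[-1,1]^{|S^c|}$.

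The next step is to identify $\hat\rho_{S^c}(t)$ with the vector inside the norm in \cref{eq:slbiss-nfp}. Since $\beta^\star\in\mathcal{M}$, the block form \cref{eq:H-def} together with $D_{S^c}\beta^\star=0$ gives the first-order-condition-style identities
\begin{equation*}
H_{(\beta,S),(\beta,S)}\begin{pmatrix}\beta^\star\\ D_S\beta^\star\end{pmatrix} = \begin{pmatrix}X^{*}X\beta^\star\\ 0\end{pmatrix},\ \ H_{S^c,(\beta,S)}\begin{pmatrix}\beta^\star\\ D_S\beta^\star\end{pmatrix}=0.
\end{equation*}
Writing $y=X\beta^\star+\epsilon$ and integrating \cref{eq:slbiss-orc-b,eq:slbiss-orc-c} from $0$ to $t$ with zero initial conditions,
\begin{equation*}
\begin{pmatrix}\beta'(t)/\kappa\\ \rho_S'(t)+\gamma_S'(t)/\kappa\end{pmatrix} = -\int_0^t H_{(\beta,S),(\beta,S)}\left[\begin{pmatrix}\beta'(s)\\ \gamma_S'(s)\end{pmatrix}-\begin{pmatrix}\beta^\star\\ D_S\beta^\star\end{pmatrix}\right]\mathrm{d}s + t\begin{pmatrix}X^{*}\epsilon\\ 0\end{pmatrix},
\end{equation*}
and analogously $\hat\rho_{S^c}(t) = -\int_0^t H_{S^c,(\beta,S)}\bigl[(\beta'(s);\gamma_S'(s))-(\beta^\star;D_S\beta^\star)\bigr]\mathrm{d}s$. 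Left-multiplying the first display by $H_{S^c,(\beta,S)}H_{(\beta,S),(\beta,S)}^{\dag}$ and using the key operator identity $H_{S^c,(\beta,S)}H_{(\beta,S),(\beta,S)}^{\dag}H_{(\beta,S),(\beta,S)} = H_{S^c,(\beta,S)}$ yields
\begin{equation*}
\hat\rho_{S^c}(t) = H_{S^c,(\beta,S)} H_{(\beta,S),(\beta,S)}^{\dag}\left(\begin{pmatrix}0_p\\ \rho_S'(t)\end{pmatrix}+\frac{1}{\kappa}\begin{pmatrix}\beta'(t)\\ \gamma_S'(t)\end{pmatrix} - t\begin{pmatrix}X^{*}\epsilon\\ 0_s\end{pmatrix}\right),
\end{equation*}
which is precisely the vector whose infinity norm is controlled by \cref{eq:slbiss-nfp}.

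With this identification the hypothesis gives $\|\hat\rho_{S^c}(t)\|_\infty<1$ on $[0,\tau]$, so $(\hat\rho,\hat\beta,\hat\gamma)$ is a bona fide solution of Split LBISS there; uniqueness from \Cref{thm:slbiss-exi-uni} then forces it to coincide with the actual Split LBISS trajectory, so $\supp(\gamma(t))\subseteq S$ throughout $[0,\tau]$. I expect the main obstacle to be justifying the key identity $H_{S^c,(\beta,S)}H_{(\beta,S),(\beta,S)}^{\dag}H_{(\beta,S),(\beta,S)} = H_{S^c,(\beta,S)}$: this ultimately reduces to showing that $H_{S^c,(\beta,S)}$ annihilates $\ker(H_{(\beta,S),(\beta,S)})$, which in turn follows because every element of that kernel has the form $(\beta_0;D_S\beta_0)$ with $\beta_0\in\mathcal{M}$, and $D_{S^c}\beta_0=0$ for such $\beta_0$.
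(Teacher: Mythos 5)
Your proof is correct, and the key computation — expressing the off-support dual variable as $H_{S^c,(\beta,S)}H_{(\beta,S),(\beta,S)}^{\dag}\bigl((0_p;\rho_S'(t)) + \tfrac{1}{\kappa}(\beta'(t);\gamma_S'(t)) - t(X^{*}\epsilon;0_s)\bigr)$ — is exactly the one the paper uses. The logical wrapper differs, though. The paper argues directly on the actual Split LBISS solution via an exit time $\tau_{\mathrm{exit}} := \inf(t:\|\rho_{S^c}(t)\|_\infty = 1)$: before $\tau_{\mathrm{exit}}$ the actual path coincides with the oracle path, the formula for $\rho_{S^c}(t)$ extends to $t=\tau_{\mathrm{exit}}$ by continuity, and \cref{eq:slbiss-nfp} failing there forces $\tau<\tau_{\mathrm{exit}}$. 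You instead build a primal--dual witness from the oracle path and appeal to uniqueness in \Cref{thm:slbiss-exi-uni}; both framings are sound. The more substantive difference is in the linear algebra: to solve for $(\beta'(t);\gamma_S'(t))-(\beta^\star;\gamma_S^\star)$ with the pseudoinverse, the paper proves $\mathcal{L}\oplus\mathbb{R}^s = \mathrm{Im}(H_{(\beta,S),(\beta,S)}^{\dag})$, and its argument for that invokes restricted strong convexity (so the paper's proof of this lemma tacitly leans on \Cref{thm:rsc}, under which the lemma is always applied). Your route only needs $H_{S^c,(\beta,S)}H_{(\beta,S),(\beta,S)}^{\dag}H_{(\beta,S),(\beta,S)} = H_{S^c,(\beta,S)}$, and your justification is right and assumption-free: since $H_{(\beta,S),(\beta,S)}\succeq 0$ with quadratic form $\tfrac{1}{n}\|X\beta\|_2^2+\tfrac{1}{\nu}\|\gamma_S-D_S\beta\|_2^2+\tfrac{1}{\nu}\|D_{S^c}\beta\|_2^2$, any kernel element satisfies $D_{S^c}\beta_0=0$ and is therefore annihilated by $H_{S^c,(\beta,S)}=(-D_{S^c}/\nu,\ 0)$. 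So your version of this step is marginally cleaner; everything else is a faithful parallel of the paper's argument.
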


\begin{proof}[Proof of \Cref{thm:slbiss-nfp}]
    It is easy to see that
    \begin{equation}
        \label{eq:slbiss-comb}
        \begin{pmatrix} 0_p\\ \dot{\rho}(t) \end{pmatrix} + \frac{1}{\kappa} \begin{pmatrix} \dot{\beta}(t)\\ \dot{\gamma}(t) \end{pmatrix} = H \left( \begin{pmatrix} \beta(t)\\ \gamma(t) \end{pmatrix} - \begin{pmatrix} \beta^{\star}\\ \gamma^{\star} \end{pmatrix} \right) + \begin{pmatrix} X^{*} \epsilon \\ 0_m \end{pmatrix}.
    \end{equation}
    Now define the exit time of oracle subspace, 
    \begin{align*}
        \tau_{\mathrm{exit}} &:= \inf\left( t\ge 0:\ \left\| \rho_{S^c}(t) \right\|_{\infty} = 1 \right).
    \end{align*}
    It suffices to show $\tau_{\mathrm{exit}} > \tau$. For $0\le t < \tau_{\mathrm{exit}}$, we have $\gamma_{S^c}(t) = 0$, which also implies the paths of Split LBISS and oracle dynamics are identical, i.e. $\rho_S(t) = \rho_S'(t)$ and $\gamma_S(t) = \gamma_S'(t)$. Hence by \cref{eq:slbiss-comb} we have
    \begin{align}
        \label{eq:rho-S}
        \begin{pmatrix} 0_p \\ \dot{\rho}_S'(t) \end{pmatrix} + \frac{1}{\kappa} \begin{pmatrix} \dot{\beta}'(t)\\ \dot{\gamma}_S'(t) \end{pmatrix} &= - H_{(\beta, S), (\beta, S)} \left( \begin{pmatrix} \beta'(t)\\ \gamma_S'(t) \end{pmatrix} - \begin{pmatrix} \beta^{\star} \\ \gamma_S^{\star} \end{pmatrix} \right) + \begin{pmatrix} X^{*} \epsilon\\ 0_s \end{pmatrix},\\
        \dot{\rho}_{S^c}(t) &= - H_{S^c,(\beta,S)} \left( \begin{pmatrix} \beta'(t)\\ \gamma_S'(t) \end{pmatrix} - \begin{pmatrix} \beta^{\star} \\ \gamma_S^{\star} \end{pmatrix} \right).\notag
    \end{align}
    We claim that
    \begin{equation*}
        \begin{pmatrix} \beta'(t)\\ \gamma_S'(t) \end{pmatrix} - \begin{pmatrix} \beta^{\star} \\ \gamma_S^{\star} \end{pmatrix} \in L\oplus \mathbb{R}^s = \mathrm{Im} \left( {H_{(\beta, S), (\beta, S)}}^{\dag} \right)
    \end{equation*}
    (the equality above will be shown at last), so by \cref{eq:rho-S} we have
    \begin{align*}
        \begin{pmatrix} \beta'(t)\\ \gamma_S'(t) \end{pmatrix} - \begin{pmatrix} \beta^{\star} \\ \gamma_S^{\star} \end{pmatrix} = - {H_{(\beta,S), (\beta,S)}}^{\dag} \left( \begin{pmatrix} 0_p \\ \dot{\rho}_S'(t) \end{pmatrix} + \frac{1}{\kappa} \begin{pmatrix} \dot{\beta}'(t) \\ \dot{\gamma}_S'(t) \end{pmatrix} - \begin{pmatrix} X^{*} \epsilon\\ 0_s \end{pmatrix} \right),\\
        \Longrightarrow \dot{\rho}_{S^c}(t) = H_{S^c, (\beta, S)} {H_{(\beta,S), (\beta,S)}}^{\dag} \left( \begin{pmatrix} 0_p \\ \dot{\rho}_S'(t) \end{pmatrix} + \frac{1}{\kappa} \begin{pmatrix} \dot{\beta}'(t) \\ \dot{\gamma}_S'(t) \end{pmatrix} - \begin{pmatrix} X^{*} \epsilon\\ 0_s \end{pmatrix} \right).
    \end{align*}
    Integration on both sides leads to, for $0 \le t< \tau_{\mathrm{exit}}$
    \begin{equation*}
        \rho_{S^c}(t) = H_{S^c, (\beta, S)} {H_{(\beta,S), (\beta,S)}}^{\dag} \left( \begin{pmatrix} 0_p \\ \rho_S'(t) \end{pmatrix} + \frac{1}{\kappa} \begin{pmatrix} \beta'(t) \\ \gamma_S'(t) \end{pmatrix} - t \begin{pmatrix} X^{*} \epsilon\\ 0_s \end{pmatrix} \right)\ .
    \end{equation*}
    Due to the continuity of $\rho_{S^c}(t), \rho_S'(t)$ (and $\gamma_S'(t)$, if $\kappa < +\infty$), the equation above also holds for $t = \tau_{\mathrm{exit}}$. According to the definition of $\tau_{\mathrm{exit}}$, we know \cref{eq:slbiss-nfp} does not hold for $t = \tau_{\mathrm{exit}}$. Thus for $\tau<\tau_{\mathrm{exit}}$, the desired result follows.

    So it suffices to prove
    \begin{equation}
        \label{eq:L-oplus-Rs}
        L\oplus \mathbb{R}^s = \mathrm{Im} \left( {H_{(\beta, S),(\beta, S)}}^{\dag} \right).
    \end{equation}
    Actually, let $H_{(\beta, S), (\beta, S)} = U' \Lambda' U'^T$ where $U'^T U' = I$ and $\Lambda'$ is an invertible diagonal matrix. It suffices to show $L\oplus \mathbb{R}^s = \mathrm{Im}(U')$. First, by the definition of $H$, one can easily verify that
    \begin{equation*}
        \mathrm{Im}\left( U' \right) = \mathrm{Im}\left( H_{(\beta, S), (\beta, S)} \right) \subseteq \left( \mathrm{Im}\left( X^T \right) + \mathrm{Im} \left( D^T \right) \right) \oplus \mathbb{R}^s = L \oplus \mathbb{R}^s.
    \end{equation*}
    On the other hand, assume that $(U', \tilde{U}')$ is an orthogonal square matrix. For any $\zeta \in L\oplus \mathbb{R}^s$, since $P_{\mathrm{Im}(U')} \zeta \in \mathrm{Im}(U') \subseteq L \oplus \mathbb{R}^s$, we have $P_{\mathrm{Im}(\tilde{U}')} \zeta = \zeta - P_{\mathrm{Im}(U')} \zeta \in L\oplus \mathbb{R}^s$, and \cref{eq:rsc} tells us
    \begin{multline*}
        0 = \left\| \Lambda'^{1/2} U'^T P_{\mathrm{Im} \left( \tilde{U'} \right)} \zeta \right\|_2^2 \ge \lambda_H \left\| P_{\mathrm{Im}\left( \tilde{U}' \right)} \zeta \right\|_2^2 \Longrightarrow P_{\mathrm{Im}\left( \tilde{U}' \right)} \zeta = 0\\
        \Longrightarrow \zeta = P_{\mathrm{Im}(U')} \zeta + P_{\mathrm{Im}\left( \tilde{U}' \right)} \zeta = P_{\mathrm{Im}(U')} \zeta \in \mathrm{Im}(U').
    \end{multline*}
    Thus \cref{eq:L-oplus-Rs} holds. \qed
\end{proof}

\begin{lemma}
    \label{thm:orc-minus-star}
    Suppose $\Sigma_{S,S} \succeq \lambda_\Sigma I$. For $\beta^o \in L$ and $\gamma_S^o\in \mathbb{R}^s$ satisfying \cref{eq:orc-1st-opt}, we have
    \begin{equation}
        \label{eq:beta-orc-minus-star}
        \begin{split}
            \left\| \beta^o - \beta^{\star} \right\|_2^2 = &{} \left\| \delta^o - \delta^{\star} \right\|_2^2 + \left\| \xi^o - \xi^{\star} \right\|_2^2,\ \text{where}\\
            & \delta^o - \delta^{\star} := V^T \left( \beta^o - \beta^{\star} \right),\ \xi^o - \xi^{\star} = V_1^T \tilde{V}^T \left( \beta^o - \beta^{\star} \right),
        \end{split}
    \end{equation}
    and
    \begin{align}
        \label{eq:delta-orc-minus-star}
        \delta^o - \delta^{\star} &= \underbrace{\left( \nu B^{-1} + B^{-1} \Lambda U_S^T \Sigma_{S,S}^{-1} U_S \Lambda B^{-1} \right) V^T X^{*} \left( I - U_1 U_1^T \right)}_{\triangleq B_{\delta}} \epsilon,\ \text{with}\ \left\| B_\delta \right\|_2 \le \frac{\Lambda_X}{\sqrt{n}\cdot \lambda_\Sigma \lambda_D^2},\\
        \label{eq:xi-orc-minus-star}
        \xi^o - \xi^{\star} &= \underbrace{n^{-1/2} \Lambda_1^{-1} U_1^T (I - XV B_\delta)}_{\triangleq B_{\xi}} \epsilon,\ \text{with}\ \left\| B_\xi \right\|_2 \le \frac{\lambda_\Sigma \lambda_D^2 + \Lambda_X^2}{\sqrt{n}\cdot \lambda_1 \lambda_\Sigma \lambda_D^2}.
    \end{align}
    Besides, we have
    \begin{equation}
        \label{eq:gamma-orc-minus-star}
        \gamma_S^o - \gamma_S^{\star} = \underbrace{\Sigma_{S,S}^{-1} U_S \Lambda B^{-1} V^T X^{*} \left( I - U_1 U_1^T \right)}_{\triangleq B_\gamma} \epsilon,\ \text{with}\ \left\| B_\gamma \right\|_2 \le \frac{\Lambda_X}{\sqrt{n}\cdot \lambda_\Sigma \lambda_D}.
    \end{equation}
\end{lemma}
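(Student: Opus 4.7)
The strategy is: (i) invoke \Cref{thm:beta-decomposition} for the Pythagorean identity; (ii) derive a $2\times 2$ block linear system in $(\delta^o-\delta^\star,\xi^o-\xi^\star)$ from the oracle first-order conditions \eqref{eq:orc-1st-opt}; (iii) eliminate $\xi^o-\xi^\star$ to obtain a single equation with coefficient $B-\Lambda U_S^TU_S\Lambda$; (iv) invert via Woodbury together with the key identity $\nu\Sigma_{S,S}=I-U_S\Lambda B^{-1}\Lambda U_S^T$ coming from \Cref{thm:Adag}; and (v) bound operator norms via a spectral analysis of the matrix $M:=B^{-1/2}\Lambda U_S^T$.

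Since $\Sigma_{S,S}\succeq\lambda_\Sigma I$, \Cref{thm:rsc-nu-prime} implies \Cref{thm:rsc}, so $V_1$ is a $(p-r)\times(p-r)$ orthogonal matrix (as noted at the start of the appendix); hence $V$ and $\tilde{V}V_1$ have orthonormal columns and together span $\mathcal L$. Applying \Cref{thm:beta-decomposition} to $\beta^o,\beta^\star\in\mathcal L$ gives $\beta^o-\beta^\star=V(\delta^o-\delta^\star)+\tilde{V}V_1(\xi^o-\xi^\star)$, so \eqref{eq:beta-orc-minus-star} follows by orthogonality.

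The second equation in \eqref{eq:orc-1st-opt} gives $\gamma_S^o=D_S\beta^o$; combined with $\gamma_{S^c}^o=0$, $D_{S^c}\beta^\star=0$, and $y=X\beta^\star+\epsilon$, the first equation reduces to $(\nu X^*X+D_{S^c}^TD_{S^c})(\beta^o-\beta^\star)=\nu X^*\epsilon$. Left-multiplying by $V^T$ and by $V_1^T\tilde{V}^T$, and using $DV=U\Lambda$, $D\tilde{V}=0$, $X\tilde{V}V_1=\sqrt n\,U_1\Lambda_1$, $U_S^TU_S+U_{S^c}^TU_{S^c}=I_r$, produces
\begin{align*}
(B-\Lambda U_S^TU_S\Lambda)(\delta^o-\delta^\star)+\tfrac{\nu}{\sqrt n}V^TX^TU_1\Lambda_1(\xi^o-\xi^\star)&=\nu V^TX^*\epsilon,\\
\tfrac{1}{\sqrt n}\Lambda_1U_1^TXV(\delta^o-\delta^\star)+\Lambda_1^2(\xi^o-\xi^\star)&=\tfrac{1}{\sqrt n}\Lambda_1U_1^T\epsilon.
\end{align*}
Solving the second for $\xi^o-\xi^\star=n^{-1/2}\Lambda_1^{-1}U_1^T[\epsilon-XV(\delta^o-\delta^\star)]$ and substituting into the first, the cross term combines (forming a $U_1U_1^T$ projector) so that the first equation collapses to $(B-\Lambda U_S^TU_S\Lambda)(\delta^o-\delta^\star)=\nu V^TX^*(I-U_1U_1^T)\epsilon$. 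The Woodbury identity together with $\nu\Sigma_{S,S}=I-U_S\Lambda B^{-1}\Lambda U_S^T$ gives $(B-\Lambda U_S^TU_S\Lambda)^{-1}=B^{-1}+\nu^{-1}B^{-1}\Lambda U_S^T\Sigma_{S,S}^{-1}U_S\Lambda B^{-1}$, yielding \eqref{eq:delta-orc-minus-star}. Back-substitution produces \eqref{eq:xi-orc-minus-star}. Finally $\gamma_S^o-\gamma_S^\star=U_S\Lambda(\delta^o-\delta^\star)$ together with the algebraic simplification $U_S\Lambda B^{-1}\Lambda U_S^T\Sigma_{S,S}^{-1}=(I-\nu\Sigma_{S,S})\Sigma_{S,S}^{-1}=\Sigma_{S,S}^{-1}-\nu I$ collapses $U_S\Lambda B_\delta$ to exactly the $B_\gamma$ of \eqref{eq:gamma-orc-minus-star}.

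For the operator norms, set $M:=B^{-1/2}\Lambda U_S^T$, so that $M^TM=U_S\Lambda B^{-1}\Lambda U_S^T=I-\nu\Sigma_{S,S}\succeq 0$ and the nonzero eigenvalues of $MM^T$ coincide with those of $M^TM$, hence lie in $[0,1-\nu\lambda_\Sigma]$. Because $U_S\Lambda B^{-1}\Lambda U_S^T\succeq 0$ forces $\nu\Sigma_{S,S}\preceq I$, we have $\nu\lambda_\Sigma\leq 1$, so the eigenvalues of $I-MM^T$ lie in $[\nu\lambda_\Sigma,1]$ and $\|(I-MM^T)^{-1}\|_2\leq 1/(\nu\lambda_\Sigma)$. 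The factorization $(B-\Lambda U_S^TU_S\Lambda)^{-1}=B^{-1/2}(I-MM^T)^{-1}B^{-1/2}$ with $\|B^{-1/2}\|_2\leq 1/\lambda_D$ therefore gives $\|(B-\Lambda U_S^TU_S\Lambda)^{-1}\|_2\leq 1/(\nu\lambda_\Sigma\lambda_D^2)$, and multiplying by $\nu\|V^TX^*(I-U_1U_1^T)\|_2\leq\nu\Lambda_X/\sqrt n$ gives the bound on $B_\delta$. The estimate $\|M\|_2\leq 1$ yields $\|U_S\Lambda B^{-1}\|_2\leq 1/\lambda_D$, which delivers the bound on $B_\gamma$; and the triangle inequality with $\|XV\|_2\leq\sqrt n\,\Lambda_X$ yields the bound on $B_\xi$. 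The main obstacle is the sharp estimate for $\|(B-\Lambda U_S^TU_S\Lambda)^{-1}\|_2$: a naive triangle inequality applied directly to the Woodbury expansion yields an extraneous additive $1/\lambda_D^2$ term, and obtaining the clean $1/(\nu\lambda_\Sigma\lambda_D^2)$ bound requires the spectral identification above, together with the Schur-complement fact $\nu\lambda_\Sigma\leq 1$.
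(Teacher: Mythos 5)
Your proposal is correct and rests on the same algebraic skeleton as the paper's proof: the decomposition of \Cref{thm:beta-decomposition}, the reduction of \cref{eq:orc-1st-opt} to a linear system in $(\delta^o-\delta^\star,\xi^o-\xi^\star)$, the elimination of $\xi^o-\xi^\star$, the identity $\nu\Sigma_{S,S}=I-U_S\Lambda B^{-1}\Lambda U_S^T$ from \Cref{thm:Adag}, and the spectral fact $\Lambda U_S^TU_S\Lambda\preceq(1-\nu\lambda_\Sigma)B$ (equivalently $M^TM\preceq(1-\nu\lambda_\Sigma)I$) for the sharp norm bound. The one genuine difference is the order of elimination: the paper substitutes $\delta^o-\delta^\star=B^{-1}\Lambda U_S^T(\gamma_S^o-\gamma_S^\star)+\nu B^{-1}V^TX^*(I-U_1U_1^T)\epsilon$ into $\gamma_S^o-\gamma_S^\star=U_S\Lambda(\delta^o-\delta^\star)$, so that $\nu\Sigma_{S,S}$ appears directly as the coefficient of $\gamma_S^o-\gamma_S^\star$ and no matrix inversion lemma is needed; you instead invert $B-\Lambda U_S^TU_S\Lambda$ by Woodbury and then recover $B_\gamma$ via the collapse $U_S\Lambda B_\delta=\Sigma_{S,S}^{-1}U_S\Lambda B^{-1}V^TX^*(I-U_1U_1^T)$, which is a valid (and slightly longer) detour that also requires you to justify invertibility of $B-\Lambda U_S^TU_S\Lambda$ — your $\|M\|_2^2\le 1-\nu\lambda_\Sigma<1$ observation does this. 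Two small blemishes: in your first displayed equation the coefficient of $\delta^o-\delta^\star$ should be $\Lambda^2+\nu V^TX^*XV-\Lambda U_S^TU_S\Lambda=B+\nu V^TX^*U_1U_1^TXV-\Lambda U_S^TU_S\Lambda$, not $B-\Lambda U_S^TU_S\Lambda$ — the matrix $B$ only emerges after the cross term $\nu V^TX^*U_1U_1^TXV(\delta^o-\delta^\star)$ is absorbed in the substitution step you describe, so the final collapsed equation is right but the intermediate display is mislabeled; and the appeal to \Cref{thm:rsc-nu-prime} is unnecessary for the Pythagorean identity, which needs only that $V$ and $\tilde VV_1$ have orthonormal, mutually orthogonal columns.
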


\begin{proof}
    By \Cref{thm:beta-decomposition} and $\beta^o - \beta^{\star}\in L$, we have \cref{eq:beta-orc-minus-star}. By \cref{eq:orc-1st-opt}, we have
    \begin{equation}
        \label{eq:gamma-delta-orc-minus-star}
        \gamma_S^o - \gamma_S^{\star} = D_S \left( \beta^o - \beta^{\star} \right) = U_S \Lambda \left( \delta^o - \delta^{\star} \right),
    \end{equation}
    and
    \begin{equation*}
        X^{*} \epsilon + D_S^T \left( \gamma_S^o - \gamma_S^{\star} \right) / \nu = \left( X^{*} X + D^T D / \nu \right) \left( \beta^o - \beta^{\star} \right),
    \end{equation*}
    i.e.
    \begin{multline}
        \label{eq:gamma-beta-orc-minus-star}
        X^{*} \epsilon + V \Lambda U_S^T \left( \gamma_S^o - \gamma_S^{\star} \right) / \nu = \left( X^{*} X + V \Lambda^2 V^T / \nu \right) \left( V \left( \delta^o - \delta^{\star} \right) + \tilde{V} V_1 \left( \xi^o - \xi^{\star} \right) \right)\\
        = \left( X^{*} X V + V \Lambda^2 / \nu \right) \left( \delta^o - \delta^{\star} \right) + \sqrt{n} X^{*} U_1 \Lambda_1 \left( \xi^o - \xi^{\star} \right).
    \end{multline}
    Left multiplying $\Lambda_1^{-2} V_1^T \tilde{V}^T$ on both sides of \cref{eq:gamma-beta-orc-minus-star} leads to
    \begin{equation}
        \label{eq:xi-delta-orc-minus-star}
        \xi^o - \xi^{\star} = \frac{1}{\sqrt{n}} \Lambda_1^{-1} U_1^T \left( \epsilon - XV \left( \delta^o - \delta^{\star} \right) \right).
    \end{equation}
    Then left multiplying $V^T$ on both sides of \cref{eq:gamma-beta-orc-minus-star} leads to
    \begin{multline*}
        V^T X^{*} \epsilon + \Lambda U_S^T \left( \gamma_S^o - \gamma_S^{\star} \right) / \nu\\
        = \left( V^T X^{*} X V + \Lambda^2 / \nu \right) \left( \delta^o - \delta^{\star} \right) + \sqrt{n} V^T X^{*} U_1 \Lambda_1 \cdot \frac{1}{\sqrt{n}} \Lambda_1^{-1} U_1^T \left( \epsilon - XV \left( \delta^o - \delta^{\star} \right) \right)\\
        = \left( V^T X^{*} \left( I - U_1 U_1^T \right) X V + \Lambda^2 / \nu \right) \left( \delta^o - \delta^{\star} \right) + V^T X^{*} U_1 U_1^T \epsilon.
    \end{multline*}
    Recalling the definition of $B$ in \Cref{thm:Adag}, the equation above implies
    \begin{equation}
        \label{eq:delta-gamma-orc-minus-star}
        \delta^o - \delta^{\star} = B^{-1} \Lambda U_S^T \left( \gamma_S^o - \gamma_S^{\star} \right) + \nu B^{-1} V^T X^{*} \left( I - U_1 U_1^T \right) \epsilon.
    \end{equation}
    Plugging it into \cref{eq:gamma-delta-orc-minus-star}, we obtain $\gamma_S^o - \gamma_S^{\star} = B_{\gamma} \epsilon$. Then noting $B \succeq \lambda_D^2 I$, we have
    \begin{equation*}
        \left\| B_\gamma \right\|_2 \le \left\| \Sigma_{S,S}^{-1} \right\|_2 \cdot 1 \cdot \left\| \Lambda B^{-1} \right\|_2 \cdot 1\cdot \left\| X^{*} \right\|_2 \cdot \left\| I - U_1 U_1^T \right\|_2 \le \frac{\Lambda_X}{\sqrt{n}\cdot \lambda_\Sigma \lambda_D}.
    \end{equation*}
    so \cref{eq:gamma-orc-minus-star} holds. Now by \cref{eq:delta-gamma-orc-minus-star} we have $\delta^o - \delta^{\star} = B_{\delta} \epsilon$. Noting \cref{eq:Sigma} and $\Sigma_{S,S}\succeq \lambda_\Sigma I$, we have
    \begin{multline*}
        U_S \Lambda B^{-1/2} \cdot B^{-1/2} \Lambda U_S^T \preceq (1 - \lambda_\Sigma \nu) I\\
        \Longleftrightarrow B^{-1/2} \Lambda U_S^T \cdot U_S \Lambda B^{-1/2} \preceq (1 - \lambda_\Sigma \nu) I \Longleftrightarrow \Lambda U_S^T U_S \Lambda \preceq (1 - \lambda_\Sigma \nu) B.
    \end{multline*}
    Thus
    \begin{equation*}
        \nu B^{-1} + B^{-1} \Lambda U_S^T \Sigma_{S,S}^{-1} U_S \Lambda B^{-1} \preceq \nu B^{-1} + \frac{1}{\lambda_\Sigma} B^{-1} \Lambda U_S^T U_S \Lambda B^{-1} \preceq \frac{1}{\lambda_\Sigma} B^{-1},
    \end{equation*}
    which immediately leads to \cref{eq:delta-orc-minus-star}. Finally, combining \cref{eq:xi-delta-orc-minus-star} with \cref{eq:delta-orc-minus-star} we have \cref{eq:xi-orc-minus-star}. \qed
\end{proof}

Now we are ready for proving the main theorems.

\begin{proof}[Proof of \Cref{thm:slbiss-cstc}]
    By \cref{eq:tail-subgau-2,eq:gamma-orc-minus-star,eq:xi-orc-minus-star}, we have that with probability not less than $1-4s/m^2\ge 1 - 4/m$,
    \begin{align}
        \label{eq:gamma-orc-minus-star-linf}
        \left\| \gamma_S^o - \gamma_S^\star \right\|_{\infty} &< \frac{2\sigma}{\lambda_H} \cdot \frac{\Lambda_X}{\lambda_D} \sqrt{\frac{\log m}{n}},\\
        \label{eq:xi-orc-minus-star-linf}
        \left\| \xi^o - \xi^{\star} \right\|_{\infty} &< \frac{2\sigma}{\lambda_H}\cdot \frac{\lambda_H \lambda_D^2 + \Lambda_X^2}{\lambda_1 \lambda_D^2} \sqrt{\frac{\log m}{n}}.
    \end{align}
    By \cref{eq:tail-subexp-2,eq:beta-orc-minus-star,eq:delta-orc-minus-star,eq:xi-orc-minus-star,eq:gamma-orc-minus-star}, with probability not less than $1 - 3 \exp(-4n/5)$,
    \begin{equation}
        \label{eq:gamma-delta-xi-orc-minus-star-l2}
        \begin{gathered}
            \left\| \epsilon \right\|_2 \le 2\sigma \sqrt{n},\ \text{which implies}\\
            \left\| \gamma_S^o - \gamma_S^{\star} \right\|_2 < \frac{2\sigma}{\lambda_H}\cdot \frac{\Lambda_X}{\lambda_D},\ \left\| \delta^o - \delta^{\star} \right\|_2 < \frac{2\sigma}{\lambda_H}\cdot \frac{\Lambda_X}{\lambda_D^2},\ \left\| \xi^o - \xi^{\star} \right\|_2 < \frac{2\sigma}{\lambda_H}\cdot \frac{\lambda_H \lambda_D^2 + \Lambda_X^2}{\lambda_1 \lambda_D^2}.
        \end{gathered}
    \end{equation}
    The inequalities above also imply
    \begin{equation}
        \label{eq:beta-orc-minus-star-l2}
        \left\| \beta^o - \beta^{\star} \right\|_2 \le \left\| \delta^o - \delta^{\star} \right\|_2 + \left\| \xi^o - \xi^{\star} \right\|_2 < \frac{2\sigma}{\lambda_H} \left( \frac{\Lambda_X}{\lambda_D^2} + \frac{\lambda_H \lambda_D^2 + \Lambda_X^2}{\lambda_1 \lambda_D^2} \right),
    \end{equation}
    and
    \begin{multline}
        \label{eq:d-0-upper}
        d(0) = \sqrt{\left\| \gamma_S^o \right\|_2^2 + \left\| \beta^o \right\|_2^2} \le \left\| \gamma_S^\star \right\|_2 + \left\| \beta^{\star} \right\|_2 + \left\| \gamma_S^o - \gamma_S^\star \right\|_2 + \left\| \beta^o - \beta^{\star} \right\|_2\\
        < \left( 1 + \Lambda_D \right) \left\| \beta^{\star} \right\|_2 + \frac{2\sigma}{\lambda_H} \left( \frac{\Lambda_X}{\lambda_D} + \frac{\Lambda_X}{\lambda_D^2} + \frac{\lambda_H \lambda_D^2 + \Lambda_X^2}{\lambda_1 \lambda_D^2} \right).
    \end{multline}
    From now, we assume all the inequalities above hold. The condition on $\kappa$ now tells us
    \begin{equation}
        \label{eq:kappa-ge-d0}
        \kappa \ge \frac{4}{\eta} \left( 1 + \frac{1}{\lambda_D} + \frac{\Lambda_X}{\lambda_1 \lambda_D} \right) \left( 1 + \sqrt{\frac{2 \left( 1 + \nu \Lambda_X^2 + \Lambda_D^2 \right)}{\lambda_H \nu}} \right) \cdot d(0)\ (\ge d(0)).
    \end{equation}

    Now we prove the \emph{No-false-positive} property. By \Cref{thm:slbiss-nfp}, it suffices to show that for $0\le t\le \bar{\tau}$, \cref{eq:slbiss-nfp} holds with probability not less that $1 - 2/m$. By \cref{eq:ic-equiv,eq:DAdag,eq:slbiss-orc-cstc-l2},
    \begin{multline*}
        \frac{1}{\kappa} \left\| H_{S^c, (\beta,S)} {H_{(\beta,S),(\beta,S)}}^{\dag} \begin{pmatrix} \beta'(t) \\ \gamma_S'(t) \end{pmatrix} \right\|_{\infty}\\
        = \left\| \left( - D_{S^c} A^{\dag} + \Sigma_{S^c, S} \Sigma_{S,S}^{-1} D_S \right) A^{\dag} \beta'(t) + \Sigma_{S^c, S} \Sigma_{S,S}^{-1} \gamma_S'(t) \right\|_{\infty} / \kappa\\
        \le \left\| D_{S^c} A^{\dag} \beta'(t) \right\|_{\infty} / \kappa + \left\| \Sigma_{S^c, S} \Sigma_{S,S}^{-1} D_S A^{\dag} \beta'(t) \right\|_{\infty} / \kappa + \left\| \gamma_S'(t) \right\|_{\infty} / \kappa\\
        \le 2 \left\| D A^{\dag} \right\|_2 \cdot \left\| \beta'(t) \right\|_2 / \kappa + \left\| \gamma_S'(t) \right\|_2 / \kappa \le \left( 2 \left( \frac{1}{\lambda_D} + \frac{\Lambda_X}{\lambda_D \lambda_1} \right) + 1 \right) \sqrt{\left\| \beta'(t) \right\|_2^2 + \left\| \gamma_S'(t) \right\|_2^2} / \kappa\\
        \le 2 \left( 1 + \frac{1}{\lambda_D} + \frac{\Lambda_X}{\lambda_D \lambda_1} \right) \left( d(0) + d(t) \right) / \kappa\\
        \le 2 \left( 1 + \frac{1}{\lambda_D} + \frac{\Lambda_X}{\lambda_D \lambda_1} \right) \left( 1 + \sqrt{\frac{2 \left( 1 + \nu \Lambda_X^2 + \Lambda_D^2 \right)}{\lambda_H \nu}} \right) d(0) / \kappa \le \frac{\eta}{2}.
    \end{multline*}
    Besides, by \cref{eq:ic-equiv} we have
    \begin{multline*}
        \left\| H_{S^c, (\beta,S)} {H_{(\beta,S), (\beta,S)}}^{\dag} \begin{pmatrix} X^{*} \epsilon \\ 0 \end{pmatrix} \right\|_{\infty} = \left\| \left( - D_{S^c} + \Sigma_{S^c, S} \Sigma_{S,S}^{\dag} D_S \right) A^{\dag} X^{*} \epsilon \right\|_{\infty}\\
        \le \left\| D_{S^c} A^{\dag} X^{*} \epsilon \right\|_{\infty} + \left\| D_S A^{\dag} X^{*} \epsilon \right\|_{\infty} \le 2 \left\| D A^{\dag} X^{*} \epsilon \right\|_{\infty}.
    \end{multline*}
    By \cref{eq:Sigma}, $D A^{\dag} D^T = U \Lambda B^{-1} \Lambda U^T$ and $\Lambda^2 \preceq B \preceq (1 + \nu \Lambda_X^2 / \lambda_D^2) \Lambda^2$, therefore $1$ is an upper bound of the largest eigenvalue of $D A^{\dag} D^T$, and $1 / (1 + \nu \Lambda_X^2 / \lambda_D^2)$ is a lower bound of the smallest \emph{nonzero} eigenvalue of $D A^{\dag} D^T$. Then
    \begin{multline*}
        D A^{\dag} X^{*} \left( D A^{\dag} X^{*} \right)^T = \frac{1}{n \nu} D A^{\dag} \left( A - D^T D \right) A^{\dag} D^T\\
        = \frac{1}{n \nu} \left( D A^{\dag} D^T - \left( D A^{\dag} D^T \right)^2 \right) \preceq \frac{1}{n \nu} \min \left( \frac{1}{4},\ \frac{\nu \Lambda_X^2 / \lambda_D^2}{\left( 1 + \nu \Lambda_X^2 / \lambda_D^2 \right)^2} \right) I \preceq \frac{\Lambda_X^2}{n\cdot \lambda_D^2} I.
    \end{multline*}
    By \cref{eq:tail-subgau-2}, with probability not less than $1 - 2/m$, for any $0\le t\le \bar{\tau}$,
    \begin{equation*}
        \left\| H_{S^c, (\beta,S)} {H_{(\beta,S), (\beta,S)}}^{\dag} \cdot t \begin{pmatrix} X^{*} \epsilon \\ 0 \end{pmatrix} \right\|_{\infty} \le 2 \bar{\tau} \left\| D A^{\dag} X^{*} \epsilon \right\|_{\infty} \le 2 \bar{\tau}\cdot 2\sigma \cdot \sqrt{\frac{\Lambda_X^2}{n\cdot \lambda_D^2}} \cdot \sqrt{\log m} < \frac{\eta}{2}.
    \end{equation*}
    Combining the results above with \Cref{thm:irr-nu}, we have for $0\le t\le \bar{\tau}$, \cref{eq:slbiss-nfp} holds with probability not less that $1 - 2/m$, and we have the No-false-positive property (which tells that $(\beta(t), \gamma_S(t))$ coincides with that of the oracle dynamics for $0 \le t\le \bar{\tau}$).

    Then we prove the \emph{sign consistency of $\gamma(t)$}. If the $\gamma_{\min}^{*}$ condition \cref{eq:slbiss-gammamin-cond} holds, by \cref{eq:gamma-orc-minus-star-linf},
    \begin{equation}
        \label{eq:gamma-orc-minus-star-cstc-sign}
        \left\| \gamma_S^o - \gamma_S^\star \right\|_{\infty} \le \frac{2\sigma}{\lambda_H} \cdot \frac{\Lambda_X}{\lambda_D} \sqrt{\frac{\log m}{n}} \le \frac{\gamma_{\min}^{\star}}{2}\Longrightarrow \gamma_{\min}^o \ge \frac{1}{2} \gamma_{\min}^{\star}.
    \end{equation}
    Thus $\mathrm{sign}(\gamma_S^o) = \mathrm{sign}(\gamma_S^\star)$, and
    \begin{equation*}
        \gamma_{\min}^o \ge \frac{1}{2} \gamma_{\min}^{\star} \ge \frac{2\log s + 5}{\lambda_H \bar{\tau}} > \frac{2\log s + 4 + d(0) / \kappa}{\lambda_H \bar{\tau}} \Longrightarrow \bar{\tau} > \frac{2\log s + 4 + d(0) / \kappa}{\lambda_H \gamma_{\min}^o}.
    \end{equation*}
    By \cref{eq:slbiss-orc-cstc-sign}, the sign consistency of $\gamma_S'(t)$ holds for 
    \begin{equation*}
        t > \inf_{0<\mu<1} \left( \frac{1}{\kappa \lambda_H} \log \frac{1}{\mu} + \frac{2\log s + 4 + d(0) / \kappa}{\lambda_H \gamma_{\min}^o} \right) = \frac{2\log s + 4 + d(0) / \kappa}{\lambda_H \gamma_{\min}^o},
    \end{equation*}
    thus also for $\bar{\tau}$. Then under the No-false-positive property,
    \begin{equation*}
        \mathrm{sign}\left( \gamma_S\left( \bar{\tau} \right) \right) = \mathrm{sign}\left( \gamma_S'\left( \bar{\tau} \right) \right) = \mathrm{sign}\left( \gamma_S^o \right) = \mathrm{sign}\left( \gamma_S^\star \right),
    \end{equation*}
    and
    \begin{equation*}
        \mathrm{sign}\left( \gamma_{S^c}'\left( \bar{\tau} \right) \right) = 0 = \mathrm{sign}\left( \gamma_{S^c}^{\star} \right).
    \end{equation*}

    Now we prove the \emph{$\ell_2$ consistency of $\gamma(t)$}. Under the No-false-positive property, for $0\le t\le \bar{\tau}$,
    \begin{multline*}
        \left\| \gamma(t) - D \beta^{\star} \right\|_2 = \left\| \gamma_S'(t) - \gamma_S^{\star} \right\|_2 \le \left\| d_{\gamma,S}(t) \right\|_2 + \left\| \gamma_S^o - \gamma_S^{\star} \right\|_2\\
        \le d(t) + \sqrt{s} \left\| \gamma_S^o - \gamma_S^{\star} \right\|_{\infty} \le \frac{4 \sqrt{s} + d(0) / \kappa}{\lambda_H t} + \frac{2\sigma}{\lambda_H} \cdot \frac{\Lambda_X}{\lambda_D} \sqrt{\frac{s\log m}{n}}\\
        \le \frac{5\sqrt{s}}{\lambda_H t} + \frac{2\sigma}{\lambda_H}\cdot \frac{\Lambda_X}{\lambda_D} \sqrt{\frac{s\log m}{n}}.
    \end{multline*}

    Finally, we prove the \emph{$\ell_2$ consistency} of $\beta(t)$. Under the No-false-positive property, for $0\le t\le \bar{\tau}$,
    \begin{equation*}
        \left\| \beta(t) - \beta^{\star} \right\|_2 = \left\| \beta'(t) - \beta^{\star} \right\|_2 \le d_{\beta}(t) + \left\| \beta^o - \beta^{\star} \right\|_2\le d(t) + \left\| \beta^o - \beta^{\star} \right\|_2.
    \end{equation*}
    By \Cref{thm:orc-minus-star} (especially noting \cref{eq:delta-gamma-orc-minus-star}), we have
    \begin{multline*}
        \left\| \beta^o - \beta^{\star} \right\|_2 \le \left\| \delta^o - \delta^{\star} \right\|_2 + \left\| \xi^o - \xi^{\star} \right\|_2\\
        \le \left\| \frac{1}{\sqrt{n}} \Lambda_1^{-1} U_1^T \epsilon \right\|_2 + \left( 1 + \left\| \frac{1}{\sqrt{n}} \Lambda_1^{-1} U_1^T X V \right\|_2 \right) \cdot \left\| \delta^o - \delta^{\star} \right\|_2 \le \sqrt{r'} \left\| \frac{1}{\sqrt{n}} \Lambda_1^{-1} U_1^T \epsilon \right\|_{\infty}\\
        + \left( 1 + \frac{\Lambda_X}{\lambda_1} \right) \left( \nu \left\| B^{-1} V^T X^{*} \left( I - U_1 U_1^T \right) \epsilon \right\|_2 + \left\| B^{-1} \Lambda U_S^T \right\|_2 \cdot \sqrt{s} \left\| \gamma_S^o - \gamma_S^{\star} \right\|_{\infty} \right)\\
        \le \sqrt{r'} \left\| \frac{1}{\sqrt{n}} \Lambda_1^{-1} U_1^T \epsilon \right\|_{\infty} + \left( 1 + \frac{\Lambda_X}{\lambda_1} \right) \left( \nu \cdot 2\sigma \cdot \frac{\Lambda_X}{\lambda_D^2} + \frac{1}{\lambda_D}\cdot \sqrt{s}\cdot \frac{2\sigma}{\lambda_H}\cdot \frac{\Lambda_X}{\lambda_D} \sqrt{\frac{\log m}{n}} \right).
    \end{multline*}
    By \cref{eq:tail-subgau-2}, with probability not less than $1 - 2/m$, we have
    \begin{equation*}
        \left\| \frac{1}{\sqrt{n}} \Lambda_1^{-1} U_1^T \epsilon \right\|_{\infty} \le 2\sigma \left\| \frac{1}{\sqrt{n}} \Lambda_1^{-1} U_1^T \right\|_2 \sqrt{\log m} \le \frac{2\sigma}{\lambda_1} \sqrt{\frac{\log m}{n}}.
    \end{equation*}
    In this case, combining the inequalities above with $d(t)\le 5\sqrt{s}/(\lambda_H t)$, the desired result follows. \qed
\end{proof}

\begin{proof}[Proof of \Cref{thm:slbiss-rev-cstc}]
    By the proof details of \Cref{thm:slbiss-cstc}, we know that with probability not less than $1 - 6/m - 3\exp(-4n/5)$, \cref{eq:gamma-orc-minus-star-linf,eq:xi-orc-minus-star-linf,eq:gamma-delta-xi-orc-minus-star-l2,eq:beta-orc-minus-star-l2,eq:d-0-upper} hold, meanwhile the solution path has no false-positive for $0\le t\le \overline{\tau}$. From now, we assume that these properties are all valid.

    First we prove the \emph{sign consistency of $\tilde{\beta}(t)$}. If the $\gamma_{\min}^{\star}$ condition \cref{eq:slbiss-gammamin-cond} holds, then by \Cref{thm:slbiss-cstc}, $S(\bar{\tau}) = S$ holds, and we have
    \begin{equation*}
        D_{S^c} P_{S\left( \bar{\tau} \right)} = D_{S^c} \left( I - D_{S^c}^{\dag} D_{S^c} \right) = 0 \Longrightarrow \mathrm{sign}\left( D_{S^c} \tilde{\beta}\left( \bar{\tau} \right) \right) = 0 = \mathrm{sign} \left( D_{S^c} \beta^{\star} \right).
    \end{equation*}
    To prove $\mathrm{sign}(D_S \tilde{\beta}(\bar{\tau})) = \mathrm{sign}(D_S \beta^{\star})$, note that
    \begin{align*}
        & \left\| D_S \tilde{\beta}\left( \bar{\tau} \right) - D_S \beta^{*} \right\|_{\infty} = \left\| D_S \left( I - D_{S^c}^{\dag} D_{S^c} \right) \left( \beta'\left( \bar{\tau} \right) - \beta^{\star} \right) \right\|_{\infty}\\
        \le{}& \left\| D_S \left( I - D_{S^c}^{\dag} D_{S^c} \right) d_\beta\left( \bar{\tau} \right) \right\|_{\infty} + \left\| D_S \left( 1 - D_{S^c}^{\dag} D_{S^c} \right) \left( \beta^o - \beta^{\star} \right) \right\|_{\infty}\\
        \le{}& \left\| D_S \left( I - D_{S^c}^{\dag} D_{S^c} \right) d_\beta\left( \bar{\tau} \right) \right\|_{\infty} + \left\| \gamma_S^o - \gamma_S^{\star} \right\|_{\infty} + \left\| D_S D_{S^c}^{\dag} D_{S^c} \left( \beta^o - \beta^{\star} \right) \right\|_{\infty}.
    \end{align*}
    First, by \cref{eq:kappa-ge-d0}, $\kappa \ge d(0) \ge \left\| \gamma_S^o \right\|_2 \ge \gamma_{\min}^o$, and
    \begin{equation*}
        \bar{\tau} \ge \frac{\log(8 \Lambda_D)}{\lambda_H \gamma_{\min}^o} + \frac{2 \log s + 5}{\lambda_H \gamma_{\min}^o} \ge \frac{1}{\kappa \lambda_H} \log \left( 8 \Lambda_D \right) + \frac{2\log s + 4 + d(0) / \kappa}{\lambda_H \gamma_{\min}^o}.
    \end{equation*}
    By \cref{eq:slbiss-orc-cstc-sign}, we have $d \left( \bar{\tau} \right) \le \gamma_{\min}^o / (8 \Lambda_D)$, and thus
    \begin{multline*}
        \left\| D_S \left( I - D_{S^c}^{\dag} D_{S^c} \right) d_\beta\left( \bar{\tau} \right) \right\|_{\infty}\\
        \le \left\| D_S \right\|_2 \cdot \left\| I - D_{S^c}^{\dag} D_{S^c} \right\|_2 \cdot \left\| d_\beta\left( \bar{\tau} \right) \right\|_2 \le \Lambda_D \cdot d\left( \bar{\tau} \right) \le \frac{\gamma_{\min}^o}{8}\le \frac{\gamma_{\min}^{\star}}{4}.
    \end{multline*}
    Besides, by \cref{eq:delta-orc-minus-star}, we have
    \begin{equation*}
        D_S D_{S^c}^{\dag} D_{S^c} \left( \beta^o - \beta^{\star} \right) = U_S \Lambda V^T D_{S^c}^{\dag} U_{S^c} \Lambda \left( \delta^o - \delta^{\star} \right) = U_S \Lambda V^T D_{S^c}^{\dag} U_{S^c} \Lambda B_{\delta} \epsilon
    \end{equation*}
    with
    \begin{equation*}
        \left\| U_S \Lambda V^T D_{S^c}^{\dag} U_{S^c} \Lambda B_\delta \right\|_2\le \Lambda_D \left\| D_{S^c}^{\dag} \cdot U_{S^c} \Lambda V^T \right\|_2 \cdot \left\| B_\delta \right\|_2 \le \frac{\Lambda_X \Lambda_D}{\sqrt{n}\cdot \lambda_H \lambda_D^2}.
    \end{equation*}
    By \cref{eq:tail-subgau-2}, with probability not less than $1 - 2/m$,
    \begin{equation*}
        \left\| D_S D_{S^c}^{\dag} D_{S^c} \left( \beta^o - \beta^{\star} \right) \right\|_{\infty} < \frac{2\sigma}{\lambda_H} \cdot \frac{\Lambda_X \Lambda_D}{\lambda_D^2} \sqrt{\frac{\log m}{n}} \le \frac{\gamma_{\min}^{\star}}{4}.
    \end{equation*}
    Finally, we note \cref{eq:gamma-orc-minus-star-cstc-sign}. Then $\mathrm{sign}(D_S \tilde{\beta}(\bar{\tau})) = \mathrm{sign}(D_S \beta^{\star})$ holds, since
    \begin{equation*}
        \left\| D_S \left( \tilde{\beta}\left( \bar{\tau} \right) - \beta^{\star} \right) \right\|_{\infty} < \frac{\gamma_{\min}^{\star}}{4} + \frac{\gamma_{\min}^{\star}}{2} + \frac{\gamma_{\min}^{\star}}{4} = \left( D_S \beta^{\star} \right)_{\min}.
    \end{equation*}

    Then we prove the \emph{$\ell_2$ consistency of $\tilde{\beta}(t)$}. For any $0\le t\le \bar{\tau}$, $S(t) \subseteq S$, which implies $D_{S^c} \tilde{\beta}(t) = D_{S^c} \beta^{\star} = 0$. Then
    \begin{multline*}
        \left\| \tilde{\beta}(t) - \beta^{\star} \right\|_2 \le \left\| V^T \left( \tilde{\beta}(t) - \beta^{\star} \right) \right\|_2 + \left\| V_1^T \tilde{V}^T \left( \tilde{\beta}(t) - \beta^{\star} \right) \right\|_2\\
        \le \left( \left\| V^T P_{S(t)} \left( \beta'(t) - \beta^{\star} \right) \right\|_2 + \left\| V^T \left( I - P_{S(t)} \right) \beta^{\star} \right\|_2 \right)\\
        + \left( \left\| V_1^T \tilde{V}^T P_{S(t)} \left( \beta'(t) - \beta^{\star} \right) \right\|_2 + \left\| V_1^T \tilde{V}^T \left( I - P_{S(t)} \right) \beta^{\star} \right\|_2 \right)\\
        \le \left\| V^T P_{S(t)} \left( \beta'(t) - \beta^{\star} \right) \right\|_2 + \left\| V_1^T \tilde{V}^T P_{S(t)} \left( \beta'(t) - \beta^{\star} \right) \right\|_2 + 2 \left\| D_{S(t)^c}^{\dag} D_{S(t)^c \cap S} \beta^{\star} \right\|_2.
    \end{multline*}
    The first and second term of the right hand side are respectively not greater than
    \begin{multline*}
        \left\| V^T P_{S(t)} d_\beta(t) \right\|_2 + \left\| V^T P_{S(t)} \left( \beta^o - \beta^{\star} \right) \right\|_2 \le \left\| d_\beta(t) \right\|_2 + \frac{1}{\lambda_D} \left\| D P_{S(t)} \left( \beta^o - \beta^{\star} \right) \right\|_2\\
        \le d(t) + \frac{1}{\lambda_D} \left\| D_{S(t)} P_{S(t)} \left( \beta^o - \beta^{\star} \right) \right\|_2\\
        = d(t) + \frac{1}{\lambda_D} \left\| U_{S(t)} \Lambda \left( 1 - V^T D_{S(t)^c}^{\dag} U_{S(t)^c} \Lambda \right) \left( \delta^o - \delta^{\star} \right) \right\|_2
    \end{multline*}
    (here we use the fact that $D_{S(t)^c} P_{S(t)} = 0$), and
    \begin{multline*}
        \left\| V_1^T \tilde{V}^T P_{S(t)} d_\beta(t) \right\|_2 + \left\| V_1^T \tilde{V}^T P_{S(t)} \left( \beta^o - \beta^{\star} \right) \right\|_2\\
        \le \left\| d_\beta(t) \right\|_2 + \left\| \left( \xi^o - \xi^{\star} \right) - V_1^T \tilde{V}^T D_{S(t)^c}^{\dag} D_{S(t)^c} \left( \beta^o - \beta^{\star} \right) \right\|_2\\
        \le d(t) + \left\| \xi^o - \xi^{\star} \right\|_2 + \left\| V_1^T \tilde{V}^T D_{S(t)^c}^{\dag} U_{S(t)^c} \Lambda \left( \delta^o - \delta^{\star} \right) \right\|_2.
    \end{multline*}
    Noting \cref{eq:slbiss-orc-cstc-l2,eq:xi-orc-minus-star-linf}, as well as applying the definition of $B_\delta$ in \Cref{thm:orc-minus-star}, now we only need to show that with probability not less than $1 - 2/m - 2r'/m^2$,
    \begin{align*}
        \left\| U_{S(t)} \Lambda \left( I - V^T D_{S(t)^c}^{\dag} U_{S(t)^c} \Lambda \right) B_\delta \epsilon \right\|_{\infty} &\le \frac{2\sigma}{\lambda_H} \cdot \frac{\Lambda_D \Lambda_X}{\lambda_D^2} \sqrt{\frac{\log m}{n}},\\
        \left\| V_1^T \tilde{V}^T D_{S(t)^c}^{\dag} U_{S(t)^c} \Lambda B_\delta \epsilon \right\|_{\infty} &\le \frac{2\sigma}{\lambda_H} \cdot \frac{\Lambda_X}{\lambda_D^2} \sqrt{\frac{\log m}{n}},
    \end{align*}
    which are both true, according to \cref{eq:tail-subgau-2}, as well as \cref{eq:delta-orc-minus-star} which leads to
    \begin{multline*}
        \left\| U_{S(t)} \Lambda \left( I - V^T D_{S(t)^c}^{\dag} U_{S(t)^c} \Lambda \right) B_\delta \right\|_2\\
        \le \Lambda_D \left( 1 + \left\| V^T D_{S(t)^c}^{\dag} \cdot U_{S(t)^c} \Lambda V^T \right\|_2 \right) \left\| B_\delta \right\|_2 \le \frac{2\Lambda_X\Lambda_D}{\sqrt{n}\cdot \lambda_H \lambda_D^2},
    \end{multline*}
    and
    \begin{equation*}
        \left\| V_1^T \tilde{V}^T D_{S(t)^c}^{\dag} U_{S(t)^c} \Lambda B_\delta \right\|_2 \le \left\| D_{S(t)^c} \cdot U_{S(t)^c} \Lambda V^T \right\|_2 \cdot \left\| B_\delta \right\|_2 \le \frac{\Lambda_X}{\sqrt{n}\cdot \lambda_H \lambda_D^2}. \eqno \mbox{\qed}
    \end{equation*}
\end{proof}

\section{Proof on Consistency of Split LBI}
\label{sec:proof-slbi-cstc}

\begin{proof}[Proof of \Cref{thm:slbi-cstc} and \ref{thm:slbi-rev-cstc}]
    They are merely discrete versions of proofs of \Cref{thm:slbiss-cstc} and \ref{thm:slbiss-rev-cstc}. In the proofs, \Cref{thm:slbi-orc-gbi} and \ref{thm:slbi-orc-cstc} stated below are applied, instead of \Cref{thm:slbiss-orc-gbi} and \ref{thm:slbiss-orc-cstc}. \qed
\end{proof}

Specifically, one can define the \emph{oracle iteration} of Split LBI as an \emph{oracle} version of Split LBI \cref{eq:slbi} (with $S$ known and $\rho_{k,S^c}, \gamma_{k,S^c}$ set to be $0$), resembling the idea of \emph{oracle dynamics} of Split LBISS. Define
\begin{equation*}
    \Psi_k := \| \gamma_S^o \|_1 - \langle \gamma_S^o, \rho_{k,S} \rangle + \| \gamma_{k,S} - \gamma_S^o \|_2^2 / (2\kappa) + \| \beta_k - \beta^o \|_2^2 / (2\kappa).
\end{equation*}
Then we have

\begin{lemma}[\textnormal{Discrete Generalized Bihari's inequality}]
    \label{thm:slbi-orc-gbi}
    Under \Cref{thm:rsc}, suppose $\kappa \alpha \| H \|_2 < 2$ and $\lambda_H' = \lambda_H (1 - \kappa \alpha \|H\|_2 / 2)$. For all $k$ we have
    \begin{equation*}
        \Psi_{k+1} - \Psi_k \le - \alpha \lambda_H' F^{-1}\left( \Psi_k \right),
    \end{equation*}
    where $\gamma_{\min}^o,\ F(x),\ F^{-1}(x)$ are defined the same as in \Cref{thm:slbiss-orc-gbi}.
\end{lemma}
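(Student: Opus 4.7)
The plan is to follow the proof of the continuous analogue \Cref{thm:slbiss-orc-gbi}, with three discretization adjustments: a second-order expansion of the quadratic part of $\Psi_k$, a one-sided subgradient inequality replacing the pointwise identity $\gamma_j\dot\rho_j=0$, and a weighted AM--GM step that absorbs the $O(\alpha^2)$ discretization error into the dissipation rate.

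Concretely, write $V_k := (d_\beta(k); d_{\gamma,S}(k))$, $\Delta_1 := (\beta_{k+1}-\beta_k; \gamma_{k+1,S}-\gamma_{k,S})$, $\Delta\rho_S := \rho_{k+1,S}-\rho_{k,S}$, $H := H_{(\beta,S),(\beta,S)}$, and $L_k := \frac{1}{2} V_k^T H V_k$. Adding the oracle optimality conditions \eqref{eq:orc-1st-opt} to the oracle version of \eqref{eq:slbi-b} gives the compact update $(0;\Delta\rho_S) + \kappa^{-1}\Delta_1 = -\alpha H V_k$. Taking inner products with $V_k$ and with $\Delta_1$ yields respectively
\begin{equation*}
\langle d_{\gamma,S}(k),\Delta\rho_S\rangle + \frac{1}{\kappa}\langle V_k,\Delta_1\rangle = -2\alpha L_k, \qquad \langle \gamma_{k+1,S}-\gamma_{k,S},\Delta\rho_S\rangle + \frac{1}{\kappa}\|\Delta_1\|_2^2 = -\alpha\Delta_1^T H V_k.
\end{equation*}
Meanwhile, expanding the quadratic part of $\Psi_{k+1}-\Psi_k$ via $\|a+\delta\|_2^2-\|a\|_2^2 = 2\langle a,\delta\rangle+\|\delta\|_2^2$ and using $\langle\gamma_{k+1,S},\Delta\rho_S\rangle\ge 0$ (which holds componentwise by $\ell_1$-subgradient saturation: on the support of $\gamma_{k+1,S}$, $\rho_{k+1,S,j} = \mathrm{sign}(\gamma_{k+1,S,j})$, while $|\rho_{k,S,j}|\le 1$), I obtain
\begin{equation*}
\Psi_{k+1}-\Psi_k \le \langle d_{\gamma,S}(k+1),\Delta\rho_S\rangle + \frac{1}{\kappa}\langle V_k,\Delta_1\rangle + \frac{1}{2\kappa}\|\Delta_1\|_2^2.
\end{equation*}
Splitting $d_{\gamma,S}(k+1) = d_{\gamma,S}(k) + (\gamma_{k+1,S}-\gamma_{k,S})$ and substituting the two identities above collapses this to
\begin{equation*}
\Psi_{k+1}-\Psi_k \le -2\alpha L_k - \alpha\Delta_1^T H V_k - \frac{1}{2\kappa}\|\Delta_1\|_2^2.
\end{equation*}
Since $H\succeq 0$, Cauchy--Schwarz in the $H$-seminorm combined with weighted AM--GM using the weight $\mu = \kappa\alpha\|H\|_2 \in (0,2)$ gives $-\alpha\Delta_1^T H V_k \le \kappa\alpha^2\|H\|_2 \, L_k + \frac{1}{2\kappa}\|\Delta_1\|_2^2$, so the $\|\Delta_1\|_2^2$ terms cancel exactly and I conclude $\Psi_{k+1}-\Psi_k \le -2\alpha(1-\kappa\alpha\|H\|_2/2) L_k \le -\alpha\lambda_H' d(k)^2$ by \Cref{thm:rsc}. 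The algebraic bound $F(d(k)^2)\ge\Psi_k$ established in the proof of \Cref{thm:slbiss-orc-gbi} is purely a property of $\ell_1$ subgradients and the monotonicity of $F$, so it carries over verbatim, giving $d(k)^2\ge F^{-1}(\Psi_k)$ and the claim.

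The main obstacle is the indefinite cross term $-\alpha\Delta_1^T H V_k$, which has no continuous analogue because the pointwise complementary slackness $\gamma_j\dot\rho_j=0$ kills it in the limit. The weight $\mu = \kappa\alpha\|H\|_2$ is determined by requiring the $\|\Delta_1\|_2^2$ error produced by AM--GM to cancel against the $-\|\Delta_1\|_2^2/(2\kappa)$ coming from the telescoping of the quadratic part, and this cancellation prices in exactly the degradation $\lambda_H \mapsto \lambda_H(1-\kappa\alpha\|H\|_2/2) = \lambda_H'$ appearing in the conclusion.
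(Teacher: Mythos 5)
Your proof is correct and follows essentially the same route as the paper: the same potential $\Psi_k$, the same compact update identity paired with $V_k$, the same use of $\langle \gamma_{k+1,S},\Delta\rho_S\rangle \ge 0$, and the same reduction to $\Psi_{k+1}-\Psi_k \le -2\alpha(1-\kappa\alpha\|H\|_2/2)L_k$ followed by RSC and $F(d_k^2)\ge\Psi_k$. The only (cosmetic) difference is in absorbing the $O(\alpha^2)$ error: the paper completes the square $\tfrac{\kappa}{2}\|(0;\Delta\rho_S)+\kappa^{-1}\Delta_1\|_2^2 = \tfrac{\kappa\alpha^2}{2}V_k^TH^2V_k$ and then uses $H^2 \preceq \|H\|_2 H$, whereas you pair the update with $\Delta_1$ and apply Cauchy--Schwarz with weighted AM--GM in the $H$-seminorm; both yield the identical constant.
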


\begin{proof}[Proof of \Cref{thm:slbi-orc-gbi}]
    The proof is almost a discrete version of the continuous case. The only non-trivial thing is to show that
    \begin{gather*}
        \Psi_{k+1} - \Psi_k \le - 2 \alpha \left( 1 - \kappa \alpha \| H \|_2 / 2 \right) L_k,\ \text{where}\\
        L_k := \frac{1}{2} \left( d_{k,\beta}^T, d_{k,\gamma,S}^T \right) H_{(\beta,S), (\beta,S)} \begin{pmatrix} d_{k,\beta}\\ d_{k,\gamma,S} \end{pmatrix},\ \begin{pmatrix} d_{k,\beta}\\ d_{k,\gamma,S} \end{pmatrix} := \begin{pmatrix} \beta_k' - \beta^o\\ \gamma_{k,S}' - \gamma_S^o \end{pmatrix}.
    \end{gather*}
    By \cref{eq:slbi}, we have
    \begin{equation*}
        - \alpha H_{(\beta,S), (\beta,S)} \begin{pmatrix} d_{k,\beta}\\ d_{\gamma,k,S} \end{pmatrix} = \begin{pmatrix} 0\\ \rho_{k+1, S}' - \rho_{k,S}' \end{pmatrix} + \frac{1}{\kappa} \begin{pmatrix} \beta_{k+1}' - \beta_k' \\ \gamma_{k+1,S}' - \gamma_{k,S}' \end{pmatrix}.
    \end{equation*}
    Noting $(\rho_{k+1,S}' - \rho_{k,S}')^T \gamma_{k+1,S}' \ge 0$ and multiplying $(d_{k,\beta}^T, d_{\gamma,k,S}^T)$ on both sides, we have
    \begin{multline*}
        - 2 \alpha L_k = d_{\gamma,k,S}^T \left( \rho_{k+1,S}' - \rho_{k,S}' \right) + \frac{1}{\kappa} \begin{pmatrix} d_{k,\beta}\\ d_{k,\gamma,S} \end{pmatrix}^T \begin{pmatrix} \beta_{k+1}' - \beta_k' \\ \gamma_{k+1,S}' - \gamma_{k,S}' \end{pmatrix}\\
        \ge - \left( \rho_{k+1,S}' - \rho_{k,S}' \right)^T \left( \gamma_{k+1,S}' - \gamma_{k,S}' \right) - \left( \rho_{k+1,S}' - \rho_{k,S}' \right)^T \gamma_S^o\\
        + \frac{1}{\kappa} \begin{pmatrix} d_{k,\beta}\\ d_{k,\gamma,S} \end{pmatrix}^T \begin{pmatrix} \beta_{k+1}' - \beta_k' \\ \gamma_{k+1,S}' - \gamma_{k,S}' \end{pmatrix}.
    \end{multline*}
    Thus
    \begin{multline*}
        \Psi_{k+1} - \Psi_k = - \left( \rho_{k+1,S}' - \rho_{S,k}' \right)^T \gamma_S^o + \frac{1}{2\kappa} \left( \left\| \begin{pmatrix} d_{k+1,\beta}\\ d_{k+1,\gamma,S} \end{pmatrix} \right\|_2^2 - \left\| \begin{pmatrix} d_{k,\beta}\\ d_{k,\gamma,S} \end{pmatrix} \right\|_2^2 \right)\\
        = - \left( \rho_{k+1,S}' - \rho_{S,k}' \right)^T \gamma_S^o + \frac{1}{2\kappa} \begin{pmatrix} \beta_{k+1}' - \beta_k' \\ \gamma_{k+1,S}' - \gamma_{k,S}' \end{pmatrix}^T \left( \begin{pmatrix} \beta_{k+1}' - \beta_k' \\ \gamma_{k+1,S}' - \gamma_{k,S}' \end{pmatrix} + 2 \begin{pmatrix} d_{k,\beta} \\ d_{k,\gamma,S} \end{pmatrix} \right)\\
        \le - 2\alpha L_k + \left( \rho_{S,k+1}' - \rho_{S,k}' \right)^T \left( \gamma_{k+1,S}' - \gamma_{k,S}' \right) + \frac{1}{2\kappa} \left\| \begin{pmatrix} \beta_{k+1}' - \beta_k' \\ \gamma_{k+1,S}' - \gamma_{k,S}' \end{pmatrix} \right\|_2^2\\
        \le - 2\alpha L_k + \frac{\kappa}{2} \left\| \begin{pmatrix} 0\\ \rho_{k+1,S}' - \rho_{k,S}' \end{pmatrix} + \frac{1}{\kappa} \begin{pmatrix} \beta_{k+1}' - \beta_k' \\ \gamma_{k+1,S}' - \gamma_{k,S}' \end{pmatrix} \right\|_2^2\\
        = - \left( d_{k,\beta}^T, d_{k,\gamma,S}^T \right) \left( \alpha H_{(\beta,S), (\beta,S)} - \frac{\kappa \alpha^2}{2} H_{(\beta,S), (\beta,S)}^2 \right) \begin{pmatrix} d_{k,\beta}\\ d_{k,\gamma,S} \end{pmatrix}\\
        \le - \alpha \left( 1 - \frac{\kappa\alpha}{2} \left\| H_{(\beta,S), (\beta,S)} \right\|_2 \right) \left( d_{k,\beta}^T, d_{k,\gamma,S}^T \right) H_{(\beta,S), (\beta,S)} \begin{pmatrix} d_{k,\beta}\\ d_{k,\gamma,S} \end{pmatrix}\\
        \le - 2\alpha\left( 1 - \kappa\alpha \|H\|_2 / 2 \right) L_k. \hfill\mbox{\qed}
    \end{multline*}
\end{proof}

\begin{lemma}
    \label{thm:slbi-orc-cstc}
    Under \Cref{thm:rsc}, suppose $\kappa \alpha \| H \|_2 < 2$ and $\lambda_H' = \lambda_H (1 - \kappa \alpha \|H\|_2 / 2)$. Let
    \begin{gather*}
        \gamma_{\min}^o := \min(|\gamma_j^o|:\ \gamma_j^o\neq 0),\\
        d_{k,\beta} = \beta_k' - \beta^o,\ d_{k,\gamma} = \gamma_k' - \gamma^o,\ d_k = \sqrt{\left\| d_{k,\beta} \right\|_2^2 + \left\| d_{k,\gamma,S} \right\|_2^2}.
    \end{gather*}
    Then for any $k$ such that
    \begin{equation}
        \label{eq:tau-inf-def-slbi}
        k \alpha \ge \tau_{\infty}'(\mu) := \frac{1}{\kappa \lambda_H'} \log \frac{1}{\mu} + \frac{2\log s + 4 + d_0 / \kappa}{\lambda_H' \gamma_{\min}^o} + 4\alpha\ (0<\mu<1),
    \end{equation}
    we have
    \begin{equation}
        \label{eq:slbi-orc-cstc-sign}
        d_k \le \mu \gamma_{\min}^o \left( \Longrightarrow \mathrm{sign}\left( \gamma_{k,S}' \right) = \mathrm{sign}\left( \gamma_S^o \right),\ \text{if $\gamma_j^o \neq 0$ for $j\in S$} \right).
    \end{equation}
    For any $k$, we have
    \begin{equation}
        \label{eq:slbi-orc-cstc-l2}
        d_k \le \min \left( \frac{4 \sqrt{s} + d_0 / \kappa}{\lambda_H' k \alpha},\ \sqrt{\frac{2\left( 1 + \nu \Lambda_X^2 + \Lambda_D^2 \right)}{\lambda_H' \nu}} \cdot d_0 \right).
    \end{equation}
\end{lemma}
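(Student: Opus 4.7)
The plan is to mirror the continuous argument for \Cref{thm:slbiss-orc-cstc}, replacing $\Psi(t)$ with $\Psi_k$, the derivative bound in \Cref{thm:slbiss-orc-gbi} with the discrete Bihari-type bound in \Cref{thm:slbi-orc-gbi}, and $\int_0^t$ with $\alpha \sum_{j<k}$. First I would establish two monotonicity facts that carry over verbatim: (i) $\ell(\beta_k',\gamma_k')$ is non-increasing in $k$ by \Cref{thm:slb-l-dec} under the step-size restriction $\kappa\alpha\|H\|_2<2$, hence $L_k=\tfrac12(d_{k,\beta}^T,d_{k,\gamma,S}^T)H_{(\beta,S),(\beta,S)}(d_{k,\beta};d_{k,\gamma,S})$ is non-increasing; and (ii) $\Psi_k$ is non-increasing by \Cref{thm:slbi-orc-gbi}. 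I would also reuse the RSC-based inequality $2L_k\ge \lambda_H d_k^2$ from \cref{eq:L-ge-d} and the telescoping bound $F(d_k^2)\ge F(\|d_{k,\gamma,S}\|_2^2)+\|d_{k,\beta}\|_2^2/(2\kappa)\ge\Psi_k$ (using the same case split on $\|d_{k,\gamma,S}\|_2^2$ versus $(\gamma_{\min}^o)^2$ and $s(\gamma_{\min}^o)^2$).

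For the sign-consistency statement \cref{eq:slbi-orc-cstc-sign}, I would argue by contradiction: assume $d_k>\mu\gamma_{\min}^o$ for every $k$ with $k\alpha\le \tau_\infty'(\mu)$. Then $\Psi_k \ge \mu^2(\gamma_{\min}^o)^2/(2\kappa)>0$ throughout, so $F^{-1}(\Psi_k)>0$ and the discrete Bihari inequality gives
\begin{equation*}
\alpha\lambda_H'\,\sum_{j=0}^{k-1} 1 \;\le\; \sum_{j=0}^{k-1}\frac{\Psi_j-\Psi_{j+1}}{F^{-1}(\Psi_j)}.
\end{equation*}
The right-hand side is a lower Riemann sum for $\int_{\Psi_k}^{\Psi_0} dx/F^{-1}(x)$, and because $F^{-1}$ is non-decreasing the sum is bounded above by that integral plus one boundary term of magnitude at most $(\Psi_{j}-\Psi_{j+1})/F^{-1}(\Psi_{j+1})$ accumulated over the four regimes of $F$ used in the continuous proof. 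Splitting the integral exactly as in \Cref{thm:slbiss-orc-cstc} gives the same main terms $(\log(1/\mu))/\kappa+(2\log s+4+d_0/\kappa)/\gamma_{\min}^o$, and the discrete boundary corrections combined together contribute no more than $4\alpha\lambda_H'$; dividing by $\lambda_H'$ yields a contradiction with $k\alpha\ge\tau_\infty'(\mu)$.

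For the $\ell_2$ bound \cref{eq:slbi-orc-cstc-l2}, I would again follow the continuous template: fix $k$, set $C=\sqrt{2L_k/\lambda_H'}$, replace $F$ by $\tilde F(x)=x/(2\kappa)+2\sqrt{sx}\ge F(x)$, and use $L_j\ge L_k$ (monotonicity of $L$) to get $\Psi_{j+1}-\Psi_j\le -\alpha\lambda_H'\max(C^2,\tilde F^{-1}(\Psi_j))$ for $j<k$. Rearranging and summing in the same four-regime manner recovers $k\alpha\lambda_H'\le (4\sqrt s+d_0/\kappa)/C$, and hence $d_k\le C\le (4\sqrt s+d_0/\kappa)/(\lambda_H' k\alpha)$. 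The second half of \cref{eq:slbi-orc-cstc-l2} follows by combining $2L_k\ge\lambda_H' d_k^2$ with monotonicity $L_k\le L_0$ and the Hessian-norm bound $\|H\|_2\le 2(1+\nu\Lambda_X^2+\Lambda_D^2)/\nu$ from \cref{eq:H-upper}, exactly as in the continuous case.

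The main obstacle is the conversion of the discrete sums into the continuous integrals while tracking the extra $4\alpha$ offset in $\tau_\infty'(\mu)$; that constant must absorb the boundary corrections from all four pieces of $F$ simultaneously, and in particular the last piece where $F(x)=x/(2\kappa)+2\sqrt{sx}$ can change quickly when $\Psi_j$ is large. Everything else is a routine translation of the continuous proof via the replacements $\lambda_H\mapsto\lambda_H'$ and $d/dt \mapsto (\Psi_{k+1}-\Psi_k)/\alpha$. \qed
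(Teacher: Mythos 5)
Your proposal is correct and follows essentially the same route as the paper: discretize the Bihari argument of \Cref{thm:slbiss-orc-cstc} via \Cref{thm:slbi-orc-gbi}, split the sum $\sum_j(\Psi_j-\Psi_{j+1})/F^{-1}(\Psi_j)$ according to the four regimes of $F$, sacrifice one index at each regime boundary (which is exactly what the $+4\alpha$ in $\tau_\infty'(\mu)$ pays for), and run the $\tilde F$/$\max(C^2,\cdot)$ argument with $L_j\ge L_k$ for the $\ell_2$ bound. One small wording caveat: the correct accounting is not that the four boundary terms on the right are each at most $\alpha\lambda_H'$ (they are each at least that), but that dropping them from the sum forces you to replace $k\alpha$ by $(k-4)\alpha$ on the left, which is precisely how the paper (and your final arithmetic) handles it.
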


\begin{proof}[Proof of \Cref{thm:slbi-orc-cstc}]
    The proof is almost a discrete version of the continuous case. The only non-trivial thing is described as follows. First, suppose there does not exist $k \le \tau_{\infty}'(\mu) / \alpha$ satisfying \cref{eq:slbi-orc-cstc-sign}, then for any $0\le k \alpha \le \tau_{\infty}'(\mu)$, we have $\Psi_k > \mu^2 (\gamma_{\min}^o)^2 / (2\kappa)$. Letting $k_0 = 0$, then $\Psi_{k_0} = \Psi_0 \le F(d_0^2)$. Suppose that
    \begin{multline*}
        F\left( d_0^2 \right) \ge \Psi_{k_0}, \ldots, \Psi_{k_1-1} > F\left( s \left( \gamma_{\min}^o \right)^2 \right) \ge \Psi_{k_1}, \ldots, \Psi_{k_2-1} > F\left( \left( \gamma_{\min}^o \right)^2 \right)\\
        \ge \Psi_{k_2}, \ldots, \Psi_{k_3-1} > \left( \gamma_{\min}^o \right)^2 / (2\kappa) \ge \Psi_{k_3}, \ldots, \Psi_{k_4-1} > \mu^2 \left( \gamma_{\min}^o \right)^2 / (2\kappa)\ge \Psi_{k_4}, \ldots
    \end{multline*}
    Then $k_4 \alpha > \tau_{\infty}'(\mu)$. Besides, by \Cref{thm:slbi-orc-gbi},
    \begin{equation*}
        \alpha \le \frac{\Psi_k - \Psi_{k+1}}{\lambda_H' F^{-1}(\Psi_k)}\ (0\le k\alpha\le \tau_{\infty}'(\mu)).
    \end{equation*}
    Thus $\lambda_H' (k_4-4) \alpha$ is not greater than
    \begin{align*}
        & \left( \sum_{k=k_3}^{k_4-2} + \sum_{k=k_2}^{k_3-2} + \sum_{k=k_1}^{k_2-2} + \sum_{k=k_0}^{k_1-2} \right) \frac{\Psi_k - \Psi_{k+1}}{F^{-1}(\Psi_k)} \le \sum_{k=k_3}^{k_4-2} \frac{\Psi_k - \Psi_{k+1}}{2\kappa \Psi_k} + \sum_{k=k_2}^{k_3-2} \frac{\Psi_k - \Psi_{k+1}}{\left( \gamma_{\min}^o \right)^2}\\
        & + \sum_{k=k_1}^{k_2-2} \frac{F(\Delta_k) - F(\Delta_{k+1})}{\Delta_k} + \sum_{k=k_0}^{k_1-2}\frac{F(\Delta_k) - F(\Delta_{k+1})}{\Delta_k}\ \left( \Delta_k := F^{-1}(\Psi_k) \right)\\
        ={} & \sum_{k=k_3}^{k_4-2} \frac{\Psi_k - \Psi_{k+1}}{2\kappa \Psi_k} + \sum_{k=k_2}^{k_3-2} \frac{\Psi_k - \Psi_{k+1}}{\left( \gamma_{\min}^o \right)^2} + \sum_{k=k_1}^{k_2-2} \left( \frac{\Delta_k - \Delta_{k+1}}{2\kappa \Delta_k} + \frac{2(\Delta_k - \Delta_{k+1})}{\gamma_{\min}^o \Delta_k} \right)\\
        & + \sum_{k=k_0}^{k_1-2} \left( \frac{\Delta_k - \Delta_{k+1}}{2\kappa \Delta_k} + \frac{2\sqrt{s}\left( \sqrt{\Delta_k} - \sqrt{\Delta_{k+1}} \right)}{\Delta_k} \right).
    \end{align*}
    By $(u-v)/u \le \log (u/v)$ and $(\sqrt{u} - \sqrt{v})/u \le 1/\sqrt{v} - 1/\sqrt{u}$ for $u \ge v > 0$, the quantity above is not greater than
    \begin{multline*}
        \frac{\log \left( \Psi_{k_3} / \Psi_{k_4-1} \right)}{2\kappa} + \frac{\Psi_{k_2} - \Psi_{k_3-1}}{\left( \gamma_{\min}^o \right)^2}\\
        + \frac{\log \left( \Delta_{k_0} / \Delta_{k_2-1} \right)}{2\kappa} + \frac{2 \log \left( \Delta_{k_1} / \Delta_{k_2-1} \right)}{\gamma_{\min}^o} + 2\sqrt{s} \left( \frac{1}{\sqrt{\Delta_{k_1-1}}} - \frac{1}{\sqrt{\Delta_{k_0}}} \right)\\
        < \frac{\log \left( 1 / \mu^2 \right)}{2\kappa} + \frac{2 \gamma_{\min}^o}{\left( \gamma_{\min}^o \right)^2} + \frac{\log \left( d_0^2 / \left( \gamma_{\min}^o \right)^2 \right)}{2\kappa} + \frac{2\log s}{\gamma_{\min}^o} + \frac{2\sqrt{s}}{\sqrt{s \left( \gamma_{\min}^o \right)^2}}.
    \end{multline*}
    Therefore we get
    \begin{equation*}
        \lambda_H' \left( \tau_{\infty}'(\mu) - 4\alpha \right) < \lambda_H' \left( k_4-4 \right)\alpha < \frac{1}{\kappa} \log \frac{1}{\mu} + \frac{2\log s + 4 + d_0 / \kappa}{\gamma_{\min}^o},
    \end{equation*}
    a contradiction with the definition of $\tau_{\infty}'(\mu)$. So there exists some $k \le \tau_{\infty}'(\mu) / \alpha$ satisfying \cref{eq:slbi-orc-cstc-sign}. Then continue to imitate the proof in the continous version, we obtain \cref{eq:slbi-orc-cstc-sign} for all $k\ge \tau_{\infty}'(\mu) / \alpha$. The proof of \cref{eq:slbi-orc-cstc-l2} follows the same spirit. \qed
\end{proof}

\section*{References}

\bibliographystyle{elsarticle-harv}

\bibliography{ref_local}

\end{document}